\def\eqref#1{equation~\ref{#1}}
\def\1{\bm{1}}
\def\rvs{{\mathbf{s}}}
\def\rvx{{\mathbf{x}}}
\def\rvy{{\mathbf{y}}}
\def\rvz{{\mathbf{z}}}
\DeclareMathAlphabet{\mathsfit}{\encodingdefault}{\sfdefault}{m}{sl}
\SetMathAlphabet{\mathsfit}{bold}{\encodingdefault}{\sfdefault}{bx}{n}
\newcommand{\E}{\mathbb{E}}
\newcommand{\R}{\mathbb{R}}
\newcommand{\KL}{D_{\mathrm{KL}}}
\newcommand{\Var}{\mathrm{Var}}
\DeclareMathOperator*{\argmax}{arg\,max}
\crefname{thmdef}{definition}{definitions}
\crefname{lemma}{lemma}{lemmas}
\crefname{prop}{proposition}{propositions}
\Crefname{algocf}{Algorithm}{Algorithms}
\newtheorem{prop}{Proposition}
\newtheorem{appprop}{Proposition}
\newtheorem{lemma}{Lemma}
\newtheorem{applemma}{Lemma}
\newtheorem{thm}{Theorem}
\newtheorem{appthm}{Theorem}
\newlength\myindent
\newcommand{\revision}{\textcolor{black}}
\newcommand{\revisioniclr}{\textcolor{black}}
\title{Estimating Conditional Mutual Information for Dynamic Feature Selection}
\author{Soham Gadgil\thanks{Equal contribution. Correspondence to <\href{mailto:sgadgil@cs.washington.edu}{sgadgil@cs.washington.edu}>.}, $\:$ Ian Covert$^*$, $\:$ Su-In Lee \\
Paul G. Allen School of Computer Science \& Engineering, University of Washington
}
\begin{document}

\maketitle
% Increment footnote counter
\setcounter{footnote}{0}

% Abstract
\begin{abstract}
Dynamic feature selection, where we sequentially query features to make accurate predictions with a
% small number of features,
minimal budget,
is a promising paradigm to reduce feature acquisition costs and provide transparency into a model's predictions.
% the prediction process.
% Dynamic feature selection is a promising paradigm to reduce feature acquisition costs and provide transparency into the prediction process, as it involves sequentially querying features to enable accurate predictions with a minimal feature budget.
% For the dual purpose of reduced feature acquisition cost and transparency into the prediction process, dynamic feature selection is a promising paradigm.
% In this problem, we sequentially query features for each sample to enable accurate predictions with a minimal feature budget.
The problem is challenging, however, as it requires both predicting
% making predictions
with arbitrary feature sets and learning a policy to identify
% the most
valuable selections. Here, we take an information-theoretic perspective and prioritize features based on their mutual information with the response variable.
% build on a recent line of work that takes an information-theoretic perspective and selects features according to their mutual information.
% with the response variable.
The main challenge is implementing this 
% selection
policy,
% The selection policy is challenging to implement,
and we design a
% straightforward
new
% learning
% modeling
approach that estimates the
% each feature's
mutual information
% the mutual information associated with each feature
in a discriminative
rather than generative
fashion.
% ; unlike previous work, we focus on estimating the mutual information itself rather than directly estimating the argmax among candidate features.
Building on our approach,
% estimation method,
we then introduce several further improvements: allowing variable feature budgets across samples, enabling non-uniform feature costs,
% costs between features,
incorporating prior information, and exploring modern architectures to
% that better
handle partial inputs.
% information.
% In our experiments, we find
Our experiments show that
% We find empirically that
our method provides consistent gains over recent
% state-of-the-art
methods across a variety of datasets. 
% modalities.
\end{abstract}

% What to say?
% - dynamic feature selection is useful both for cost reduction and transparency into the prediction process
% - it's a challenging problem, requires learning both how to make predictions with different feature sets, and a policy for identifying the most informative selections
% - we build on a recent line of work that takes an information-theoretic approach, selecting features according to their conditional mutual information with the response variable
% - the main difficulty in this approach is learning the selection policy, and we design a new straightforward estimation method, which crucially estimates the mutual information itself rather than the argmax among candidate features
% - building on our new estimation approach we introduce several additional improvements, including allowing the method to utilize variable feature budgets between samples in a principled fashion, incorporate prior information, and utilize modern architectures that better handle partial input information

% Introduction
\section{Introduction}
Many machine learning applications rely on high-dimensional datasets with significant data acquisition costs. For example, medical diagnosis can depend on a range of demographic features, lab tests and physical examinations,
% For example, in the medical diagnosis setting, the range of demographic features, lab tests and physical examinations is large,
and each piece of information takes time and money to collect \citep{kachuee2018opportunistic, erion2022coai, he2022bsoda}. To improve interpretability and reduce data acquisition costs, a natural approach is to adaptively query features given the current information, so that each prediction relies on only a small number of features.
This approach is referred to as \textit{dynamic feature selection} (DFS),\footnote{
% We use the same terminology as \cite{covert2023learning}, but
Prior works have also referred to the problem as
% The problem has also been referred to
as \textit{sequential information maximization} \citep{chen2015sequential}, \textit{active variable selection} \citep{ma2019eddi} and \textit{information pursuit} \citep{chattopadhyay2023variational}.} and it is a promising paradigm
% that has been
considered by several works in recent years \citep{kachuee2018opportunistic, janisch2019classification, chattopadhyay2022interpretable, chattopadhyay2023variational, covert2023learning}.

Among the existing methods that address this problem, two main approaches have emerged. One idea is to formulate DFS as a Markov decision process (MDP) and use reinforcement learning (RL) \citep{dulac2011datum, mnih2014recurrent, kachuee2018opportunistic, janisch2019classification}. This approach theoretically
has the capacity to discover the optimal policy, but it faces training difficulties that are common in RL \citep{henderson2018deep}.
% can in principle
% discover the optimal policy. \revisioniclr{Methods like \citep{janisch2019classification} make decisions based on the misclassification cost and can get good predictions, but there are many factors that can affect performance and they face training difficulties that are common in RL \citep{henderson2018deep}}. 
Alternatively, another line of work focuses on greedy approaches, where features are selected based on their conditional mutual information (CMI) with the response variable \citep{chen2015sequential, ma2019eddi}. 
% While less flexible than RL, the greedy approach is near-optimal under certain assumptions about the data distribution \citep{chen2015sequential} and
% % can
% has been found to work better
% % work well
% % can work better
% % better
% in practice \citep{chattopadhyay2023variational, covert2023learning}.
% thought to represent an easier learning problem.
\revisioniclr{While less flexible than RL, the greedy approach is near-optimal under certain assumptions about the data distribution \citep{chen2015sequential} and represents a simpler learning problem; as a result, it has been found to perform better than RL in several recent works \citep{erion2022coai, chattopadhyay2023variational, covert2023learning}.}

Nevertheless, the greedy
% CMI
approach is non-trivial to implement, because calculating the CMI requires detailed knowledge of the data distribution.
% (see \Cref{sec:dfs}).
Many recent works have explored approximating the CMI using generative models \citep{ma2019eddi, rangrej2021probabilistic, chattopadhyay2022interpretable, he2022bsoda}, but these
% methods face a challenging modeling proble
are difficult to train 
% \citep{rangrej2021probabilistic, ma2020vaem, nazabal2020handling}
and lead to a slow CMI estimation process.
% \citep{he2022bsoda}.
Instead, two recent works introduced a simpler approach, which is to directly estimate the feature index with maximum CMI \citep{chattopadhyay2023variational, covert2023learning}. These methods rely on simpler learning objectives, are faster at inference time, and often provide better predictive accuracy.
% Following the classical distinction between generative and discriminative classifiers \citep{ng2001discriminative}, these
\revision{They can be thought of \textit{discriminative} alternatives to earlier generative methods,
because they directly output the required value rather than modeling the data generation
% a generative
process \citep{ng2001discriminative}.
%  \citep{ng2001discriminative},
%  \citep{ng2001discriminative, chattopadhyay2023variational},
Their main downside is that they bypass estimating the CMI, which can be useful for multiple purposes, such as determining when to stop selecting new features.}
% \revision{We refer to a method as generative when it relies on a generative model (i.e., one that generates samples from a distribution), and as discriminative if it directly outputs the required value.}

% In this work,
Here, our goal is to advance the greedy DFS approach by combining the best aspects of current methods: building on recent work \citep{chattopadhyay2023variational, covert2023learning}, we aim to \textit{estimate the CMI itself} in a discriminative fashion. We aim to do so without requiring additional labels, making strong assumptions about the data distribution, or fitting generative models.
% bypassing the challenges inherent to generative models.
We accomplish this
% find that this is possible
by designing a suitable learning objective, which we prove
% leads to a simple training approach that
recovers the CMI if our model is trained to optimality (\Cref{sec:method}).

Based on our new learning approach, we then explore a range of capabilities enabled by accurately estimating the CMI: these include \revision{allowing variable per-sample feature budgets}, accounting for non-uniform feature costs, 
% trading off feature cost and information in multiple ways,
% formulating multiple tradeoffs between feature cost and information,
and leveraging modern architectures to train models with partial input information.
In real-world settings, acquisition costs are not identical for all features, and we can use multiple low-cost features in lieu of one high-cost feature to get the same diagnostic performance.
\revisioniclr{Although these capabilities are supported by certain RL methods \citep{kachuee2018opportunistic, janisch2019classification}, they are not supported by prior discriminative approaches that do not estimate the CMI.}
% \revision{In real-world settings, acquisition costs are not identical for all features (e.g., a genetic test takes longer than obtaining demographic features), and we can use multiple low-cost features in lieu of one high-cost feature to get the same diagnostic performance. Furthermore, allowing variable per-prediction budgets enables more efficient diagnosis for certain samples (e.g., it may be immediately clear that a patient with a neck injury needs intubation). Since prior discriminative approaches do not estimate the CMI itself \citep{covert2023learning}, they do not enable these capabilities.}

We find that our proposal
% proposed approach
offers a promising alternative to available methods:
% recent discriminative methods:
% it enables many advantages of generative methods while offering
% relying on
% using benefiting from
% a simpler learning process, and it offers better predictive accuracy in our experiments.
it enables the capabilities
% advantages 
of generative methods while retaining the simplicity of discriminative methods, and it shows improved performance in our experiments.
% and which provides many benefits unique to generative methods while bypassing their challenges.
The contributions of this work are the following:

\begin{enumerate}[leftmargin=0.6cm]
    \item We develop a learning approach to estimate the CMI in a discriminative fashion. Our method involves training a network to score candidate features based on their predictive utility, and we prove that training with our objective recovers the exact CMI at optimality.

    \item We generalize our approach to incorporate prior information beyond the main features. Here, we again prove that our procedure recovers a modified version of the CMI at optimality.

    \item Taking inspiration from adaptive submodular optimization, we show how to adapt our CMI-based approach to scenarios with non-uniform feature costs.

    \item We analyze the role of variable feature budgets and how they enable an improved cost-accuracy tradeoff. We show that a single instantiation of our method can be evaluated with multiple stopping criteria, and that a policy with
    a flexible per-prediction budget
    % no per-prediction budget constraints
    % should perform
    performs best.
    % an approach with no per-prediction budget constraints is 
    % our theoretical view of the best approach is supported by empirical results.

    \item We investigate the role of modern architectures in improving performance in DFS. In particular, we find that for image data, our method
    % and a recent state-of-the-art approach \citep{covert2023learning} benefit 
    benefits from using
    % significantly
    % from using attention-based architectures
    ViTs
    % Vision Transformers (ViTs)
    %  \citep{dosovitskiy2020image}
    rather than standard CNNs.
\end{enumerate}

% \revision{To the best of our knowledge, our approach provides a unique way for estimating the CMI which has not been explored in prior work}.
% \revisioniclr{Although we are not the first to incorporate non-uniform costs or variable feature budgets \citep{janisch2019classification}, our approach is unique in enabling these additions to the greedy approaches.}
Our experiments demonstrate the effectiveness of our approach across a range of applications, including several tabular and image datasets. We compare our approach to many recent methods,
% recent state-of-the-art methods,
and we find that our approach provides
% significant
% improvements
% better performance
consistent gains
across all
% the
the datasets we tested. 
% datasets considered.

% Problem formulation
\section{Problem formulation}
% In this section,
% Here, we introduce notation used throughout the paper and describe our problem formulation.

% \subsection{Notation}

\textbf{Notation.}
Let $\rvx$ denote a vector of input features and $\rvy$ a response variable for a supervised learning task. The input consists of $d$ separate features $\rvx = (\rvx_1, \ldots, \rvx_d)$, and we use
% the notation
$S \subseteq [d] \equiv \{1, \ldots, d\}$ to denote a subset of indices and $\rvx_S = \{\rvx_i : i \in S\}$ a subset of features. Bold symbols $\rvx, \rvy$ represent random variables, the symbols $x, y$ are possible values, and $p(\rvx, \rvy)$ denotes the data distribution.

% The goal of DFS is
% Our goal is to select features on a per-instance basis
% % for each instance
% given the current information to
% % currently available information, and
% rapidly arrive at accurate predictions.
Our goal is to select features given the currently available information, and do so on a per-instance basis to rapidly arrive at accurate predictions.
In doing so, we require a predictor $f(\rvx_S)$ that makes predictions given any set of available features; for example, if $\rvy$ is discrete then our predictions lie in the
% probability
simplex, or $f(\rvx_S) \in \Delta^{K - 1}$ for $K$ classes. We also require a selection policy $\pi(\rvx_S) \in [d]$, which takes a set of features as its input and outputs the next feature index to observe.
% which can be viewed as a function $\pi(\rvx_S) \in [d]$ that takes a set of features as its input and outputs the next feature index to observe.
% We can reason about the behaviors we hope to reproduce with these models, as we discuss next, but the question in practice is how to learn them given only a dataset of labeled input-output observed pairs $(\rvx, \rvy)$.
We next discuss how to design these models, focusing on an approach
% several approaches to design these models, including one
motivated by information theory.
% We next discuss possible approaches to design these models, and \Cref{sec:method} considers the practical question of how to learn them given only a dataset of labeled input-output observed pairs $(\rvx, \rvy)$.
% The selection policy can be viewed as a function $\pi(\rvx_S) \in [d]$ that takes a set of features as its input and outputs the next feature index to observe. We also have a predictor $f(\rvx_S)$ that can make predictions given the set of available features; for example, if $\rvy$ is discrete then its predictions lie in the probability simplex, or $f(\rvx_S) \in \Delta^{K - 1}$ for $K$ classes.
% We use $f(x_S \cup x_i)$ to represent predictions given the combined features.
% We initially consider policy and predictor functions that operate on feature subsets, and \Cref{sec:method} proposes an implementation using a mask variable $m \in [0, 1]^d$ where the functions operate on $x \odot m$.

% \subsection{Dynamic feature selection} \label{sec:dfs}

\textbf{Dynamic feature selection.}
The goal of DFS is to select features separately for each prediction, and achieve both low acquisition cost and high predictive accuracy.
% The goal of DFS is to select features with low acquisition cost that achieve maximum predictive accuracy.
Previous work has explored several approaches to design a selection policy,
% including training the policy with RL, imitation learning, and following a greedy policy based on CMI (see \Cref{sec:prior} for related work).
including training the policy with RL \citep{kachuee2018opportunistic, janisch2019classification}, imitation learning \citep{he2012cost, he2016active}, and following a greedy policy based on CMI \citep{ma2019eddi, covert2023learning}.
We focus here on the latter approach, where the idealized policy selects the feature with maximum CMI at each step, or where we define $\pi^*(x_S) = \argmax_i I(\rvy; \rvx_i \mid x_S)$. Intuitively, the CMI represents how much observing $\rvx_i$ improves our ability to predict $\rvy$ when a subset of the features $x_S$ is already observed, and it is defined as the following KL divergence \citep{cover2012elements}:
\begin{equation}
    I(\rvy; \rvx_i \mid x_S) = \KL\left( p(\rvx_i, \rvy \mid x_S) \; || \; p(\rvx_i \mid x_S) p(\rvy \mid x_S) \right). \label{eq:cmi}
    % I(\rvy; \rvx_i \mid x_S) = \E_{\rvx_i \mid x_S} \left[ \KL\left( p(\rvy \mid \rvx_i, x_S) \; || \; p(\rvy \mid x_S) \right) \right]. \label{eq:cmi}
\end{equation}
% Note that the CMI is equal to the following KL divergence:
% % \citep{cover2012elements}:
% \begin{equation}
%     I(\rvy; \rvx_i \mid x_S) = \KL\left( p(\rvx_i, \rvy \mid x_S) \; || \; p(\rvx_i \mid x_S) p(\rvy \mid x_S) \right). \label{eq:cmi}
%     % I(\rvy; \rvx_i \mid x_S) = \E_{\rvx_i \mid x_S} \left[ \KL\left( p(\rvy \mid \rvx_i, x_S) \; || \; p(\rvy \mid x_S) \right) \right]. \label{eq:cmi}
% \end{equation}
% \revision{Intuitively, the CMI represents how much observing $\rvx_i$ improves our ability to predict $\rvy$ when a subset of the features $x_S$ is already observed.} This approach
% % therefore identifies
% % % provides a principled way to identify
% % the most informative feature at each step,
% % % while accounting for the current information;
% % which
% is consistent with
% % performing
% greedy uncertainty minimization \citep{covert2023learning} and is also
% % the same idea is also
% the basis of Bayesian experimental design \citep{lindley1956measure, bernardo1979expected}.
% % and it can be easier to model in practice than RL alternatives \citep{chattopadhyay2023variational, covert2023learning}.
% The idealized selection policy is accompanied by an idealized predictor, which is the Bayes classifier $f^*(x_S) = p(\rvy \mid x_S)$ for classification problems.
The idealized selection policy is accompanied by an idealized predictor, which is the Bayes classifier $f^*(x_S) = p(\rvy \mid x_S)$ for classification problems, and under certain assumptions
% about the data distribution
it is known that this provides performance within a
multiplicative
factor of the optimal
% possible
% selection
policy \citep{chen2015sequential}.
% Under certain assumptions about the data distribution,
% it is known that selecting
% % greedily selecting
% % following this greedy policy, where we iteratively select
% features with
% % according to
% $\pi^*(x_S)$ provides performance within a multiplicative factor of the optimal
% % possible
% % selection
% policy \citep{chen2015sequential}.
However, the CMI policy is difficult to implement because it requires oracle access to the data distribution: computing \cref{eq:cmi} requires
% knowledge of
both the
% response and feature
% conditional
distributions $p(\rvy \mid x_S)$ and $p(\rvx_i \mid x_S)$ for all $(S, i)$, which presents a challenging modeling problem. 
% Previous
For example, some works have approximated $I(\rvy; \rvx_i \mid x_S)$
% \cref{eq:cmi}
% addressed this
using generative
% modeling approaches
models \citep{ma2019eddi, rangrej2021probabilistic, he2022bsoda}, while others have directly modeled $\pi^*(x_S)$ \citep{chattopadhyay2023variational, covert2023learning}.
% , but we explore a different approach in \Cref{sec:method}.
%  \citep{ma2019eddi, rangrej2021probabilistic}.
% , so we build on a recent line of work developing practical
% % and effective
% approximations \citep{chattopadhyay2023variational, covert2023learning}.

When we follow the greedy CMI policy, another question that arises is
% the question remains of
how many features to select for each prediction. Previous work has focused mainly on the fixed-budget setting, where we stop
% given $x_S$
when $|S| = k$ for a specified budget $k < d$ \citep{ma2019eddi, rangrej2021probabilistic, chattopadhyay2023variational, covert2023learning}. We instead consider variable budgets in this work \citep{kachuee2018opportunistic, janisch2019classification},
% focus here on variable budgets,
where the goal is to achieve high accuracy given a low \textit{average feature cost}.
% across predictions.
Unlike many works, we also consider distinct, or non-uniform costs for each feature. As we discuss in \Cref{sec:method}, these goals are made easier by 
% our method for
estimating
% by estimating
% by our approach that estimates
the CMI in a discriminative fashion.

% Related work
\section{Related work}
One of the earliest works on DFS is \citet{geman1996active}, who used the CMI
as a selection criterion
but made simplifying assumptions about the data distribution. \citet{chen2015sequential} analyzed the greedy CMI approach theoretically and proved conditions where it achieves near-optimal performance.
More recent works have focused on practical implementations: among them, several use generative models to approximate the CMI \citep{ma2019eddi, rangrej2021probabilistic, chattopadhyay2022interpretable, he2022bsoda}, and two works proposed predicting the best feature index in a discriminative fashion \citep{chattopadhyay2023variational, covert2023learning}. Our work develops a similar discriminative approach, but to estimate the CMI itself rather than the argmax among candidate features.
% which bypasses the need for training complex generative models.

Apart from these CMI-based methods, other works have addressed DFS as an RL problem \citep{dulac2011datum, shim2018joint, kachuee2018opportunistic, janisch2019classification, janisch2020classification, li2021active}. For example, \citet{janisch2019classification} formulate DFS as a MDP where the reward is the 0-1 loss minus the feature cost, \revision{while considering both variable budgets and non-uniform feature costs}. RL theoretically has the capacity to discover better policies than a greedy approach, but it has not been found to perform well in practice,  seemingly due to training difficulties that are common in RL \citep{erion2022coai, chattopadhyay2023variational, covert2023learning}. Beyond these approaches, other works have instead explored imitation learning
% , where selections are made
by mimicking an oracle policy \citep{he2012cost, he2016active}, \revisioniclr{minimizing expected misclassification costs at each step \citep{liyanage2021dynamic2}, and making selections based on a Bayesian network fit to the joint data distribution \citep{liyanage2021dynamic}.}
% \revisioniclr{Some works have formulated the problem as a POMDP to perform instance-wise joint feature selection and classification. For example, \cite{liyanage2021dynamic} proposes an optimization framework by considering features to be conditionally
% independent given the class variable. \cite{liyanage2021dynamic2} accounts for this by using a bayesian network to model feature dependencies.}

Static feature selection has been an important subject in statistics and machine learning for decades; see \citep{guyon2003introduction, li2017feature, cai2018feature} for reviews. \revisioniclr{CMI
% Statistical CMI estimation is also at
is also the basis of some static methods,
and its estimation has been studied by many works 
\citep{fleuret2004fast, peng2005feature, shishkin2016efficient}}. Greedy methods perform well under certain assumptions about the data distribution and are popular for simple models \citep{das2011submodular, elenberg2018restricted}, but feature selection with nonlinear models
% neural networks
is more challenging. For neural networks, methods now exist that leverage either group sparse penalties \citep{feng2017sparse, tank2021neural, lemhadri2021lassonet} or differentiable gating mechanisms \citep{chang2017dropout, balin2019concrete, lindenbaum2021differentiable}. Unlike recent DFS methods \citep{chattopadhyay2023variational, covert2023learning}, our work bypasses these techniques by using a simpler regression objective.

\revision{Finally, mutual information estimation with deep learning has been an active topic in recent years \citep{oord2018representation, belghazi2018mutual, poole2019variational, song2019understanding, shalev2022neural}. Unlike prior works, ours focuses on the CMI between features and a response variable, our method can condition on arbitrary feature sets, and we estimate many CMI terms via a single predictive model rather than training separate networks to estimate each mutual information term. 
% Furthermore, while other methods train a model to estimate a single mutual information term, we accomplish this process via a predictive model that estimates the CMI in a single forward pass.
}

% Method
\section{Proposed method}
\label{sec:method}
In this section, 
% We now
we introduce our method to dynamically select features by estimating the CMI in a discriminative fashion.
% , unlike recent works that require generative models \citep{ma2019eddi, rangrej2021probabilistic}.
We then discuss how to incorporate prior information into the selection process, handle non-uniform feature costs, and enable variable budgets across predictions.

% In this section, we introduce our method for dynamically selecting features and allowing variable budgets across predictions. We develop an approach that estimates the CMI in a discriminative fashion, unlike recent works that require generative models \citep{ma2019eddi, rangrej2021probabilistic}.
% is discriminative rather than generative, similar to recent works \citep{chattopadhyay2023variational, covert2023learning}, but which involves estimating the CMI rather than its argmax.

\subsection{Estimating the conditional mutual information} \label{sec:estimation}

We parameterize two networks to implement our selection policy.
% based on CMI.
First, we have a predictor network $f(\rvx_S; \theta)$, e.g., a classifier with predictions in $\Delta^{K - 1}$.
% (e.g., a transformer-based classifier).
Next, we have a value network $v(\rvx_S; \phi) \in \R^d$ designed to estimate the CMI for each feature, or $v_i(x_S ; \phi) \approx I(\rvy ; \rvx_i \mid x_S)$. These are implemented with zero-masking to represent missing features, and we can also pass the mask as a binary indicator vector.
% to indicate missing values.
Once they are trained, we make selections according to $\argmax_i v_i(\rvx_S; \phi)$, and we can make predictions at any time using $f(\rvx_S; \theta)$. The
% crucial
question is how to train the networks, given that prior works required generative models to estimate the CMI \citep{ma2019eddi, chattopadhyay2022interpretable}. 

Our main insight is to train the models jointly but with their own objectives, and to design an objective for the value network that recovers the CMI at optimality. Specifically, we formulate a regression problem whose target is the incremental improvement in the loss when incorporating a single new feature. We train the predictor to make accurate predictions given any feature set, or
\begin{equation}
    \min_\theta \; \E_{\rvx\rvy} \E_{\rvs} \left[ \ell\left( f(\rvx_\rvs; \theta), \rvy \right) \right],
    \label{eq:obj-predictor}
\end{equation}
and we simultaneously train the value network with the following regression objective,
\begin{equation}
    \min_\phi \; \E_{\rvx\rvy} \E_{\rvs} \E_i \left[ \left( v_i(\rvx_\rvs; \phi) - \Delta(\rvx_\rvs, \rvx_i, \rvy) \right)^2 \right],
    \label{eq:obj-value}
\end{equation}
where we define the loss improvement as $\Delta(x_S, x_i, y) = \ell(f(x_S; \theta), y) - \ell(f(x_{S \cup i}; \theta), y)$. The regression objective in \cref{eq:obj-value} is motivated by the following property, which shows that
% an accurate predictor leads to value network labels that are unbiased
if we assume an accurate predictor $f(\rvx_S ; \theta)$ (i.e., the Bayes classifier), the value network's labels are unbiased
% effectively noisy
estimates of the CMI (proofs are in \Cref{app:proofs}).

\vspace{4pt}
\begin{lemma} \label{lemma:unbiased}
    When we use the Bayes classifier $p(\rvy \mid \rvx_S)$ as a predictor and $\ell$ is cross entropy loss, the incremental loss improvement is an unbiased estimator of the CMI for each $(x_S, \rvx_i)$ pair:
    \begin{equation}
        \E_{\rvy, \rvx_i \mid x_S} \left[ \Delta(x_S, \rvx_i, \rvy) \right] = I(\rvy ; \rvx_i \mid x_S).
    \end{equation}
\end{lemma}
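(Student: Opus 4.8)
The plan is to expand both sides of the claimed identity using the definition of cross-entropy loss for the Bayes classifier and show that the expectation over $(\rvy, \rvx_i \mid x_S)$ of the loss improvement collapses exactly onto the KL-divergence form of the CMI given in \cref{eq:cmi}. First I would write out $\Delta(x_S, \rvx_i, \rvy)$ explicitly: with $f^*(x_S) = p(\rvy \mid x_S)$ and $\ell$ the cross-entropy loss, we have $\ell(f^*(x_S), y) = -\log p(y \mid x_S)$ and $\ell(f^*(x_{S \cup i}), y) = -\log p(y \mid x_S, x_i)$, so that
\begin{equation}
    \Delta(x_S, x_i, y) = -\log p(y \mid x_S) + \log p(y \mid x_S, x_i) = \log \frac{p(y \mid x_S, x_i)}{p(y \mid x_S)}.
    \label{eq:delta-expanded}
\end{equation}
The key observation is that the ratio inside the logarithm can be rewritten, via Bayes' rule, as $\frac{p(\rvx_i, \rvy \mid x_S)}{p(\rvx_i \mid x_S)\, p(\rvy \mid x_S)}$, which is precisely the Radon--Nikodym-type density appearing in the KL divergence in \cref{eq:cmi}.

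Next I would take the expectation $\E_{\rvy, \rvx_i \mid x_S}[\cdot]$ of \cref{eq:delta-expanded}. Since this expectation is taken under the joint conditional law $p(\rvx_i, \rvy \mid x_S)$, and the integrand is exactly $\log\!\big(p(\rvx_i, \rvy \mid x_S) / (p(\rvx_i \mid x_S) p(\rvy \mid x_S))\big)$ after the Bayes-rule rewrite, the result is by definition the KL divergence $\KL\big(p(\rvx_i, \rvy \mid x_S) \,\|\, p(\rvx_i \mid x_S) p(\rvy \mid x_S)\big)$, which equals $I(\rvy; \rvx_i \mid x_S)$. This is a short chain of substitutions once the Bayes-rule step is in place.

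The main thing to be careful about — rather than a genuine obstacle — is the Bayes-rule manipulation of the density ratio and making sure the expectation is over the correct measure: one must verify that $\frac{p(y \mid x_S, x_i)}{p(y \mid x_S)} = \frac{p(x_i, y \mid x_S)}{p(x_i \mid x_S)p(y \mid x_S)}$, which follows immediately by multiplying numerator and denominator by $p(x_i \mid x_S)$ and using $p(x_i, y \mid x_S) = p(y \mid x_S, x_i)\, p(x_i \mid x_S)$. I would also note that the identity holds pointwise for each fixed $x_S$ (the conditioning set is not random here), so no further averaging over $\rvx_S$ or over the sampling distribution of $S$ is needed for this lemma; those outer expectations only matter when connecting to the training objectives \cref{eq:obj-predictor,eq:obj-value}. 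Finally, I would remark that the argument uses the specific structure of cross-entropy loss (so that the loss differences telescope into a log-ratio) and the optimality of the predictor (so that the conditional log-likelihoods are exactly $\log p(y \mid x_S)$ and $\log p(y \mid x_S, x_i)$); neither assumption can be dropped without changing the conclusion.
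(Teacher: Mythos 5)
Your proof is correct, and it reaches the result by a slightly different route than the paper. You telescope the two cross-entropy terms into the single log-ratio $\log\bigl(p(y \mid x_S, x_i)/p(y \mid x_S)\bigr)$, apply Bayes' rule to identify this with the log density ratio $\log\bigl(p(x_i, y \mid x_S)/(p(x_i \mid x_S)p(y \mid x_S))\bigr)$, and then recognize the conditional expectation as the KL divergence in \cref{eq:cmi}. The paper instead keeps the two loss terms separate, shows $\E_{\rvy \mid x_S}[\ell(p(\rvy \mid x_S), \rvy)] = H(\rvy \mid x_S)$ and $\E_{\rvy, \rvx_i \mid x_S}[\ell(p(\rvy \mid x_S, \rvx_i), \rvy)] = H(\rvy \mid x_S, \rvx_i)$, and concludes via the entropy-difference identity $I(\rvy; \rvx_i \mid x_S) = H(\rvy \mid x_S) - H(\rvy \mid x_S, \rvx_i)$. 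The two arguments are equivalent in substance and length; yours lands directly on the KL form that the paper states as the definition of CMI, while the paper's version makes the ``expected loss equals conditional entropy'' interpretation explicit, which it then reuses in the proofs of the theorems and in the discussion of confidence-based stopping. Your closing remarks --- that the identity is pointwise in $x_S$, and that both the cross-entropy structure and predictor optimality are essential --- are accurate and consistent with the paper's subsequent suboptimality analysis.
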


Based on this result and the fact that the optimal predictor does not depend on the selection policy \citep{covert2023learning}, we can make the following claim about jointly training the two models. We assume that both models are infinitely expressive (e.g., very wide networks) so that they can achieve their respective optimizers.

\vspace{4pt}
\begin{thm} \label{thm:optimizers}
    When
    % $\rvy$ is discrete and
    $\ell$ is cross entropy loss, the objectives \cref{eq:obj-predictor} and \cref{eq:obj-value} are jointly optimized by a
    % the Bayes
    predictor $f(x_S ; \theta^*) = p(\rvy \mid x_S)$ and value network where $v_i(x_S; \phi^*) = I(\rvy; \rvx_i \mid x_S)$ for $i \in [d]$.
\end{thm}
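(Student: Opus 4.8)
The plan is to exploit the fact that the two objectives are nearly decoupled: objective \cref{eq:obj-predictor} does not involve $\phi$ at all, and objective \cref{eq:obj-value} involves $\theta$ only through the regression targets $\Delta$. So I would first characterize the optimal predictor, then freeze it and characterize the optimal value network, and finally verify that the resulting pair is a genuine joint optimizer.

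First I would analyze \cref{eq:obj-predictor}. Rewriting the expectation as $\E_{\rvs}\,\E_{\rvx\rvy}\left[\ell(f(\rvx_\rvs;\theta),\rvy)\right]$ and using that cross entropy is a strictly proper scoring rule, for each fixed subset $S$ the conditional risk $\E_{\rvx\rvy}\left[\ell(f(\rvx_S;\theta),\rvy)\mid \rvx_S = x_S\right]$ is minimized over the value $f(x_S)$ by the conditional law $p(\rvy\mid x_S)$. Since the predictor is assumed infinitely expressive, it can realize this function simultaneously for every $S$, so $f(x_S;\theta^*) = p(\rvy\mid x_S)$ minimizes \cref{eq:obj-predictor}; crucially this holds for any $\phi$ and, as noted by \citet{covert2023learning}, independently of the distribution used to sample $\rvs$.

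Next, with the predictor fixed at $\theta^*$, the regression target becomes $\Delta(x_S,\rvx_i,\rvy) = \ell(p(\rvy\mid x_S),\rvy) - \ell(p(\rvy\mid x_{S\cup i}),\rvy)$, and \cref{eq:obj-value} is an ordinary squared-error regression of $\Delta$ onto $(\rvx_\rvs, i)$. For each fixed pair $(S,i)$ and each value $x_S$, the map $v_i(x_S)\mapsto \E_{\rvy,\rvx_i\mid x_S}\left[\left(v_i(x_S) - \Delta(x_S,\rvx_i,\rvy)\right)^2\right]$ is minimized by the conditional mean $\E_{\rvy,\rvx_i\mid x_S}\left[\Delta(x_S,\rvx_i,\rvy)\right]$, which by \cref{lemma:unbiased} equals $I(\rvy;\rvx_i\mid x_S)$ (the degenerate case $i\in S$ gives $\Delta\equiv 0$ and CMI $0$, so consistency still holds). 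Infinite expressivity of the value network then yields $v_i(x_S;\phi^*) = I(\rvy;\rvx_i\mid x_S)$ for all $i\in[d]$ as the minimizer of \cref{eq:obj-value} given $\theta = \theta^*$.

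Finally I would conclude by observing that $(\theta^*,\phi^*)$ jointly optimizes the pair: $\theta^*$ minimizes \cref{eq:obj-predictor} for every $\phi$ (that objective is independent of $\phi$), and $\phi^*$ minimizes \cref{eq:obj-value} at $\theta = \theta^*$, so neither objective can be improved by changing either parameter. The main point to be careful about — the likely obstacle — is the pointwise-minimization step: it requires that the sampling distributions over $\rvs$ and $i$ put positive mass on every relevant $(S,i)$ (otherwise the optimizer is only pinned down where they do) and that the pointwise minimizers are measurable, hence representable by an idealized network. Strict propriety of cross entropy and strict convexity of the squared loss make these minimizers essentially unique up to null sets, which is what licenses speaking of ``the'' optimizers in the statement.
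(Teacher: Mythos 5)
Your proposal is correct and follows essentially the same route as the paper's proof: the paper likewise first shows the optimal predictor is $p(\rvy \mid x_S)$ via the cross-entropy/KL decomposition (your strict-propriety argument is the same fact), then fixes that predictor and shows the squared-error objective is minimized pointwise by the conditional mean of $\Delta$, which equals the CMI by \cref{lemma:unbiased}. Your added remarks on the decoupling of the two objectives, positive mass on each $(S,i)$, and uniqueness up to null sets are sensible technical caveats the paper leaves implicit, but they do not change the argument.
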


This allows us to train the models in an end-to-end fashion
% jointly 
using stochastic gradient descent, as depicted in \Cref{fig:concept}.
% see \Cref{fig:concept} for a diagram.
In \Cref{app:proofs} we prove a similar result for regression problems:
% , which is
that the policy estimates the reduction
% improvement
in conditional variance associated with each candidate feature.
Additional analysis in \Cref{app:suboptimality} shows how suboptimality in the classifier can affect the learned CMI estimates; however, even if the learned estimates $v_i(\rvx_S; \phi)$ are imperfect in practice, we expect good performance because the policy replicates selections that improve the loss
% yield large loss improvements
during training. 
% Ultimately, even if the learned estimates $v_i(\rvx_S; \phi)$ are imperfect,
% % in practice,
% we expect good performance because the policy replicates selections that yield large loss improvements during training. However, additional analysis in \Cref{app:suboptimality} shows how suboptimality in the classifier can affect the learned CMI estimates.

Several other steps are important during training,
% in practice,
and these are detailed in \Cref{app:pseudocode} \revisioniclr{along with pseudo-code for the training algorithm and the inference procedure}.
% pseudocode for our training approach (\Cref{app:pseudocode}).
First, like several prior methods, we pre-train the predictor with random feature sets before beginning joint training \citep{rangrej2021probabilistic, chattopadhyay2023variational, covert2023learning}.
Next, we generate training samples $(x_S, x_i, y)$
% when training
by executing the current policy with a random exploration probability $\epsilon \in [0, 1]$, which can be decayed throughout training.
Finally, we sometimes
% it often helps to
share parameters between the models, particularly when they are large networks; this helps
% we found that this helps
% this to help
% we employ this approach
in our experiments with image data, which use either CNNs or ViTs \citep{dosovitskiy2020image}.

% \textcolor{blue}{Ian: mention similar result in regression case, which we leave for appendix}

\begin{figure}[t!]
\centering
\vspace{-0.5cm}
\includegraphics[page=13, trim={2.5cm 4.0cm 2.5cm 1.2cm}, clip, width=0.99\textwidth]{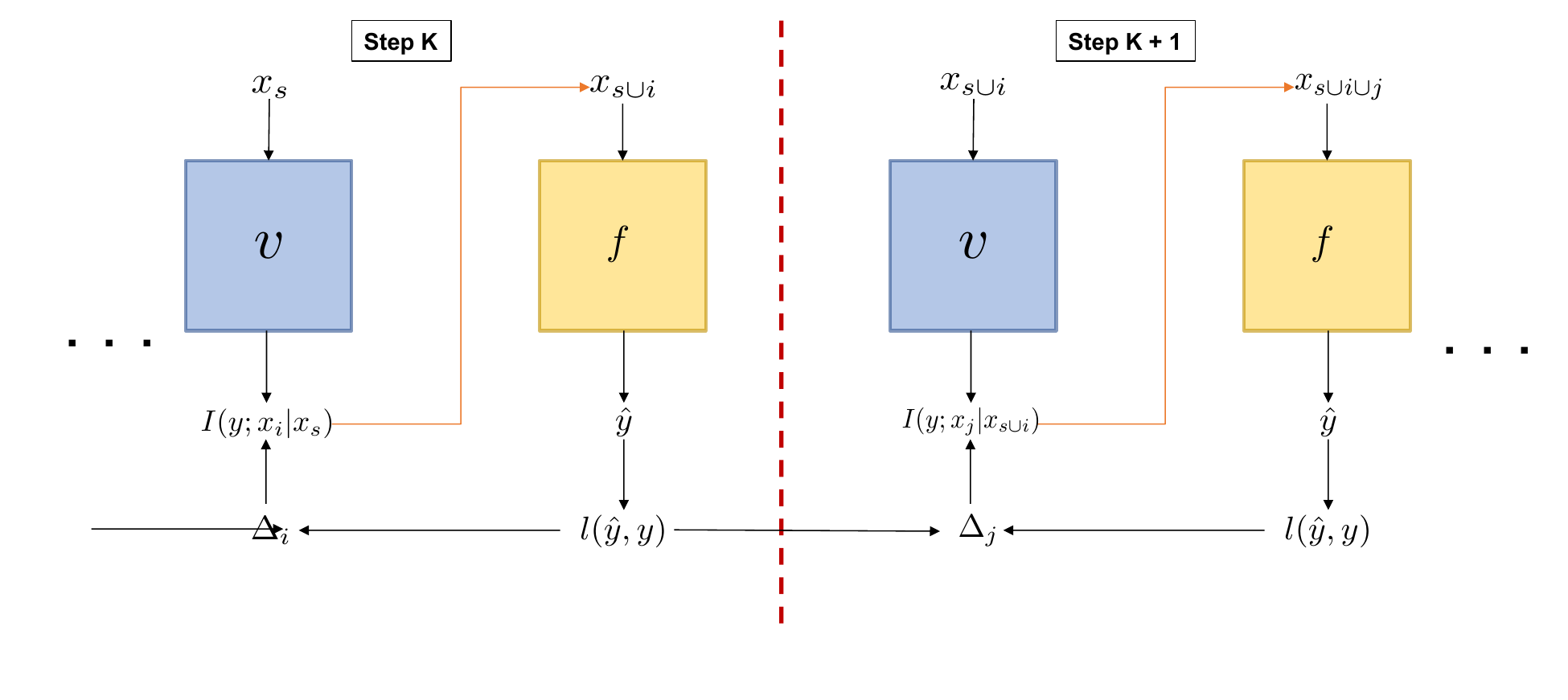}
% \vspace{-0.2cm}
\caption{
Diagram of our training approach. At each selection step $n$, the value network $v(x_S ; \phi)$ predicts the CMI for all features,
% $i \in [d]$,
and a single feature $x_i$ is chosen for the next prediction $f(x_{S \cup i} ; \theta)$. The prediction loss is used to update the predictor (see \cref{eq:obj-predictor}), and the loss improvement is used to update the value network (see \cref{eq:obj-value}). The networks are trained jointly with SGD.
} \label{fig:concept}
\vspace{-0.5cm}

\end{figure}

\subsection{Incorporating prior information} \label{sec:prior}

A further direction
% not considered in prior work
is utilizing \textit{prior information} obtained before beginning the selection process.
% performing DFS.
We view such prior information as a separate random variable $\rvz$, and it can be either an exogenous input, a subset of features that are
% always observed
available with no associated cost, or even a noisy or low-resolution version of the main input $\rvx$ \citep{ranzato2014learning, ba2014multiple, ba2015learning}. Situations
% like this
of this form
arise in multiple applications, e.g., a patient's demographic features
% with demographic features about patients
in a medical diagnosis setting.

Given such prior information, our idealized DFS policy must be modified as follows. First, the selections should be based on $I(\rvy; \rvx_i \mid x_S, z)$, which captures how informative $\rvx_i$ is given knowledge of both $\rvx_S$ and $\rvz$. Next, the predictions made at any time are given by $p(\rvy \mid x_S, z)$, because $\rvz$ provides information that can improve our predictions. As for our proposed CMI estimation procedure from \Cref{sec:estimation}, it is straightforward
% surprisingly simple
to modify. The two models must take the prior information $\rvz$ as an additional input, and
% an analogous version of \Cref{thm:optimizers} then holds when
we can train them with modified versions of \cref{eq:obj-predictor,eq:obj-value},
\begin{equation}
    \min_\theta \; \E_{\rvx\rvy\rvz} \E_{\rvs} \left[ \ell\left( f(\rvx_\rvs, \rvz; \theta), y \right) \right],
    \quad\quad
    \min_\phi \; \E_{\rvx\rvy\rvz} \E_{\rvs} \E_i \left[ \left( v_i(\rvx_\rvs, \rvz; \phi) - \Delta(\rvx_\rvs, \rvx_i, \rvz, \rvy) \right)^2 \right], \label{eq:obj-prior}
\end{equation}
where
% we define
% the incremental loss improvement as
$\Delta(x_S, x_i, z, y) = \ell(f(x_S, z, ; \theta), y) - \ell(f(x_{S \cup i}, z; \theta), y)$
is the incremental loss improvement.
We then have the following result for
% modified result when
jointly training the two models.

\vspace{4pt}
\begin{thm} \label{thm:optimizers-prior}
    When
    % $\rvy$ is discrete and
    $\ell$ is cross entropy loss, the objectives in \cref{eq:obj-prior} are jointly optimized by a
    % the Bayes
    predictor $f(x_S, z ; \theta^*) = p(\rvy \mid x_S, z)$ and value network where $v_i(x_S, z; \phi^*) = I(\rvy; \rvx_i \mid x_S, z)$ for all $i \in [d]$.
\end{thm}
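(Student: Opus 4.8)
The plan is to mirror the proof of \Cref{thm:optimizers}, treating the pair $(\rvx_S, \rvz)$ as an augmented conditioning set throughout. The argument decouples into two stages: first pin down the optimal predictor independently of the value network, and then plug it into the value objective and invoke a version of \Cref{lemma:unbiased} in which the conditioning event is the combined $(x_S, z)$ rather than $x_S$ alone.

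First I would argue that the predictor objective in \cref{eq:obj-prior} is minimized pointwise. For any fixed $(x_S, z)$, the inner expectation $\E_{\rvy \mid x_S, z}\!\left[\ell(f(x_S, z; \theta), \rvy)\right]$ with $\ell$ the cross-entropy loss is a strictly proper scoring rule, hence uniquely minimized over $\Delta^{K-1}$ by the true conditional $p(\rvy \mid x_S, z)$. Since the outer expectations over $\rvx\rvy\rvz$ and over the random feature set $\rvs$ (assumed to have full support over subsets of $[d]$) are nonnegative mixtures of these pointwise problems, and the pointwise minimizer does not depend on the mixture weights, an infinitely expressive $f$ attains the global optimum exactly when $f(x_S, z; \theta^*) = p(\rvy \mid x_S, z)$ for (almost) every $(x_S, z)$. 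Crucially this optimum does not involve $\phi$, and the value network's regression target depends on $\theta$ only through the predictor's outputs, so the two objectives in \cref{eq:obj-prior} can be optimized independently once we have fixed $f$ to the Bayes classifier.

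Next I would minimize the value objective in \cref{eq:obj-prior} with the predictor fixed. This is again a pointwise squared-error problem: for each $(x_S, z, i)$, $\E_{\rvy, \rvx_i \mid x_S, z}\!\left[(v_i(x_S, z; \phi) - \Delta(x_S, \rvx_i, z, \rvy))^2\right]$ is minimized by the conditional mean of the target, i.e. $v_i(x_S, z; \phi^*) = \E_{\rvy, \rvx_i \mid x_S, z}\!\left[\Delta(x_S, \rvx_i, z, \rvy)\right]$, and an infinitely expressive $v$ attains this for all $i \in [d]$ simultaneously since $\E_i$ ranges over all feature indices. It then remains to evaluate this conditional mean. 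Since $\Delta(x_S, \rvx_i, z, \rvy) = \ell(f(x_S, z; \theta^*), \rvy) - \ell(f(x_{S \cup i}, z; \theta^*), \rvy)$ is exactly the incremental cross-entropy loss of the Bayes classifier on the augmented input, I would apply \Cref{lemma:unbiased} with the conditioning event $x_S$ replaced by $(x_S, z)$; the same telescoping computation then gives $\E_{\rvy, \rvx_i \mid x_S, z}\!\left[\Delta(x_S, \rvx_i, z, \rvy)\right] = I(\rvy; \rvx_i \mid x_S, z)$. Combining the two stages yields the claimed joint optimizer.

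The one step I would be most careful about is checking that \Cref{lemma:unbiased} really goes through verbatim with $(x_S, z)$ in place of $x_S$. Its proof uses only that the predictor equals the true conditional distribution given whatever is conditioned on, and that $\ell$ is cross entropy; it never exploits any special structure of $x_S$ (such as it being a subset of the coordinates of $\rvx$). So one just needs to confirm that the relevant conditionals are $p(\rvy \mid x_{S\cup i}, z)$ and $p(\rvy \mid x_S, z)$ and that $\rvx_i$ is integrated against $p(\rvx_i \mid x_S, z)$ — precisely what $\E_{\rvy, \rvx_i \mid x_S, z}$ prescribes — so no independence or noiselessness assumption on $\rvz$ is required. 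Everything else (properness of cross entropy, pointwise optimization under full-support sampling of $\rvs$ and $i$, and decoupling of the two objectives) is identical to the proof of \Cref{thm:optimizers}.
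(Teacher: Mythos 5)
Your proposal is correct and follows essentially the same route as the paper: the paper likewise first states and proves the $(x_S,z)$-conditioned analogue of \Cref{lemma:unbiased} (its Lemma~3), then repeats the two-stage pointwise argument of \Cref{thm:optimizers} — cross-entropy minimized by the true conditional $p(\rvy \mid x_S, z)$ via the entropy-plus-KL decomposition, and squared error minimized by the conditional mean of $\Delta$. Your explicit check that the lemma uses nothing special about $x_S$ beyond its role as a conditioning event is exactly the content of the paper's Lemma~3, so there is no gap.
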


The same implementation details discussed in \Cref{sec:estimation} apply here, and
% as before, we expect strong performance even if the CMI estimates $v_i(\rvx_S, \rvz; \phi)$ do not reach their optimal values in practice.
% , with one additional caveat that we demonstrate in practice. When $\rvz$ is high-dimensional, training the predictor repeatedly 
\revisioniclr{\Cref{sec:img_datasets} discusses how the prior information $\rvz$ is incorporated into the value and predictor networks in practice.}
% \textcolor{blue}{Ian: should we mention the stop gradients trick here? And should we have mentioned the sigmoid trick in the previous section? Alternatively, we can bring these up in the experiments, where we can show results to justify them.}

\subsection{Allowing a variable feature budget} \label{sec:budget}

% Discuss how we incorporate non-uniform costs $c_i > 0$ for each $\rvx_i$: take inspiration from work on adaptive submodular optimization, even though our problem is not adaptive submodular. Discuss how we can design multiple selection policies based on a single scoring rule but multiple stopping criteria, and our expectation for which achieves the best average cost/accuracy tradeoff. Present informal argument with intuitive proof: given a specific average cost $C$, the fixed-budget policy for $k = C$ is only one candidate for the optimal policy, so in the best case it can only match the best variable-budget policy with average cost $\leq C$. Present formal version in appendix, because it requires more notation.

% \textcolor{blue}{Ian: pick up here}

Given our approach for estimating each feature's CMI,
% the CMI associated with each feature,
% the next
a natural question is how to trade off information
% this off
with feature acquisition costs. We now consider two challenges related to
% this question:
features costs:
(1)~how to handle non-uniform costs between features,
% situations where feature costs are not all equal,
% with non-uniform costs between features,
and (2)~when
% how to determine when
to stop collecting new features.

% When accounting for feature costs, most recent works have made a simplifying assumption that all features share the same cost. However, this is not always the case: for example, different medical tests can require different amounts of time or money \citep{kachuee2018opportunistic}. 

\textbf{Non-uniform costs.}
For the first challenge, consider medical diagnosis as a motivating example.
% a motivating example in medical diagnosis.
Diagnoses
% are
can be
informed by heterogeneous data, including demographic variables, questionnaires, physical
% exams
examinations,
% cognitive exams
% \textcolor{blue}{patient questions}
and lab tests; each measurement requires
% can require
a different amount of time or money
\citep{kachuee2018opportunistic, erion2022coai}, and feature costs must be balanced with the information they provide.
% For simplicity,
We consider here that each feature
% $\rvx_i$
has a cost $c_i > 0$ and that costs are
% the total cost is
additive.
% across
% the selected
% features.
% and each feature's cost must be balanced with the information it provides.

There are multiple ways to trade off cost with information, but we take inspiration from
% the literature on
adaptive submodular optimization, where costs are accounted for via the ratio between the expected improvement and cost. Here, this suggests that our selections should be
% given by
% at each step should be chosen according to
$\argmax_i I(\rvy; \rvx_i \mid x_S) / c_i$.
For adaptive submodular objectives,
% similar to the uniform cost case ($c_i = 1$),
this criterion guarantees near-optimal performance \citep{golovin2011adaptive}; the DFS problem is known to \textit{not} be adaptive submodular \citep{chen2015sequential}, which means that we cannot offer performance guarantees,
% we cannot guarantee near-optimal performance, 
but we find that this approach works well in practice.
% (see \Cref{sec:experiments}).

\textbf{Variable budgets.}
Next, we consider
% the question of
when to stop acquiring new features. Many previous works focused on the budget-constrained setting, where we adopt a budget $k$ for all predictions \citep{chen2015sequential, ma2019eddi, rangrej2021probabilistic, covert2023learning}. This can be viewed as a stopping criterion, and it generalizes to non-uniform costs: we can keep collecting new information as long as $\sum_{i \in S} c_i \leq k$. Alternatively, we can adopt
% one recent work considered
a confidence-constrained setup \citep{chattopadhyay2023variational}, where selection terminates once the predictions have low uncertainty.
% are sufficiently confident;
% the model is sufficiently confident in its predictions.
% in the classification case,
For classification problems, a natural approach
% one implementation
is to stop collecting features when $H(\rvy \mid x_S) \leq m$.

In general, it is unclear whether we should follow a budget- or confidence-constrained approach, or whether another option
% there is another option that
offers a better cost-accuracy tradeoff. We resolve this
% question
by considering the
% best
optimal
performance achievable by \textit{non-greedy policies}, and we present the following insight: that policies with per-prediction
% budget or confidence
constraints are Pareto-dominated by those that satisfy their constraints \textit{on average}.
We state this claim here in a simplified form,
% The following proposition states this in a simplified form,
and we defer the formal version to \Cref{app:proofs}.
% due to the extra
% setup and
% notation it requires.

% However, it is unclear if one approach is consistently better than the other, or whether there is an alternative. We show that there is
% It is unclear whether

\vspace{4pt}
\begin{prop} \label{prop:informal}
    (Informal) For any feature budget $k$, the best policy to achieve this budget on average achieves lower loss than the best policy with a per-prediction budget constraint. Similarly, for any confidence level $m$, the best policy to achieve this confidence on average achieves lower cost than the best policy with a per-prediction confidence constraint.
    % (Informal) For any feature budget $k$ or confidence level $m$, the best policy to achieve these levels on average dominates the best policies with per-prediction constraints.
    % the best policies with either a per-prediction budget constraint or confidence constraint are dominated the best policy that achieves these levels on average.
\end{prop}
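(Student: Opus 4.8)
The plan is to set up the comparison between per-prediction-constrained policies and average-constrained policies as a constrained optimization over the space of \emph{all} (possibly randomized, possibly non-greedy) policies, and then argue that the feasible set for the average constraint strictly contains the feasible set for the per-prediction constraint. Concretely, I would fix a budget $k$ and let $\Pi_k^{\mathrm{hard}}$ denote the set of policies $\pi$ such that the realized cost $\sum_{i \in S_\pi(x)} c_i \le k$ for every input $x$ (or almost every $x$ under $p(\rvx)$), and let $\Pi_k^{\mathrm{avg}}$ denote the set of policies with $\E_{\rvx}[\sum_{i \in S_\pi(\rvx)} c_i] \le k$. The objective in both cases is the expected prediction loss $\E_{\rvx\rvy}[\ell(f^*(\rvx_{S_\pi}), \rvy)]$ using the Bayes-optimal predictor for whatever feature set the policy terminates with. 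The key observation is simply that $\Pi_k^{\mathrm{hard}} \subseteq \Pi_k^{\mathrm{avg}}$: a policy that never exceeds budget $k$ pointwise certainly does not exceed it in expectation. Hence the minimum of the same objective over the larger set $\Pi_k^{\mathrm{avg}}$ is no larger than the minimum over $\Pi_k^{\mathrm{hard}}$, giving the loss inequality. The confidence-constrained statement is the dual/symmetric version: fix a target entropy level $m$, let the objective be expected cost $\E_{\rvx}[\sum_{i\in S_\pi(\rvx)} c_i]$, and constrain either $H(\rvy \mid \rvx_{S_\pi(x)}) \le m$ pointwise versus $\E_{\rvx}[H(\rvy \mid \rvx_{S_\pi(\rvx)})] \le m$ on average; the same containment of feasible sets yields that the average-constrained optimum has cost no larger than the pointwise-constrained optimum.

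The steps I would carry out, in order: (1) State the formal model of a policy — a mapping from observed feature subsets to either "stop" or a next index, inducing for each $x$ a terminal set $S_\pi(x)$ and terminal cost $c(S_\pi(x)) = \sum_{i \in S_\pi(x)} c_i$; allow randomization so the class is convex and the minima are attained (or work with infima to sidestep existence). (2) Define the two optimization problems precisely for the budget case and note $\Pi_k^{\mathrm{hard}} \subseteq \Pi_k^{\mathrm{avg}}$, so $\min_{\Pi_k^{\mathrm{avg}}} \le \min_{\Pi_k^{\mathrm{hard}}}$. (3) Optionally sharpen "lower" to "strictly lower" by exhibiting a regime where the containment is strict and improves the objective — e.g.\ a distribution where some inputs are "easy" (need few features to reach the Bayes loss) and some are "hard", so the average-constrained policy can shift budget from easy to hard instances and strictly reduce expected loss; I would present this as a remark or a small example rather than claim strictness in general. (4) Repeat steps (2)--(3) with the roles of objective and constraint swapped to get the confidence-constrained statement, using that $x \mapsto H(\rvy \mid \rvx_{S} = \rvx_S)$ is a well-defined functional of the terminal set and that expectation is monotone. (5) Connect back to the informal statement in the excerpt by noting this is exactly "per-prediction constraints are Pareto-dominated by average constraints."

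The main obstacle is not the inequality itself — which is essentially the trivial "optimizing over a larger feasible set can only help" — but rather making the \emph{setup} rigorous and honest: specifying the policy class broadly enough that the claim is meaningful (it must include non-greedy policies, as the proposition explicitly invokes "optimal performance achievable by non-greedy policies"), ensuring the predictor is held to the Bayes-optimal standard consistently across both problems so the comparison is apples-to-apples, and handling measurability/attainment so that "the best policy" is well-defined (or rephrasing in terms of infima). A secondary subtlety is whether to claim strict domination: in fully degenerate cases (e.g.\ all inputs identical in difficulty, or $k$ large enough that the unconstrained optimum is already feasible pointwise) the two optima coincide, so the clean general statement is "$\le$" with strictness holding under a mild non-degeneracy condition; I would be careful to state the formal proposition in \Cref{app:proofs} with that caveat rather than overclaiming. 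Everything else — monotonicity of expectation, the fact from \citet{covert2023learning} that the optimal predictor is policy-independent, and the entropy functional being well-defined — is routine and can be cited or stated without extended calculation.
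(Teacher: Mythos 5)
Your proposal is correct and matches the paper's own argument: the formal version in \Cref{app:proofs} defines the per-prediction-constrained classes $\Pi_k$, $\Pi_m$ and the average-constrained feasible sets, observes the containments $\Pi_k \subseteq \{\pi \in \Pi : c(\pi) \leq k\}$ and $\Pi_m \subseteq \{\pi \in \Pi : \ell(\pi) \leq m\}$, and concludes by minimizing over the larger set. Your additional remarks on attainment, randomization, and when the inequality is strict are sensible refinements but do not change the essential argument, which the paper also states as a non-strict inequality.
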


Intuitively, when designing a policy to achieve a given average cost or confidence level, it should help to let the policy violate that level for certain predictions. For example, for a patient whose medical condition is inherently uncertain and will not be resolved by any number of tests, it is preferable from a cost-accuracy perspective to stop early rather than run many expensive tests. \Cref{prop:informal} suggests that we should avoid adopting budget or confidence constraints and instead seek the optimal unconstrained policy, but because we assume that we only have CMI estimates (\Cref{sec:estimation}), we opt for a simple alternative: we adopt a penalty parameter $\lambda > 0$, we make selections at each step according to $I(\rvy; \rvx_i \mid x_S)/c_i$, and we terminate the algorithm when $\max_i I(\rvy; \rvx_i \mid x_S) / c_i < \lambda$.

Following this approach, we see that a single instantiation of our model can be run with three different stopping criteria: we can use the budget $k$, the confidence $m$, or the penalty $\lambda$. In contrast, prior methods that penalize feature costs required training separately for each penalty strength
% with each $\lambda$ value 
\citep{janisch2019classification}. Although our penalized
% stopping
criterion is not the optimal one alluded to in \Cref{prop:informal}, we find empirically that it consistently improves performance relative to a fixed budget.
% (see \Cref{sec:experiments}).

% Experiments
\section{Experiments}
\label{sec:experiments}
% We now present results from applying our method to several datasets.

We refer to our approach as \textit{DIME}\footnote{Code is available at \url{https://github.com/suinleelab/DIME}}
% \footnote{Code can be accessed at \url{https://github.com/suinleelab/DIME}}
(\textbf{di}scriminative \textbf{m}utual information \textbf{e}stimation) and explore two data modalities
% , image and tabular,
to evaluate its performance. Our tabular datasets include two medical diagnosis tasks, which represent natural and valuable use cases for DFS; we also use MNIST, which was considered in prior works \citep{ma2019eddi, chattopadhyay2022interpretable, chattopadhyay2023variational}. As for our image datasets, we include these because they are studied in several earlier works and represent challenging DFS problems \citep{karayev2012timely, mnih2014neural, janisch2019classification, rangrej2021probabilistic}.
% \revision{We show results for DIME mainly with the penalized stopping criterion, and additional results are in \Cref{app:results}.}
% \revision{We mainly show results for DIME when evaluated with the penalized stopping criterion, and additional results with budget and confidence constraints are included in \Cref{app:results}.}
% We show results for our method when evaluated with either the budget-constrained or penalized stopping criteria, and we defer comparison with the confidence-constrained approach to \Cref{app:results}. We note again that DIME only needs be trained once to be tested with all three stopping criteria.
% \revision{Our experiments focus mainly on the prediction accuracy at a given feature budget, but for completeness we also design an experiment to test the CMI estimation accuracy in \Cref{app:results}}. 
\revision{We also design an experiment to test the CMI estimation accuracy in \Cref{app:results}.}
% For visual simplicity, here we show results for DIME with the penalized stopping criteria (our best performing method) and defer the comparison with other stopping criteria to \Cref{app:results}}
% \revision{We omit any simple synthetic datasets where we can easily verify DIME's CMI estimates,
% opting instead to focus on realistic data with higher complexity,
% but we design an experiment to test the CMI estimation accuracy in \Cref{app:results}.}

% \revision{TODO: try adding a small paragraph addressing additional results?}

In terms of baselines, we compare DIME
% We compare our method
to both static and dynamic feature selection methods.
% \revisioniclr{(see \Cref{app:baselines} for details)}.
As a strong static baseline, we compare to a supervised
% version of the
Concrete Autoencoder (\textit{CAE}) \citep{balin2019concrete},
% a state-of-the-art static method that
which outperformed several dynamic methods in recent work \citep{covert2023learning}. \revisioniclr{We also compare to two more static baselines that statistically estimate the CMI for feature selection, \textit{mRMR} \citep{peng2005feature} and \textit{CMICOT} \citep{shishkin2016efficient}}.
For dynamic baselines, we consider multiple approaches:
first, we compare to the recent discriminative methods that directly predict the CMI's argmax \citep{chattopadhyay2023variational, covert2023learning}, which we refer to as \textit{Argmax Direct}.
% We compare first to the greedy policy (DFS) which uses a learning approach based on amortized optimization to directly select the next feature conditioned on the information available.
Next, we consider \textit{EDDI} \citep{ma2019eddi}, a generative approach that uses a partial variational autoencoder (PVAE) to sample unknown features.
% Finally,
Because EDDI does not scale as well beyond tabular datasets, we compare to probabilistic hard attention (\textit{Hard Attention}) \citep{rangrej2021probabilistic}, a method that adapts EDDI to work with image data.
% select a sequence of image subregions.
\revision{Finally, we also compare the tabular datasets to two RL approaches: classification with costly features (\textit{CwCF}) \citep{janisch2020classification} and opportunistic learning (\textit{OL}) \citep{kachuee2018opportunistic}, which both design rewards based on features' predictive utility.
}
\Cref{app:baselines} provides more information about the baseline methods.

% The first one, Classification with Costly Features (\textit{CwCF}) \cite{janisch2020classification}, converts the DFS problem into an equivalent Markov Decision Process (MDP) and solves it using deep reinforcement learning. It also analyzes average and hard variations of a directly specified budget. The second approach, Opportunistic Learning (\textit{OL}) \cite{kachuee2018opportunistic}, formulates the problem in a RL setting by introducing a reward function based on the utility of each feature. Monte Carlo (MC) dropout sampling is used as an estimate of prediction uncertainty.

% We omit comparisons to several RL approaches \citep{mnih2014neural, kachuee2018opportunistic, janisch2019classification} because these are harder to train have not proved to be competitive with our other baselines \citep{rangrej2021probabilistic, chattopadhyay2023variational, covert2023learning}.
% and were found by recent work to be uncompetitive with these methods.
% For the tabular datasets, we also compare to another dynamic baseline, EDDI \citep{ma2019eddi}, which is a generative model using a partial varitional autoencoder (PVAE) to sample unknown features. For the image datasets, we additionally compare to Probabilistic Hard Attention \citep{rangrej2021probabilistic}, a visual hard attention model that gradually senses an image through a series of partial observations and then selects a sequence of subregions in the image to make a prediction.

\subsection{Tabular datasets}

% We first apply our method to three tabular datasets, two of which are medical diagnosis applications. The first 
Our first medical diagnosis task
involves predicting whether a patient requires endotracheal intubation for respiratory support
% (Intubation)
in an emergency medicine setting ($d = 112$) \citep{covert2023learning}. The second uses cognitive, demographic, and medical history data from two longitudinal aging cohort studies (ROSMAP, \citealt{a2012overview, a2012overview2})
% the Religious Orders Study and Rush Memory and Aging Project (ROSMAP) \citep{a2012overview, a2012overview2}, two longitudinal aging cohort studies,
to predict imminent dementia onset
% (i.e., a positive diagnosis within the next three years)
($d = 46$). In both scenarios, it is difficult to acquire 
all features
% the complete feature set
due to time constraints,
making DFS a promising paradigm.
% making them promising use cases for DFS.
% thereby showcasing the utility of DFS.
% our CMI estimation approach.
The third is the standard MNIST
% digit classification
dataset \citep{lecun1998gradient}, which we formulate as a tabular problem by treating each pixel as a
% an individual
feature ($d = 784$).
% with $d = 784$.
Across all methods,
% and for all three datasets,
we use fully connected networks with dropout to reduce overfitting \citep{srivastava2014dropout}, and the classification performance is measured using AUROC for the medical tasks and top-1 accuracy for MNIST.
\Cref{app:datasets} provides more details about the datasets, and \Cref{app:models} provides more information about the models.

\begin{figure}[t]
\centering
\vspace{-0.5cm}
\includegraphics[width=1\textwidth]{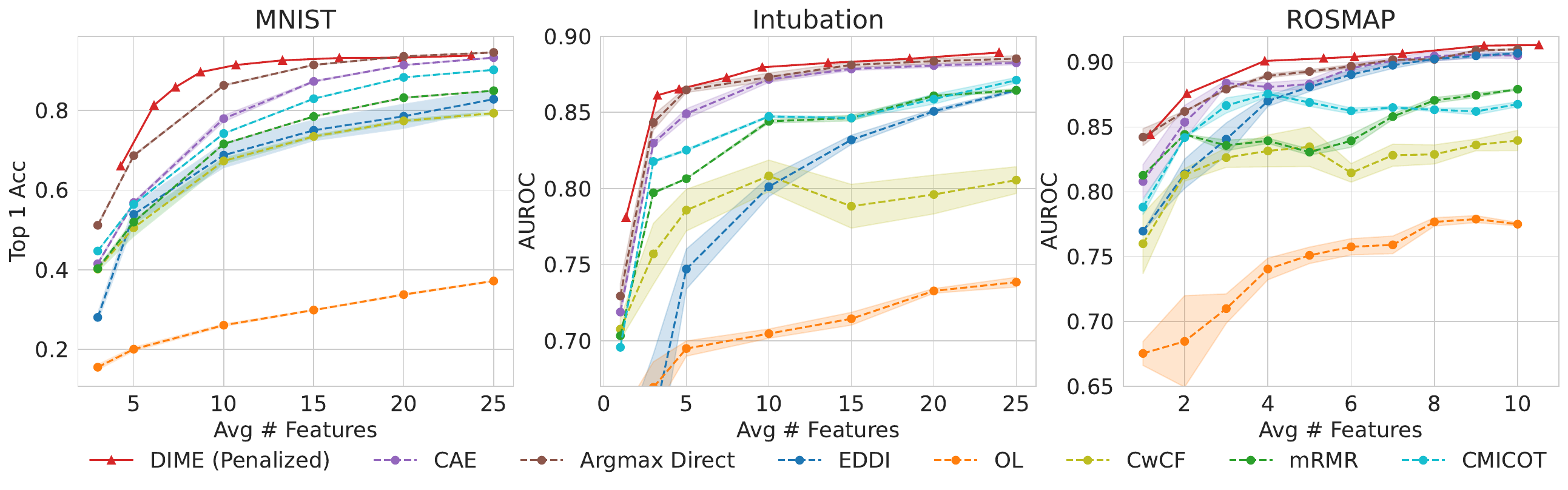}
\vspace{-0.6cm}
\caption{
Evaluation with tabular datasets for varying feature acquisition budgets.
% number of average features selected. Left: MNIST. Center: Intubation. Right: ROSMAP. 
Results are averaged across 5 trials, and shaded regions indicate the standard error for each method.
} \label{fig:tabular}
\vspace{-0.5cm}
\end{figure}

\textbf{Uniform feature costs.} We first consider the scenario with equal costs for all features, and \Cref{fig:tabular} shows results with a range of feature budgets.
% of applying each method for varying numbers of features.
% For DIME, we show results when it uses the exact number of features (the budget-constrained approach) and when it uses each number of features \textit{on average} (the penalized approach). DIME with the penalized stopping criterion achieves the best results for nearly all budgets across all three tasks \revision{when compared to the baselines}.
DIME with the penalized stopping criterion achieves the best results among all methods for both medical diagnosis tasks.
It also performs the best on MNIST, where we
% are able to
achieve above $90\%$ accuracy with an average of $\sim\!\!10/784$ features ($1.27\%$).
% , although several methods are competitive on the intubation dataset. 
Among the baselines, Argmax Direct is the strongest dynamic method,
% : its performance is nearly identical to DIME with a budget constraint, \revision{which is expected since it also operates on a budget-based stopping criteria}, reflecting the strong similarity between the methods.
\revisioniclr{CAE is a competitive static method that outperforms both CMICOT and mRMR}, and EDDI usually does not perform well.
% except for ROSMAP where it approaches the other methods as more features are selected.
\revision{The RL approaches (CwCF, OL) are not as competitive, as expected from results in prior work \citep{rangrej2021probabilistic, chattopadhyay2023variational, covert2023learning}}. DIME generally shows the greatest advantage for moderate numbers of features, and the gap reduces as the performance saturates. \revision{The variability between trials for the baselines is typically low, as shown by our uncertainty estimates; for DIME, we show results from five independent trials in \Cref{app:results} because it is difficult to ensure identical budgets with separate training runs.
% because it is hard to ensure equal budgets when training with different random seeds, we verify this by showing results from five independent trials in \Cref{app:results}.
}

\begin{figure}[ht]
\centering
\vspace{-2pt}
\includegraphics[width=1\textwidth]{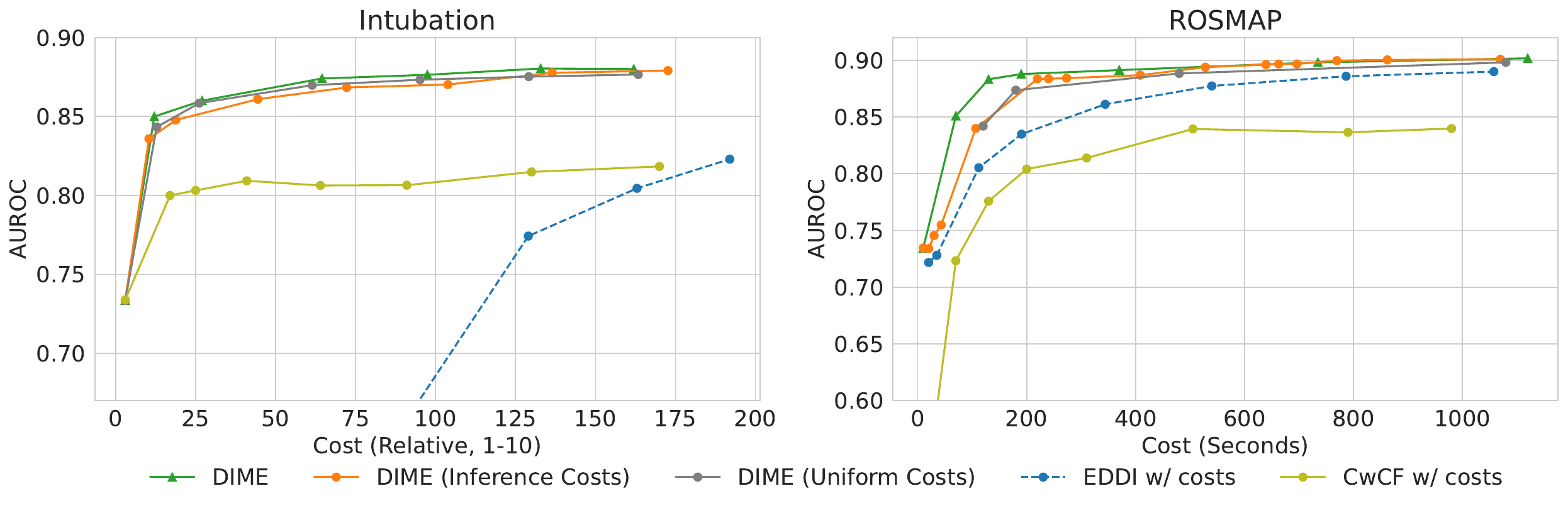}
\vspace{-0.6cm}
\caption{
Evaluation with non-uniform feature costs for medical diagnosis tasks. Costs are relative for Intubation and expressed in seconds for ROSMAP. The results show the classification performance for varying levels of average feature acquisition cost.
% with costs expressed as time required for feature acquisition. The x-axis shows the total cost accrued as more features are selected. Left: Intubation. Right: ROSMAP.
} \label{fig:costs}
% \vspace{-0.5cm}
\end{figure}
% \vspace{-2pt}

\textbf{Non-uniform feature costs.} Our CMI estimation approach lets us incorporate non-uniform feature costs into DIME, and we demonstrate this with the Intubation and ROSMAP datasets. For ROSMAP, we use costs expressed as the time required to acquire each feature, and for Intubation we use relative costs estimated by a board-certified physician (\Cref{app:datasets}).
% by considering costs during training for each feature in terms of the time taken to acquire that feature.
% For ROSMAP, we acquire costs for the features as presented in \citep{beebe2021efficient}. For Intubation, we collaborate with a board-certified emergency physician to get cost estimates for the features.
\revisioniclr{For comparisons, we use EDDI and CwCF to accommodate non-uniform feature costs.}
% Out of the baselines, EDDI is the only method which can be tweaked to include non-uniform feature costs.
We also compare to two ablations of our approach: (1)~using uniform costs during training but the true costs during inference
% Non-uniform costs during inference but not during training
(Inference Costs), which tests DIME's robustness to changing feature costs after training; and (2)~using uniform costs during both training and inference (Uniform Costs), which demonstrates the importance of using correct costs. All methods are compared here with the budget-constrained stopping criterion.

\Cref{fig:costs} shows the results with non-uniform feature costs.
% across the two datasets for the budget-constrained stopping criteria across various budgets.
DIME outperforms EDDI \revision{and CwCF} by a substantial margin, echoing the earlier results, and reflecting the improved CMI estimation with our discriminative approach. 
% with feature costs for both datasets, with a much larger performance gap in Intubation than in ROSMAP, similar to the results without considering feature costs (see \Cref{fig:tabular}).
Comparing to the variations of DIME, using the true non-uniform costs during both training and inference
% performing inference with non-uniform feature costs
outperforms both variations, showing that considering costs when making selections is important. The version that uses costs only during inference slightly outperforms ignoring costs on ROSMAP, indicating a degree of robustness to changing feature costs between training and inference, but both versions give similar results on Intubation.
% comes close, indicating that our method is robust to changing feature costs between training and inference.

\textbf{Additional results.} Due to space constraints, we defer several additional results to \Cref{app:results}. 

% These include an evaluation of the CMI estimation accuracy, results with non-uniform costs across multiple trials (similar to \Cref{fig:costs}), an ablation of DIME with other stopping criteria discussed in \Cref{sec:budget}, and plots of its distinct selections across predictions.
% for MNIST and ROSMAP.

\subsection{Image datasets}
\label{sec:img_datasets}
Next, we applied our method to three image classification datasets. The first two are subsets of ImageNet
% the standard ImageNet dataset
\citep{deng2009imagenet}, one with $10$ classes (Imagenette, \citealt{Imagenette}) and the other with $100$ classes (ImageNet-100, \citealt{Imagenet100}). The third is a histopathology dataset (MHIST, \citealt{wei2021petri}), comprising hematoxylin and eosin (H\&E)-stained
% fixed-size
images
% of colorectal polyps,
obtained by extracting diagnostically-relevant
% image
tiles from
% multiple
whole-slide images (WSIs). The task is to predict the histological pattern
% of each image
as either a benign
% (hyperplastic polyp)
or precancerous
% (sessile serrated adenoma)
lesion.
% This is known to be a challenging problem with considerable inter-pathologist variability.
WSIs have extremely high resolution and are infeasible for direct use
% to be used directly
in any classification task, making them a potential
% an ideal
% use case for DIME to identify patches most informative to the prediction.
use case for DIME to identify important patches.
The images in all three datasets are $224 \times 224$, and we view them as $d = 196$ patches of size $16 \times 16$.
% The value network predicts the CMI for each patch and the predictor generates class predictions.
We explore different architectures
% as backbones
for the value and predictor networks, namely ResNets \citep{he2016deep} and Vision Transformers (ViTs) \citep{dosovitskiy2020image}. We use a shared backbone in both cases,
% for the two networks,
with each component having its own output head.
% layers.
Classification performance is measured using top-1 accuracy for both ImageNet subsets, and with AUROC for the histopathology task.

\begin{figure}[t]
\centering
\vspace{-0.3cm}
\includegraphics[width=\textwidth]{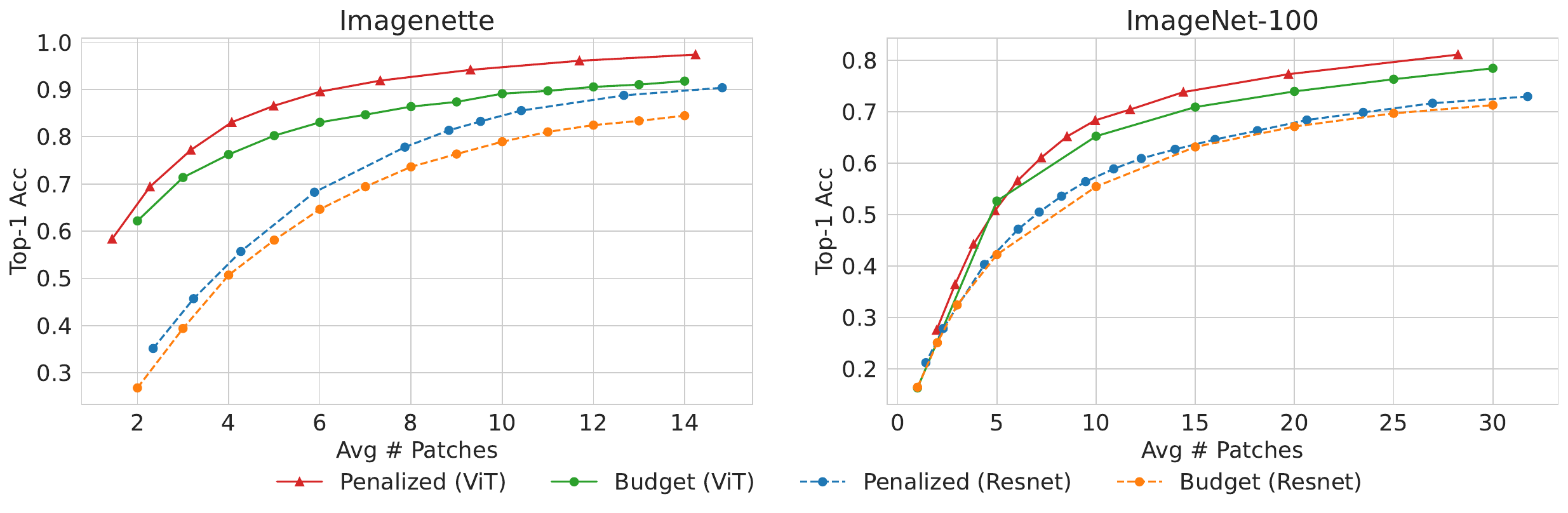}
\vspace{-0.7cm}
\caption{
Evaluation of DIME on image datasets with different vision architectures.
} \label{fig:architecture}
% \vspace{-0.4cm}
\end{figure}
% \vspace{-1cm}

\begin{figure}[h]
\centering
% \vspace{-0.5cm}
\includegraphics[width=\textwidth]{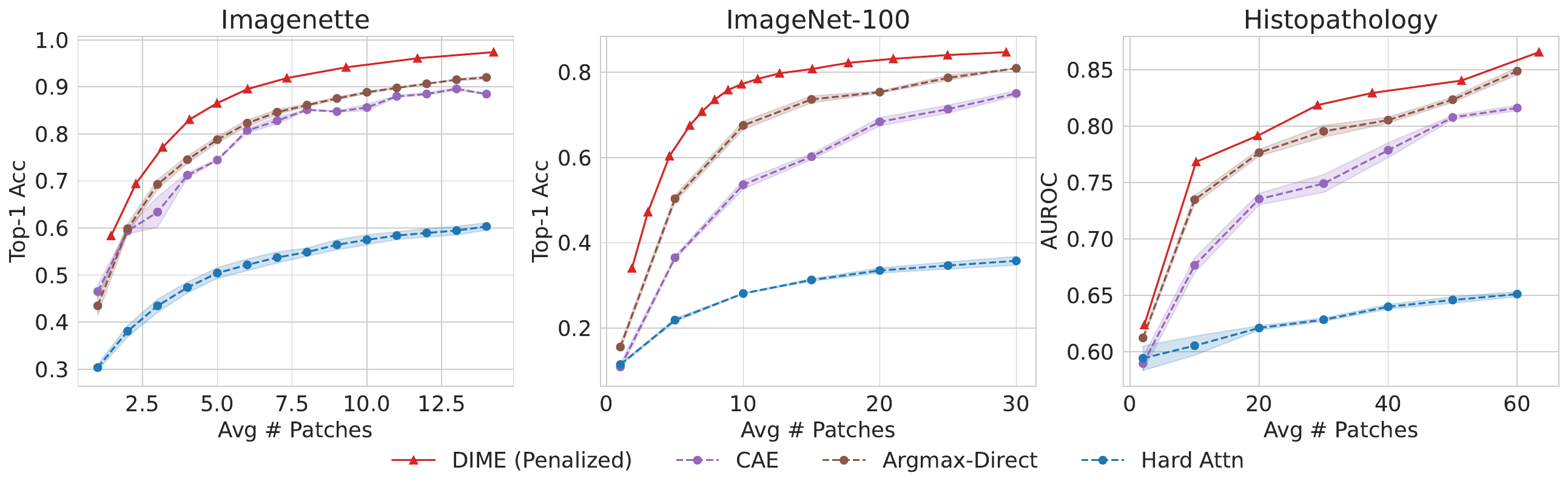}
\vspace{-0.7cm}
\caption{
Evaluation with image datasets for varying numbers of average patches selected. Results are
averaged across 5 trials, and shaded regions indicate the standard error for each method.
} \label{fig:image}
% \vspace{-0.5cm}
\end{figure}
% \vspace{-2pt}

\begin{figure}
\vspace{-0.2cm}
\centering
% \vspace{-2pt}
\includegraphics[width=1\textwidth]{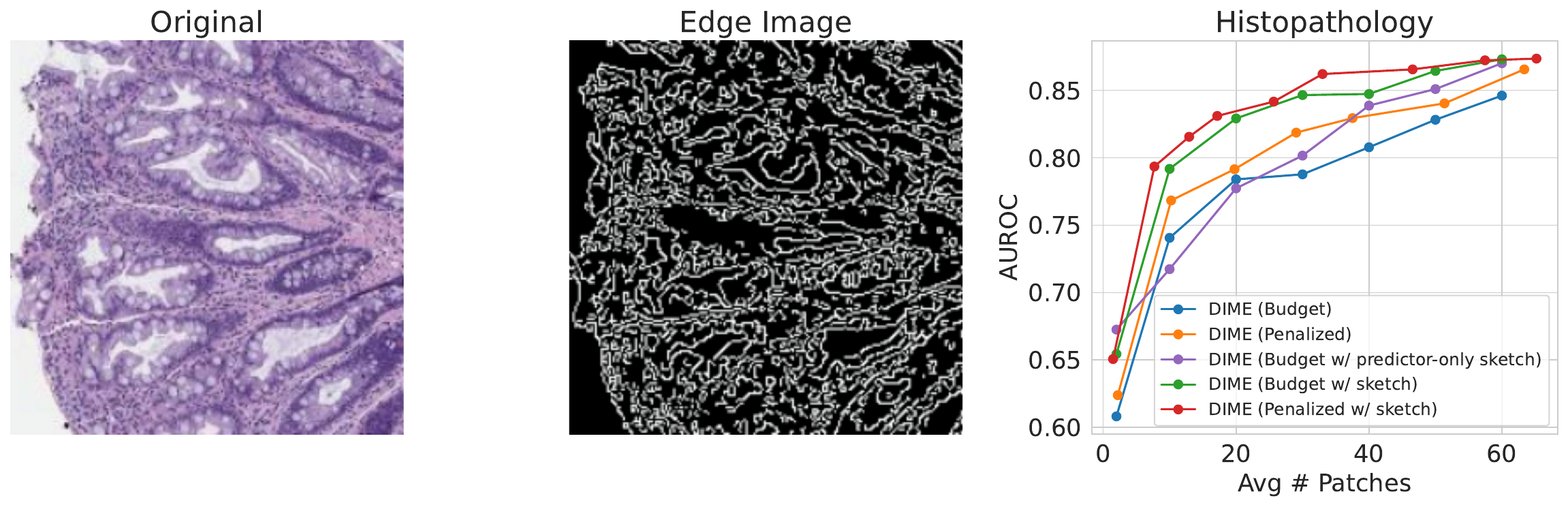}
\vspace{-0.7cm}
\caption{
Evaluation of DIME with prior information for histopathology classification. Left: Original MHIST image. Center: Canny edge image. Right: Results for varying number of average patches.
} \label{fig:prior}
\vspace{-0.3cm}
\end{figure}

\textbf{Exploring modern architectures.} As an initial experiment, \Cref{fig:architecture} compares the
% performance of
ResNet and ViT
% CNN-based (ResNet) and transformer-based (ViT)
architectures for the Imagenette and ImageNet-100 datasets.
% across different feature budgets.
We use DIME to conduct this analysis
% illustrate the benefits of the transformer-based architecture, which has been found by prior work to offer advantages when processing partial inputs \citep{naseer2021intriguing, jain2021missingness, salman2022certified}; DIME is well-suited to this analysis,
because its discriminative approach lets us seamlessly plug in any network architecture.
% for DIME.
Across both datasets, DIME's penalized version outperforms the budget-constrained version, and we find that ViTs significantly outperform ResNets. This can be attributed to the ViT's self-attention mechanism: this architecture
% it
is better suited to handle
information spread across an image,
% partial information because it aggregates information from patches anywhere in the image,
a property that has made ViTs useful in other applications with partial inputs \citep{naseer2021intriguing, jain2021missingness, salman2022certified}.
% making them a natural way to model the value network and the predictor.
Given their superior performance, we use ViTs as backbones in subsequent experiments where possible.
% for both networks in the rest of the results.
This includes DIME, CAE and Argmax Direct, but Hard Attention lacks this flexibility because it uses 
% relies on
a recurrent module to process subsets of image regions.

Next, \Cref{fig:image} compares DIME to the baselines across multiple feature budgets.
% shows the performance of DIME for multiple feature budgets when compared to our baselines.
% The penalized version
DIME with the penalized stopping criteria
% DIME with the penalized stopping criteria
outperforms the baselines
% across all datasets
for all feature budgets, with the largest gains observed for Imagenette. Notably, we
% are able to
achieve nearly $97\%$ accuracy on Imagenette with only $\sim\!\!15/196$ patches (7.7\%). The Argmax Direct baseline is competitive with DIME,
% is competitive
% perform competitively
% with DIME's budget-constrained version, as expected,
but the Hard Attention baseline shows a larger drop in performance.

\textbf{Incorporating prior information.} Next, we explored the possibility of incorporating prior information into DIME's selection process for the histopathology dataset. To simulate informing our selections with a less exact but easily acquirable version of the tissue, we use the Canny edge image as a sketch \citep{canny1986computational}, which can help generate more valuable selections than a blank image.
% by leveraging a less informative but easily acquirable version of the image. This sketch can provide a supervision signal to augment the masked image and result in better CMI estimates leading to higher overall performance. Here, we use an edge version of the original image as the sketch, obtained by running the Canny edge detection algorithm \citep{canny1986computational}.
We use separate ViT backbones for the original and edge images, and we concatenate the resulting embeddings before estimating the CMI or making class predictions (see \Cref{app:models} for details). \Cref{fig:prior} shows example images, along with the results obtained with DIME for various feature budgets.
The results show that the prior information is incorporated successfully, leading to improved performance for both the penalized and budget-constrained versions.
To verify that the improvement
% accuracy increase
is not solely due to the predictive signal provided by the edge image, we conduct an ablation where the sketch is integrated into the predictor only for a pre-trained and otherwise frozen version of DIME. This middle ground improves upon no prior information, but it generally performs well below the version that uses prior information both when making selections and predictions.
% We also compare to a variation of the training approach, where we only use the sketch for making predictions, not for estimating the CMIs (Budget w/ prediction-only sketch). This is done by training both networks without a sketch, freezing them, and training a new predictor with the sketch as an input. Using the edge image as a sketch improves performance across all budgets for both the penalized and the budget stopping criteria with the penalized policy still outperforming the the budget-constraint. The variation of using the sketch only for the prediction achieves a middle-ground by performing better than the no sketch version for certain budgets.

% Discussion
\section{Conclusion}
This work presents DIME, a new DFS approach enabled by estimating the CMI in a discriminative fashion. Our approach involves learning value and predictor networks, trained in an end-to-end fashion with a straightforward regression objective.
From a theoretical perspective, we prove that our training approach recovers the exact CMI at optimality. Empirically, DIME accurately estimates the CMI and enables an improved cost-accuracy tradeoff, exceeding both the generative and discriminative methods it builds upon \citep{chattopadhyay2023variational, covert2023learning}. Our evaluation considers a range of
tabular and image datasets and demonstrates the potential to implement several additions to the classic greedy CMI selection policy: these include allowing non-uniform features costs, variable budgets, and incorporating prior information. The results also show that DIME is robust to higher image resolutions, scales to more classes, and benefits from modern architectures.
Future work may focus on promising applications like MRIs and region-of-interest selection within WSIs, using DIME to initialize RL methods, and otherwise accelerating or improving DIME's training.

% Reproducibility
\clearpage
\section*{Acknowledgement}
We thank the members of the Lee Lab for helpful discussions. This work was supported by the National Science Foundation (CAREER DBI-1552309 and
683 DBI-1759487) and the National Institutes of Health (R35 GM 128638 and R01 AG061132).
\section*{Reproducibility Statement}
The code to train DIME has been submitted as part of the supplementary material. It includes a README that describes the steps to be taken for training and evaluating models with the different datasets. Pseudocode for DIME's training algorithm along with details about our training implementation are provided in \Cref{app:pseudocode}. Proofs for \Cref{lemma:unbiased} and \Cref{thm:optimizers} are provided in \Cref{app:lemma1}. The proof for \Cref{thm:optimizers-prior} is provided in \Cref{app:theorem2}. The adaptation of DIME for regression problems, along with the corresponding theorem and proof, are provided in \Cref{app:regression}. Descriptions of all datasets used in our experiments, along with ways to access them, are provided in \Cref{app:datasets}. All the datasets are publicly available except for the intubation dataset, which is private due to patient privacy concerns.

% References
\bibliographystyle{plainnat}
\bibliography{main}

\begin{thebibliography}{70}
\providecommand{\natexlab}[1]{#1}
\providecommand{\url}[1]{\texttt{#1}}
\expandafter\ifx\csname urlstyle\endcsname\relax
  \providecommand{\doi}[1]{doi: #1}\else
  \providecommand{\doi}{doi: \begingroup \urlstyle{rm}\Url}\fi

\bibitem[A~Bennett et~al.(2012{\natexlab{a}})A~Bennett, A~Schneider, Arvanitakis, and S~Wilson]{a2012overview}
David A~Bennett, Julie A~Schneider, Zoe Arvanitakis, and Robert S~Wilson.
\newblock Overview and findings from the religious orders study.
\newblock \emph{Current Alzheimer Research}, 9\penalty0 (6):\penalty0 628--645, 2012{\natexlab{a}}.

\bibitem[A~Bennett et~al.(2012{\natexlab{b}})A~Bennett, A~Schneider, S~Buchman, L~Barnes, A~Boyle, and S~Wilson]{a2012overview2}
David A~Bennett, Julie A~Schneider, Aron S~Buchman, Lisa L~Barnes, Patricia A~Boyle, and Robert S~Wilson.
\newblock Overview and findings from the rush memory and aging project.
\newblock \emph{Current Alzheimer Research}, 9\penalty0 (6):\penalty0 646--663, 2012{\natexlab{b}}.

\bibitem[Abdeljawad et~al.(2015)Abdeljawad, Vemulapalli, Kahi, Cummings, Snover, and Rex]{abdeljawad2015sessile}
Khaled Abdeljawad, Krishna~C Vemulapalli, Charles~J Kahi, Oscar~W Cummings, Dale~C Snover, and Douglas~K Rex.
\newblock Sessile serrated polyp prevalence determined by a colonoscopist with a high lesion detection rate and an experienced pathologist.
\newblock \emph{Gastrointestinal Endoscopy}, 81\penalty0 (3):\penalty0 517--524, 2015.

\bibitem[Ambityga()]{Imagenet100}
Ambityga.
\newblock Imagenet100.
\newblock \url{https://www.kaggle.com/datasets/ambityga/imagenet100}.

\bibitem[Ba et~al.(2014)Ba, Mnih, and Kavukcuoglu]{ba2014multiple}
Jimmy Ba, Volodymyr Mnih, and Koray Kavukcuoglu.
\newblock Multiple object recognition with visual attention.
\newblock \emph{arXiv preprint arXiv:1412.7755}, 2014.

\bibitem[Ba et~al.(2015)Ba, Salakhutdinov, Grosse, and Frey]{ba2015learning}
Jimmy Ba, Russ~R Salakhutdinov, Roger~B Grosse, and Brendan~J Frey.
\newblock Learning wake-sleep recurrent attention models.
\newblock \emph{Advances in Neural Information Processing Systems}, 28, 2015.

\bibitem[Bal{\i}n et~al.(2019)Bal{\i}n, Abid, and Zou]{balin2019concrete}
Muhammed~Fatih Bal{\i}n, Abubakar Abid, and James Zou.
\newblock Concrete autoencoders: Differentiable feature selection and reconstruction.
\newblock In \emph{International Conference on Machine Learning}, pages 444--453. PMLR, 2019.

\bibitem[Beebe-Wang et~al.(2021)Beebe-Wang, Okeson, Althoff, and Lee]{beebe2021efficient}
Nicasia Beebe-Wang, Alex Okeson, Tim Althoff, and Su-In Lee.
\newblock Efficient and explainable risk assessments for imminent dementia in an aging cohort study.
\newblock \emph{IEEE Journal of Biomedical and Health Informatics}, 25\penalty0 (7):\penalty0 2409--2420, 2021.

\bibitem[Belghazi et~al.(2018)Belghazi, Baratin, Rajeshwar, Ozair, Bengio, Courville, and Hjelm]{belghazi2018mutual}
Mohamed~Ishmael Belghazi, Aristide Baratin, Sai Rajeshwar, Sherjil Ozair, Yoshua Bengio, Aaron Courville, and Devon Hjelm.
\newblock Mutual information neural estimation.
\newblock In \emph{International Conference on Machine Learning}, pages 531--540. PMLR, 2018.

\bibitem[Brendel and Bethge(2019)]{brendel2019approximating}
Wieland Brendel and Matthias Bethge.
\newblock Approximating {CNNs} with bag-of-local-features models works surprisingly well on {ImageNet}.
\newblock \emph{arXiv preprint arXiv:1904.00760}, 2019.

\bibitem[Cai et~al.(2018)Cai, Luo, Wang, and Yang]{cai2018feature}
Jie Cai, Jiawei Luo, Shulin Wang, and Sheng Yang.
\newblock Feature selection in machine learning: A new perspective.
\newblock \emph{Neurocomputing}, 300:\penalty0 70--79, 2018.

\bibitem[Canny(1986)]{canny1986computational}
John Canny.
\newblock A computational approach to edge detection.
\newblock \emph{IEEE Transactions on Pattern Analysis and Machine Intelligence}, \penalty0 (6):\penalty0 679--698, 1986.

\bibitem[Chang et~al.(2017)Chang, Rampasek, and Goldenberg]{chang2017dropout}
Chun-Hao Chang, Ladislav Rampasek, and Anna Goldenberg.
\newblock Dropout feature ranking for deep learning models.
\newblock \emph{arXiv preprint arXiv:1712.08645}, 2017.

\bibitem[Chattopadhyay et~al.(2022)Chattopadhyay, Slocum, Haeffele, Vidal, and Geman]{chattopadhyay2022interpretable}
Aditya Chattopadhyay, Stewart Slocum, Benjamin~D Haeffele, Rene Vidal, and Donald Geman.
\newblock Interpretable by design: Learning predictors by composing interpretable queries.
\newblock \emph{IEEE Transactions on Pattern Analysis and Machine Intelligence}, 2022.

\bibitem[Chattopadhyay et~al.(2023)Chattopadhyay, Chan, Haeffele, Geman, and Vidal]{chattopadhyay2023variational}
Aditya Chattopadhyay, Kwan Ho~Ryan Chan, Benjamin~D Haeffele, Donald Geman, and Ren{\'e} Vidal.
\newblock Variational information pursuit for interpretable predictions.
\newblock \emph{arXiv preprint arXiv:2302.02876}, 2023.

\bibitem[Chen et~al.(2015)Chen, Hassani, Karbasi, and Krause]{chen2015sequential}
Yuxin Chen, S~Hamed Hassani, Amin Karbasi, and Andreas Krause.
\newblock Sequential information maximization: When is greedy near-optimal?
\newblock In \emph{Conference on Learning Theory}, pages 338--363. PMLR, 2015.

\bibitem[Cover and Thomas(2012)]{cover2012elements}
T.M. Cover and J.A. Thomas.
\newblock \emph{Elements of Information Theory}.
\newblock Wiley, 2012.
\newblock ISBN 9781118585771.

\bibitem[Covert et~al.(2023)Covert, Qiu, Lu, Kim, White, and Lee]{covert2023learning}
Ian Covert, Wei Qiu, Mingyu Lu, Nayoon Kim, Nathan White, and Su-In Lee.
\newblock Learning to maximize mutual information for dynamic feature selection.
\newblock \emph{arXiv preprint arXiv:2301.00557}, 2023.

\bibitem[Das and Kempe(2011)]{das2011submodular}
Abhimanyu Das and David Kempe.
\newblock Submodular meets spectral: Greedy algorithms for subset selection, sparse approximation and dictionary selection.
\newblock \emph{arXiv preprint arXiv:1102.3975}, 2011.

\bibitem[Deng et~al.(2009)Deng, Dong, Socher, Li, Li, and Fei-Fei]{deng2009imagenet}
Jia Deng, Wei Dong, Richard Socher, Li-Jia Li, Kai Li, and Li~Fei-Fei.
\newblock Imagenet: A large-scale hierarchical image database.
\newblock In \emph{2009 IEEE Conference on Computer Vision and Pattern Recognition}, pages 248--255. Ieee, 2009.

\bibitem[Dosovitskiy et~al.(2020)Dosovitskiy, Beyer, Kolesnikov, Weissenborn, Zhai, Unterthiner, Dehghani, Minderer, Heigold, Gelly, et~al.]{dosovitskiy2020image}
Alexey Dosovitskiy, Lucas Beyer, Alexander Kolesnikov, Dirk Weissenborn, Xiaohua Zhai, Thomas Unterthiner, Mostafa Dehghani, Matthias Minderer, Georg Heigold, Sylvain Gelly, et~al.
\newblock An image is worth 16x16 words: Transformers for image recognition at scale.
\newblock \emph{arXiv preprint arXiv:2010.11929}, 2020.

\bibitem[Dulac-Arnold et~al.(2011)Dulac-Arnold, Denoyer, Preux, and Gallinari]{dulac2011datum}
Gabriel Dulac-Arnold, Ludovic Denoyer, Philippe Preux, and Patrick Gallinari.
\newblock Datum-wise classification: a sequential approach to sparsity.
\newblock In \emph{Joint European Conference on Machine Learning and Knowledge Discovery in Databases}, pages 375--390. Springer, 2011.

\bibitem[Elenberg et~al.(2018)Elenberg, Khanna, Dimakis, and Negahban]{elenberg2018restricted}
Ethan~R Elenberg, Rajiv Khanna, Alexandros~G Dimakis, and Sahand Negahban.
\newblock Restricted strong convexity implies weak submodularity.
\newblock \emph{The Annals of Statistics}, 46\penalty0 (6B):\penalty0 3539--3568, 2018.

\bibitem[Erion et~al.(2022)Erion, Janizek, Hudelson, Utarnachitt, McCoy, Sayre, White, and Lee]{erion2022coai}
Gabriel Erion, Joseph~D Janizek, Carly Hudelson, Richard~B Utarnachitt, Andrew~M McCoy, Michael~R Sayre, Nathan~J White, and Su-In Lee.
\newblock {CoAI}: Cost-aware artificial intelligence for health care.
\newblock \emph{Nature Biomedical Engineering}, 6\penalty0 (12):\penalty0 1384, 2022.

\bibitem[Farris et~al.(2008)Farris, Misdraji, Srivastava, Muzikansky, Deshpande, Lauwers, and Mino-Kenudson]{farris2008sessile}
Alton~B Farris, Joseph Misdraji, Amitabh Srivastava, Alona Muzikansky, Vikram Deshpande, Gregory~Y Lauwers, and Mari Mino-Kenudson.
\newblock Sessile serrated adenoma: challenging discrimination from other serrated colonic polyps.
\newblock \emph{The American Journal of Surgical Pathology}, 32\penalty0 (1):\penalty0 30--35, 2008.

\bibitem[Feng and Simon(2017)]{feng2017sparse}
Jean Feng and Noah Simon.
\newblock Sparse-input neural networks for high-dimensional nonparametric regression and classification.
\newblock \emph{arXiv preprint arXiv:1711.07592}, 2017.

\bibitem[Fleuret(2004)]{fleuret2004fast}
Fran{\c{c}}ois Fleuret.
\newblock Fast binary feature selection with conditional mutual information.
\newblock \emph{Journal of Machine Learning Research}, 5\penalty0 (9), 2004.

\bibitem[Geman and Jedynak(1996)]{geman1996active}
Donald Geman and Bruno Jedynak.
\newblock An active testing model for tracking roads in satellite images.
\newblock \emph{IEEE Transactions on Pattern Analysis and Machine Intelligence}, 18\penalty0 (1):\penalty0 1--14, 1996.

\bibitem[Glatz et~al.(2007)Glatz, Pritt, Glatz, Hartmann, O’Brien, and Blaszyk]{glatz2007multinational}
Katharina Glatz, Bobbi Pritt, Dieter Glatz, Arndt Hartmann, Michael~J O’Brien, and Hagen Blaszyk.
\newblock A multinational, internet-based assessment of observer variability in the diagnosis of serrated colorectal polyps.
\newblock \emph{American Journal of Clinical Pathology}, 127\penalty0 (6):\penalty0 938--945, 2007.

\bibitem[Golovin and Krause(2011)]{golovin2011adaptive}
Daniel Golovin and Andreas Krause.
\newblock Adaptive submodularity: Theory and applications in active learning and stochastic optimization.
\newblock \emph{Journal of Artificial Intelligence Research}, 42:\penalty0 427--486, 2011.

\bibitem[Guyon and Elisseeff(2003)]{guyon2003introduction}
Isabelle Guyon and Andr{\'e} Elisseeff.
\newblock An introduction to variable and feature selection.
\newblock \emph{Journal of Machine Learning Research}, 3\penalty0 (Mar):\penalty0 1157--1182, 2003.

\bibitem[He et~al.(2012)He, Daum{\'e}~III, and Eisner]{he2012cost}
He~He, Hal Daum{\'e}~III, and Jason Eisner.
\newblock Cost-sensitive dynamic feature selection.
\newblock In \emph{ICML Inferning Workshop}, 2012.

\bibitem[He et~al.(2016{\natexlab{a}})He, Mineiro, and Karampatziakis]{he2016active}
He~He, Paul Mineiro, and Nikos Karampatziakis.
\newblock Active information acquisition.
\newblock \emph{arXiv preprint arXiv:1602.02181}, 2016{\natexlab{a}}.

\bibitem[He et~al.(2016{\natexlab{b}})He, Zhang, Ren, and Sun]{he2016deep}
Kaiming He, Xiangyu Zhang, Shaoqing Ren, and Jian Sun.
\newblock Deep residual learning for image recognition.
\newblock In \emph{Proceedings of the IEEE Conference on Computer Vision and Pattern Recognition}, pages 770--778, 2016{\natexlab{b}}.

\bibitem[He et~al.(2022)He, Mao, Ma, Huang, Hern{\`a}ndez-Lobato, and Chen]{he2022bsoda}
Weijie He, Xiaohao Mao, Chao Ma, Yu~Huang, Jos{\'e}~Miguel Hern{\`a}ndez-Lobato, and Ting Chen.
\newblock {BSODA}: a bipartite scalable framework for online disease diagnosis.
\newblock In \emph{Proceedings of the ACM Web Conference 2022}, pages 2511--2521, 2022.

\bibitem[Henderson et~al.(2018)Henderson, Islam, Bachman, Pineau, Precup, and Meger]{henderson2018deep}
Peter Henderson, Riashat Islam, Philip Bachman, Joelle Pineau, Doina Precup, and David Meger.
\newblock Deep reinforcement learning that matters.
\newblock In \emph{Proceedings of the AAAI Conference on Artificial Intelligence}, volume~32, 2018.

\bibitem[Howard()]{Imagenette}
FastAI~Jeremy Howard.
\newblock The {Imagenette} dataset.
\newblock \url{https://github.com/fastai/imagenette}.

\bibitem[Jain et~al.(2021)Jain, Salman, Wong, Zhang, Vineet, Vemprala, and Madry]{jain2021missingness}
Saachi Jain, Hadi Salman, Eric Wong, Pengchuan Zhang, Vibhav Vineet, Sai Vemprala, and Aleksander Madry.
\newblock Missingness bias in model debugging.
\newblock In \emph{International Conference on Learning Representations}, 2021.

\bibitem[Janisch et~al.(2019)Janisch, Pevn{\`y}, and Lis{\`y}]{janisch2019classification}
Jarom{\'\i}r Janisch, Tom{\'a}{\v{s}} Pevn{\`y}, and Viliam Lis{\`y}.
\newblock Classification with costly features using deep reinforcement learning.
\newblock In \emph{Proceedings of the AAAI Conference on Artificial Intelligence}, volume~33, pages 3959--3966, 2019.

\bibitem[Janisch et~al.(2020)Janisch, Pevn{\`y}, and Lis{\`y}]{janisch2020classification}
Jarom{\'\i}r Janisch, Tom{\'a}{\v{s}} Pevn{\`y}, and Viliam Lis{\`y}.
\newblock Classification with costly features as a sequential decision-making problem.
\newblock \emph{Machine Learning}, 109:\penalty0 1587--1615, 2020.

\bibitem[Kachuee et~al.(2018)Kachuee, Goldstein, K{\"a}rkk{\"a}inen, Darabi, and Sarrafzadeh]{kachuee2018opportunistic}
Mohammad Kachuee, Orpaz Goldstein, Kimmo K{\"a}rkk{\"a}inen, Sajad Darabi, and Majid Sarrafzadeh.
\newblock Opportunistic learning: Budgeted cost-sensitive learning from data streams.
\newblock In \emph{International Conference on Learning Representations}, 2018.

\bibitem[Karayev et~al.(2012)Karayev, Baumgartner, Fritz, and Darrell]{karayev2012timely}
Sergey Karayev, Tobias Baumgartner, Mario Fritz, and Trevor Darrell.
\newblock Timely object recognition.
\newblock \emph{Advances in Neural Information Processing Systems}, 25, 2012.

\bibitem[Khalid et~al.(2009)Khalid, Radaideh, Cummings, O’Brien, Goldblum, and Rex]{khalid2009reinterpretation}
Omer Khalid, Sofyan Radaideh, Oscar~W Cummings, Michael~J O’Brien, John~R Goldblum, and Douglas~K Rex.
\newblock Reinterpretation of histology of proximal colon polyps called hyperplastic in 2001.
\newblock \emph{World Journal of Gastroenterology}, 15\penalty0 (30):\penalty0 3767, 2009.

\bibitem[LeCun et~al.(1998)LeCun, Bottou, Bengio, and Haffner]{lecun1998gradient}
Yann LeCun, L{\'e}on Bottou, Yoshua Bengio, and Patrick Haffner.
\newblock Gradient-based learning applied to document recognition.
\newblock \emph{Proceedings of the IEEE}, 86\penalty0 (11):\penalty0 2278--2324, 1998.

\bibitem[Lemhadri et~al.(2021)Lemhadri, Ruan, and Tibshirani]{lemhadri2021lassonet}
Ismael Lemhadri, Feng Ruan, and Rob Tibshirani.
\newblock Lassonet: Neural networks with feature sparsity.
\newblock In \emph{International Conference on Artificial Intelligence and Statistics}, pages 10--18. PMLR, 2021.

\bibitem[Li et~al.(2017)Li, Cheng, Wang, Morstatter, Trevino, Tang, and Liu]{li2017feature}
Jundong Li, Kewei Cheng, Suhang Wang, Fred Morstatter, Robert~P Trevino, Jiliang Tang, and Huan Liu.
\newblock Feature selection: A data perspective.
\newblock \emph{ACM Computing Surveys (CSUR)}, 50\penalty0 (6):\penalty0 1--45, 2017.

\bibitem[Li and Oliva(2021)]{li2021active}
Yang Li and Junier Oliva.
\newblock Active feature acquisition with generative surrogate models.
\newblock In \emph{International Conference on Machine Learning}, pages 6450--6459. PMLR, 2021.

\bibitem[Lindenbaum et~al.(2021)Lindenbaum, Shaham, Peterfreund, Svirsky, Casey, and Kluger]{lindenbaum2021differentiable}
Ofir Lindenbaum, Uri Shaham, Erez Peterfreund, Jonathan Svirsky, Nicolas Casey, and Yuval Kluger.
\newblock Differentiable unsupervised feature selection based on a gated {Laplacian}.
\newblock \emph{Advances in Neural Information Processing Systems}, 34:\penalty0 1530--1542, 2021.

\bibitem[Liyanage et~al.(2021{\natexlab{a}})Liyanage, Zois, and Chelmis]{liyanage2021dynamic}
Yasitha~Warahena Liyanage, Daphney-Stavroula Zois, and Charalampos Chelmis.
\newblock Dynamic instance-wise classification in correlated feature spaces.
\newblock \emph{IEEE Transactions on Artificial Intelligence}, 2\penalty0 (6):\penalty0 537--548, 2021{\natexlab{a}}.

\bibitem[Liyanage et~al.(2021{\natexlab{b}})Liyanage, Zois, and Chelmis]{liyanage2021dynamic2}
Yasitha~Warahena Liyanage, Daphney-Stavroula Zois, and Charalampos Chelmis.
\newblock Dynamic instance-wise joint feature selection and classification.
\newblock \emph{IEEE Transactions on Artificial Intelligence}, 2\penalty0 (2):\penalty0 169--184, 2021{\natexlab{b}}.

\bibitem[Ma et~al.(2019)Ma, Tschiatschek, Palla, Hernandez-Lobato, Nowozin, and Zhang]{ma2019eddi}
Chao Ma, Sebastian Tschiatschek, Konstantina Palla, Jose~Miguel Hernandez-Lobato, Sebastian Nowozin, and Cheng Zhang.
\newblock {EDDI}: Efficient dynamic discovery of high-value information with partial {VAE}.
\newblock In \emph{International Conference on Machine Learning}, pages 4234--4243. PMLR, 2019.

\bibitem[Mnih and Gregor(2014)]{mnih2014neural}
Andriy Mnih and Karol Gregor.
\newblock Neural variational inference and learning in belief networks.
\newblock In \emph{International Conference on Machine Learning}, pages 1791--1799. PMLR, 2014.

\bibitem[Mnih et~al.(2014)Mnih, Heess, Graves, et~al.]{mnih2014recurrent}
Volodymyr Mnih, Nicolas Heess, Alex Graves, et~al.
\newblock Recurrent models of visual attention.
\newblock \emph{Advances in Neural Information Processing Systems}, 27, 2014.

\bibitem[Naseer et~al.(2021)Naseer, Ranasinghe, Khan, Hayat, Shahbaz~Khan, and Yang]{naseer2021intriguing}
Muhammad~Muzammal Naseer, Kanchana Ranasinghe, Salman~H Khan, Munawar Hayat, Fahad Shahbaz~Khan, and Ming-Hsuan Yang.
\newblock Intriguing properties of vision transformers.
\newblock \emph{Advances in Neural Information Processing Systems}, 34, 2021.

\bibitem[Ng and Jordan(2001)]{ng2001discriminative}
Andrew Ng and Michael Jordan.
\newblock On discriminative vs. generative classifiers: A comparison of logistic regression and naive {B}ayes.
\newblock \emph{Advances in Neural Information Processing Systems}, 14, 2001.

\bibitem[Oord et~al.(2018)Oord, Li, and Vinyals]{oord2018representation}
Aaron van~den Oord, Yazhe Li, and Oriol Vinyals.
\newblock Representation learning with contrastive predictive coding.
\newblock \emph{arXiv preprint arXiv:1807.03748}, 2018.

\bibitem[Paszke et~al.(2017)Paszke, Gross, Chintala, Chanan, Yang, DeVito, Lin, Desmaison, Antiga, and Lerer]{paszke2017automatic}
Adam Paszke, Sam Gross, Soumith Chintala, Gregory Chanan, Edward Yang, Zachary DeVito, Zeming Lin, Alban Desmaison, Luca Antiga, and Adam Lerer.
\newblock Automatic differentiation in {PyTorch}.
\newblock 2017.

\bibitem[Peng et~al.(2005)Peng, Long, and Ding]{peng2005feature}
Hanchuan Peng, Fuhui Long, and Chris Ding.
\newblock Feature selection based on mutual information criteria of max-dependency, max-relevance, and min-redundancy.
\newblock \emph{IEEE Transactions on pattern analysis and machine intelligence}, 27\penalty0 (8):\penalty0 1226--1238, 2005.

\bibitem[Poole et~al.(2019)Poole, Ozair, Van Den~Oord, Alemi, and Tucker]{poole2019variational}
Ben Poole, Sherjil Ozair, Aaron Van Den~Oord, Alex Alemi, and George Tucker.
\newblock On variational bounds of mutual information.
\newblock In \emph{International Conference on Machine Learning}, pages 5171--5180. PMLR, 2019.

\bibitem[Rangrej and Clark(2021)]{rangrej2021probabilistic}
Samrudhdhi~B Rangrej and James~J Clark.
\newblock A probabilistic hard attention model for sequentially observed scenes.
\newblock \emph{arXiv preprint arXiv:2111.07534}, 2021.

\bibitem[Ranzato(2014)]{ranzato2014learning}
Marc'Aurelio Ranzato.
\newblock On learning where to look.
\newblock \emph{arXiv preprint arXiv:1405.5488}, 2014.

\bibitem[Salman et~al.(2022)Salman, Jain, Wong, and Madry]{salman2022certified}
Hadi Salman, Saachi Jain, Eric Wong, and Aleksander Madry.
\newblock Certified patch robustness via smoothed vision transformers.
\newblock In \emph{Proceedings of the IEEE/CVF Conference on Computer Vision and Pattern Recognition}, pages 15137--15147, 2022.

\bibitem[Shalev et~al.(2022)Shalev, Painsky, and Ben-Gal]{shalev2022neural}
Yuval Shalev, Amichai Painsky, and Irad Ben-Gal.
\newblock Neural joint entropy estimation.
\newblock \emph{IEEE Transactions on Neural Networks and Learning Systems}, 2022.

\bibitem[Shim et~al.(2018)Shim, Hwang, and Yang]{shim2018joint}
Hajin Shim, Sung~Ju Hwang, and Eunho Yang.
\newblock Joint active feature acquisition and classification with variable-size set encoding.
\newblock \emph{Advances in Neural Information Processing Systems}, 31, 2018.

\bibitem[Shishkin et~al.(2016)Shishkin, Bezzubtseva, Drutsa, Shishkov, Gladkikh, Gusev, and Serdyukov]{shishkin2016efficient}
Alexander Shishkin, Anastasia Bezzubtseva, Alexey Drutsa, Ilia Shishkov, Ekaterina Gladkikh, Gleb Gusev, and Pavel Serdyukov.
\newblock Efficient high-order interaction-aware feature selection based on conditional mutual information.
\newblock \emph{Advances in neural information processing systems}, 29, 2016.

\bibitem[Song and Ermon(2019)]{song2019understanding}
Jiaming Song and Stefano Ermon.
\newblock Understanding the limitations of variational mutual information estimators.
\newblock \emph{arXiv preprint arXiv:1910.06222}, 2019.

\bibitem[Srivastava et~al.(2014)Srivastava, Hinton, Krizhevsky, Sutskever, and Salakhutdinov]{srivastava2014dropout}
Nitish Srivastava, Geoffrey Hinton, Alex Krizhevsky, Ilya Sutskever, and Ruslan Salakhutdinov.
\newblock Dropout: a simple way to prevent neural networks from overfitting.
\newblock \emph{The Journal of Machine Learning Research}, 15\penalty0 (1):\penalty0 1929--1958, 2014.

\bibitem[Tank et~al.(2021)Tank, Covert, Foti, Shojaie, and Fox]{tank2021neural}
Alex Tank, Ian Covert, Nicholas Foti, Ali Shojaie, and Emily~B Fox.
\newblock Neural {G}ranger causality.
\newblock \emph{IEEE Transactions on Pattern Analysis and Machine Intelligence}, 44\penalty0 (8):\penalty0 4267--4279, 2021.

\bibitem[Wei et~al.(2021)Wei, Suriawinata, Ren, Liu, Lisovsky, Vaickus, Brown, Baker, Tomita, Torresani, et~al.]{wei2021petri}
Jerry Wei, Arief Suriawinata, Bing Ren, Xiaoying Liu, Mikhail Lisovsky, Louis Vaickus, Charles Brown, Michael Baker, Naofumi Tomita, Lorenzo Torresani, et~al.
\newblock A petri dish for histopathology image analysis.
\newblock In \emph{Artificial Intelligence in Medicine}, pages 11--24. Springer, 2021.

\bibitem[Wong et~al.(2009)Wong, Hunt, Novelli, Shepherd, and Warren]{wong2009observer}
Newton~ACS Wong, Linda~P Hunt, Marco~R Novelli, Neil~A Shepherd, and Bryan~F Warren.
\newblock Observer agreement in the diagnosis of serrated polyps of the large bowel.
\newblock \emph{Histopathology}, 55\penalty0 (1):\penalty0 63--66, 2009.

\end{thebibliography}

\clearpage

\clearpage
\appendix
\section{Proofs} \label{app:proofs}

In this section, we re-state and prove our results from the main text.

\subsection{Estimating conditional mutual information} \label{app:lemma1}

\vspace{4pt}
\begin{applemma}
    When we use the Bayes classifier $p(\rvy \mid \rvx_S)$ as a predictor and $\ell$ is cross entropy loss, the incremental loss improvement is an unbiased estimator of the CMI for each $(x_S, \rvx_i)$ pair:
    \begin{equation*}
        \E_{\rvy, \rvx_i \mid x_S} \left[ \Delta(x_S, \rvx_i, \rvy) \right] = I(\rvy ; \rvx_i \mid x_S).
    \end{equation*}
\end{applemma}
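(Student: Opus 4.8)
The plan is to unfold the definition of $\Delta$ using the fact that, for the Bayes classifier $f(x_S;\theta)=p(\rvy\mid x_S)$ and cross entropy loss, the loss evaluated at a realized label $y$ is simply $\ell(f(x_S;\theta),y)=-\log p(y\mid x_S)$. Substituting this and the analogous expression for $x_{S\cup i}$ into $\Delta(x_S,x_i,y)=\ell(f(x_S;\theta),y)-\ell(f(x_{S\cup i};\theta),y)$ gives the pointwise identity
\begin{equation*}
\Delta(x_S,x_i,y) = \log p(y\mid x_S, x_i) - \log p(y\mid x_S) = \log\frac{p(y\mid x_S,x_i)}{p(y\mid x_S)}.
\end{equation*}
Here I would be slightly careful to note that $x_{S\cup i}$ denotes the features indexed by $S$ together with feature $i$, so $p(\rvy\mid x_{S\cup i})=p(\rvy\mid x_S,x_i)$.

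Next I would take the expectation over $(\rvy,\rvx_i)$ conditioned on the fixed value $x_S$, i.e.\ with respect to the true conditional law $p(\rvx_i,\rvy\mid x_S)$. Using the chain rule $p(y\mid x_S,x_i)\,p(x_i\mid x_S)=p(x_i,y\mid x_S)$, I would rewrite the log-ratio as
\begin{equation*}
\log\frac{p(y\mid x_S,x_i)}{p(y\mid x_S)} = \log\frac{p(x_i,y\mid x_S)}{p(x_i\mid x_S)\,p(y\mid x_S)},
\end{equation*}
so that
\begin{equation*}
\E_{\rvy,\rvx_i\mid x_S}\big[\Delta(x_S,\rvx_i,\rvy)\big] = \E_{\rvx_i,\rvy\mid x_S}\left[\log\frac{p(\rvx_i,\rvy\mid x_S)}{p(\rvx_i\mid x_S)\,p(\rvy\mid x_S)}\right].
\end{equation*}
The right-hand side is, by definition, $\KL\!\left(p(\rvx_i,\rvy\mid x_S)\,\|\,p(\rvx_i\mid x_S)p(\rvy\mid x_S)\right)$, which equals $I(\rvy;\rvx_i\mid x_S)$ by \cref{eq:cmi}. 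This completes the argument.

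I do not anticipate a genuine obstacle here: the statement is essentially the observation that the expected cross-entropy improvement of the Bayes predictor equals a KL divergence, which is the standard characterization of conditional mutual information. The only points requiring minor care are (i) that the predictor is assumed optimal so that the loss is exactly $-\log p(y\mid x_S)$ rather than an upper bound, and (ii) bookkeeping of what is random versus conditioned on — the expectation is taken over $\rvy$ and $\rvx_i$ with $x_S$ held fixed, matching the form of the CMI term $I(\rvy;\rvx_i\mid x_S)$ that appears in the idealized policy $\pi^*$.
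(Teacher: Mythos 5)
Your proof is correct and follows essentially the same computation as the paper's: both expand the cross-entropy loss of the Bayes classifier and recognize a standard identity for the CMI. The only cosmetic difference is that the paper identifies the two expected-loss terms separately as the conditional entropies $H(\rvy \mid x_S)$ and $H(\rvy \mid x_S, \rvx_i)$ and subtracts, whereas you combine the pointwise log-ratio first and invoke the KL-divergence characterization from \cref{eq:cmi}; the two routes are equivalent rearrangements of the same expectation.
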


\begin{proof}
    Consider the expected cross entropy loss given the prediction $p(\rvy \mid x_S)$:

    \begin{align*}
        \E_{\rvy \mid x_S} [ \ell(p(\rvy \mid x_S), \rvy) ] &= - \sum_{y = 1}^K p(\rvy = y \mid x_S) \log p(\rvy = y \mid x_S) \\
        &= H(\rvy \mid x_S).
    \end{align*}

    Next, consider the loss given the prediction $p(\rvy \mid x_S, x_i)$, taken in expectation across $\rvx_i$ and $\rvy$'s conditional distribution $p(\rvy, \rvx_i \mid x_S)$:

    \begin{align*}
        \E_{\rvy, \rvx_i \mid x_S} [ \ell(p(\rvy \mid x_S, \rvx_i), \rvy) ]
        &= \E_{\rvx_i \mid x_S} \E_{\rvy \mid x_S, \rvx_i = x_i} [ \ell(p(\rvy \mid x_S, \rvx_i = x_i), \rvy) ] \\
        &= \E_{\rvx_i \mid x_S} [ H(\rvy \mid x_S, \rvx_i = x_i) ] \\
        &= H(\rvy \mid x_i, \rvx_i).
    \end{align*}

    We therefore have the following expectation for the incremental loss improvement:

    \begin{align*}
        \E_{\rvy,\rvx_i \mid x_S} [\Delta(x_S, \rvx_i, \rvy)] &= \E_{\rvy, \rvx_i \mid x_S} [ \ell(p(\rvy \mid x_S), \rvy) - \ell(p(\rvy \mid x_S, \rvx_i), \rvy) ] \\
        &= H(\rvy \mid x_S) - H(\rvy \mid x_S, \rvx_i) \\
        &= I(\rvy ; \rvx_i \mid x_S).
    \end{align*}

    Thus, the
    % incremental
    loss improvement $\Delta(x_S, \rvx_i, \rvy)$ is
    % can be understood as
    an unbiased estimator of the CMI $I(\rvy ; \rvx_i \mid x_S)$.
\end{proof}

\vspace{4pt}
\begin{appthm} % \label{thm:optimizers}
    When
    % $\rvy$ is discrete and
    $\ell$ is cross entropy loss, the objectives \cref{eq:obj-predictor} and \cref{eq:obj-value} are jointly optimized by a
    % the Bayes
    predictor $f(x_S ; \theta^*) = p(\rvy \mid x_S)$ and value network where $v_i(x_S; \phi^*) = I(\rvy; \rvx_i \mid x_S)$ for $i \in [d]$.
\end{appthm}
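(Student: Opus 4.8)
The plan is to handle the two objectives separately, exploiting the crucial observation that the predictor's objective \cref{eq:obj-predictor} does not depend on the value network at all, and that the value network's objective \cref{eq:obj-value}, given a fixed predictor, is a standard $L^2$ regression problem whose pointwise minimizer is a conditional expectation. So the two can be optimized independently, and I then combine the optimizers.

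First I would analyze \cref{eq:obj-predictor}. Writing the objective as $\E_{\rvx\rvy}\E_{\rvs}[\ell(f(\rvx_\rvs;\theta),\rvy)] = \E_{\rvs}\E_{\rvx_\rvs}\E_{\rvy\mid\rvx_\rvs}[\ell(f(\rvx_\rvs;\theta),\rvy)]$, I note that since $f$ is infinitely expressive it can be chosen to minimize the inner conditional expectation separately for each realized pair $(S, x_S)$. For cross-entropy loss $\ell(q, y) = -\log q_y$, the minimizer of $\E_{\rvy\mid x_S}[-\log q_{\rvy}]$ over distributions $q \in \Delta^{K-1}$ is the true conditional $q = p(\rvy\mid x_S)$ — this is the standard fact that cross-entropy is minimized by the true conditional (equivalently, the KL divergence from $p(\rvy\mid x_S)$ to $q$ is nonnegative and zero iff $q = p(\rvy\mid x_S)$). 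Hence the optimal predictor is $f(x_S;\theta^*) = p(\rvy\mid x_S)$, the Bayes classifier, and critically this does not depend on the sampling distribution $\rvs$ of feature sets (as long as it has full support), nor on $\phi$.

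Next I would analyze \cref{eq:obj-value}, now treating the predictor as fixed at its optimum $\theta^*$. Rewriting, $\E_{\rvx\rvy}\E_{\rvs}\E_i[(v_i(\rvx_\rvs;\phi) - \Delta(\rvx_\rvs,\rvx_i,\rvy))^2] = \E_\rvs\E_i\E_{\rvx_\rvs}\E_{\rvy,\rvx_i\mid\rvx_\rvs}[(v_i(\rvx_\rvs;\phi)-\Delta(\rvx_\rvs,\rvx_i,\rvy))^2]$, I observe that for each fixed $(S, i, x_S)$ the quantity $v_i(x_S;\phi)$ is a single real number being fit by least squares against the random target $\Delta(x_S,\rvx_i,\rvy)$ under $p(\rvy,\rvx_i\mid x_S)$. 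Since $v$ is infinitely expressive, the minimizer is the conditional mean, $v_i(x_S;\phi^*) = \E_{\rvy,\rvx_i\mid x_S}[\Delta(x_S,\rvx_i,\rvy)]$. By \Cref{lemma:unbiased} (applicable precisely because the predictor is the Bayes classifier and $\ell$ is cross-entropy), this conditional mean equals $I(\rvy;\rvx_i\mid x_S)$. Assembling the two pieces gives the claimed joint optimizer. I should also remark that the decoupling is what makes "jointly optimized" legitimate: because $\theta^*$ is independent of $\phi$, any pair $(\theta^*,\phi^*)$ with each component optimal is a joint optimum of the pair of objectives (which is exactly what the statement asserts — there is no single scalarized objective here).

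The main obstacle, such as it is, is being careful about the order of optimization and the claim that the predictor's optimum is policy-independent: one must invoke (as the text does, citing \citet{covert2023learning}) that $p(\rvy\mid x_S)$ minimizes the inner expected loss for \emph{every} $x_S$ simultaneously, so the choice of feature-set distribution $\rvs$ is irrelevant to the identity of $\theta^*$, and hence there is no circularity in first fixing $\theta^*$ and then optimizing $\phi$. A secondary technical point to state cleanly is the full-support assumption on the distributions of $\rvs$ and $i$ (so that the pointwise minimization is justified almost everywhere), and the appeal to infinite expressivity so that the pointwise minimizers are actually realizable by the networks. Everything else is the routine "cross-entropy is minimized by the true conditional" and "squared loss is minimized by the conditional mean" together with the already-proven \Cref{lemma:unbiased}.
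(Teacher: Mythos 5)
Your proposal is correct and follows essentially the same route as the paper's proof: pointwise minimization of the cross-entropy objective via the KL decomposition to obtain the Bayes classifier (noting its independence from the selection policy), followed by pointwise least-squares minimization of the value objective to obtain the conditional mean of $\Delta$, identified with the CMI via \Cref{lemma:unbiased}. Your added remarks on full support of the subset distribution and on why the decoupling justifies the word ``jointly'' are reasonable clarifications of points the paper leaves implicit, but they do not constitute a different argument.
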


\begin{proof}
    Similar to \citep{covert2023learning}, our proof considers both models' optimal predictions for each input. Beginning with the predictor, consider the output given the input $x_S$. The selections were made given only $x_S$, so observing this input conveys no information about $\rvy$ or the remaining features $\rvx_{[d] \setminus S}$. The expected loss is therefore

    \begin{equation*}
        \E_{\rvy \mid x_S} [\ell(f(x_S; \theta), \rvy)].
    \end{equation*}

    Assuming that $\ell$ is cross entropy loss, we can decompose the expected loss as follows:

    \begin{align*}
        \E_{\rvy \mid x_S} [\ell(f(x_S; \theta), \rvy)]
        &= \sum_{y = 1}^K p(\rvy = y \mid x_S) \log f_y(x_S; \theta) \\
        &= \sum_{y = 1}^K p(\rvy = y \mid x_S) \log p(\rvy = y \mid x_S) \frac{f_y(x_S; \theta)}{p(\rvy = y \mid x_S)} \\
        &= H(\rvy \mid x_S) + \KL( p(\rvy \mid x_S) \; || \; f(x_S; \theta)).
    \end{align*}

    Due to the non-negative KL divergence term, we see that the optimal prediction is $p(\rvy \mid x_S)$. We can make this argument for any input $x_S$, so we say that the optimal predictor is $f(x_S ; \theta^*) = p(\rvy \mid x_S)$ for all $x_S$. Notably, this argument does not depend on the selection policy: it only requires that the policy has no additional information about the response variable or unobserved features.

    Next, we consider the value network while assuming that we use the optimal predictor $f(\rvx_S ; \theta^*)$. Given an input $x_S$, we once again have no further information about $\rvy$ or $\rvx_{[d] \setminus S}$, so the expected loss is taken across the distribution $p(\rvy, \rvx_i \mid x_S)$ as follows:
    % the following:

    \begin{equation*}
        \E_{\rvy, \rvx_i \mid x_S} \left[( v(x_S ; \phi) - \Delta(x_S, \rvx_i, \rvy))^2\right].
    \end{equation*}

    The expected loss can then be decomposed,
    % as follows,

    \begin{align*}
        \E_{\rvy, \rvx_i \mid x_S} \left[ \left( v(x_S ; \phi) - \Delta(x_S, \rvx_i, \rvy) \right)^2\right]
        % &= \E_{\rvy, \rvx_i \mid x_S} \left[( v(x_S ; \phi) - \E_{\rvy, \rvx_i \mid x_S}[\Delta(x_S, \rvx_i, \rvy)])^2\right] + \E_{\rvy, \rvx_i \mid x_S} \left[( \E_{\rvy, \rvx_i \mid x_S}[\Delta(x_S, \rvx_i, \rvy)] - \Delta(x_S, \rvx_i, \rvy))^2\right] \\
        &= \E_{\rvy, \rvx_i \mid x_S} \left[( v(x_S ; \phi) - \E_{\rvy, \rvx_i \mid x_S}[\Delta(x_S, \rvx_i, \rvy)])^2\right] \\
        &\quad + \Var\left(\Delta(x_S, \rvx_i, \rvy)] \mid x_S \right),
    \end{align*}

    which reveals that the optimal prediction is $\E_{\rvy, \rvx_i \mid x_S}[\Delta(x_S, \rvx_i, \rvy)]$. Following \Cref{lemma:unbiased}, we know that this is equal to $I(\rvy ; \rvx_i \mid x_S)$. And because we can make this argument for any $x_S$, we conclude that the optimal value network is given by $v(x_S; \phi^*) = I(\rvy; \rvx_i \mid x_S)$.
\end{proof}

% \clearpage
\subsection{Prior information} \label{app:theorem2}

Before proving \Cref{thm:optimizers-prior}, we first present a preliminary result analogous to \Cref{lemma:unbiased}.

\vspace{4pt}
\begin{applemma} \label{lemma:prior}
    When we use the Bayes classifier $p(\rvy \mid \rvx_S, \rvz)$ as a predictor and $\ell$ is cross entropy loss, the incremental loss improvement is an unbiased estimator of the CMI for each $(x_S, z, \rvx_i)$ tuple:
    \begin{equation*}
        \E_{\rvy, \rvx_i \mid x_S} \left[ \Delta(x_S, \rvx_i, z, \rvy) \right] = I(\rvy ; \rvx_i \mid x_S, z).
    \end{equation*}
\end{applemma}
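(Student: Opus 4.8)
The plan is to mirror the proof of \Cref{lemma:unbiased} verbatim, carrying the extra conditioning variable $\rvz = z$ through every step. The only structural change is that the Bayes classifier is now $p(\rvy \mid x_S, z)$ rather than $p(\rvy \mid x_S)$, and all entropies and mutual information terms pick up $z$ as an additional conditioning value. Since $z$ is fixed throughout (it is given, not averaged over), the argument is essentially unchanged; $z$ behaves like part of the observed context $x_S$.

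First I would compute the expected cross entropy loss of the prediction $p(\rvy \mid x_S, z)$ under $\rvy \sim p(\rvy \mid x_S, z)$, obtaining
\begin{equation*}
    \E_{\rvy \mid x_S, z} \left[ \ell(p(\rvy \mid x_S, z), \rvy) \right] = H(\rvy \mid x_S, z).
\end{equation*}
Next I would compute the expected loss of the prediction $p(\rvy \mid x_S, z, x_i)$, taking the expectation over the joint conditional $p(\rvy, \rvx_i \mid x_S, z)$ by iterating expectations (first over $\rvx_i$, then over $\rvy$ given $\rvx_i = x_i$), which yields $\E_{\rvx_i \mid x_S, z}[H(\rvy \mid x_S, z, \rvx_i = x_i)] = H(\rvy \mid x_S, z, \rvx_i)$. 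Subtracting gives
\begin{equation*}
    \E_{\rvy, \rvx_i \mid x_S} \left[ \Delta(x_S, \rvx_i, z, \rvy) \right] = H(\rvy \mid x_S, z) - H(\rvy \mid x_S, z, \rvx_i) = I(\rvy ; \rvx_i \mid x_S, z),
\end{equation*}
which is the claimed identity.

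There is essentially no obstacle here — the result is a direct corollary of \Cref{lemma:unbiased} obtained by relabeling the conditioning set. The one point that merits a sentence of care is the conditional-distribution bookkeeping: the outer expectation $\E_{\rvy, \rvx_i \mid x_S}$ in the statement should be read as conditioning on both $x_S$ \emph{and} the fixed prior information $z$ (i.e.\ it is really $\E_{\rvy, \rvx_i \mid x_S, z}$), consistent with how $z$ enters the modified objectives in \cref{eq:obj-prior}. Once that convention is fixed, the chain of equalities is identical to the $z$-free case. With \Cref{lemma:prior} in hand, the proof of \Cref{thm:optimizers-prior} then follows exactly as in \Cref{thm:optimizers}: the predictor's expected loss decomposes as $H(\rvy \mid x_S, z) + \KL(p(\rvy \mid x_S, z) \,\|\, f(x_S, z; \theta))$, forcing $f(x_S, z; \theta^*) = p(\rvy \mid x_S, z)$; and the value network's squared-error objective is minimized at the conditional mean $\E_{\rvy, \rvx_i \mid x_S, z}[\Delta(x_S, \rvx_i, z, \rvy)]$, which \Cref{lemma:prior} identifies as $I(\rvy; \rvx_i \mid x_S, z)$.
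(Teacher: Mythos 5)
Your proof is correct and follows essentially the same route as the paper's: the paper likewise proves this lemma by rerunning the argument of \Cref{lemma:unbiased} with every expectation additionally conditioned on $\rvz = z$, yielding $H(\rvy \mid x_S, z) - H(\rvy \mid x_S, z, \rvx_i) = I(\rvy; \rvx_i \mid x_S, z)$. Your remark that the outer expectation in the statement should be read as $\E_{\rvy, \rvx_i \mid x_S, z}$ is a fair and correct clarification of the paper's notation.
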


\begin{proof}
    The proof follows the same logic as \Cref{lemma:unbiased}, where we consider the expected loss before and after incorporating the additional feature $\rvx_i$.
    % decompose the expected loss to find the optimal prediction.
    The only difference is that each expectation must also condition on $\rvz = z$, so the terms to analyze are
    % , so the terms to decompose are

    \begin{align*}
        \E_{\rvy \mid x_S, z} &[\ell(p(\rvy \mid x_S, z), \rvy)] \\
        \E_{\rvy, \rvx_i \mid x_S, z} &[\ell(p(\rvy \mid x_S, z, \rvx_i), \rvy)].
    \end{align*}
\end{proof}

We now prove the main result for incorporating prior information.

\vspace{4pt}
\begin{appthm} % \label{thm:optimizers-prior}
    When
    % $\rvy$ is discrete and
    $\ell$ is cross entropy loss, the objectives in \cref{eq:obj-prior} are jointly optimized by a
    % the Bayes
    predictor $f(x_S, z ; \theta^*) = p(\rvy \mid x_S, z)$ and value network where $v_i(x_S, z; \phi^*) = I(\rvy; \rvx_i \mid x_S, z)$ for all $i \in [d]$.
\end{appthm}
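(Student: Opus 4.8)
The plan is to mirror the proof of \Cref{thm:optimizers} almost verbatim, carrying the prior information $\rvz = z$ through every conditioning event. The proof decouples into two parts: first optimize the predictor $f(x_S, z; \theta)$, then optimize the value network $v(x_S, z; \phi)$ while assuming the predictor is already at its optimum.

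For the predictor, I would fix an arbitrary pair $(x_S, z)$ and argue that the selection policy, having access only to $x_S$ and $z$, conveys no additional information about $\rvy$ or the unobserved features beyond what is in $(x_S, z)$ — this is the same independence observation used in \Cref{thm:optimizers}, now just conditioned additionally on $\rvz$. Hence the relevant expected loss is $\E_{\rvy \mid x_S, z}[\ell(f(x_S, z; \theta), \rvy)]$, and with $\ell$ the cross entropy loss I would perform the standard decomposition
\begin{equation*}
    \E_{\rvy \mid x_S, z}[\ell(f(x_S, z; \theta), \rvy)] = H(\rvy \mid x_S, z) + \KL\big(p(\rvy \mid x_S, z) \,\|\, f(x_S, z; \theta)\big).
\end{equation*}
Non-negativity of the KL term forces the optimum $f(x_S, z; \theta^*) = p(\rvy \mid x_S, z)$, and since $(x_S, z)$ was arbitrary this holds pointwise. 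Crucially, as in \Cref{thm:optimizers}, this argument does not reference the value network or the selection policy, so the optimal predictor is well-defined independent of $\phi$.

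For the value network, I would again fix $(x_S, z)$ and, now substituting the optimal predictor $f(\cdot; \theta^*) = p(\rvy \mid \cdot)$, write the relevant expected squared error as $\E_{\rvy, \rvx_i \mid x_S, z}\big[(v(x_S, z; \phi) - \Delta(x_S, \rvx_i, z, \rvy))^2\big]$. The standard bias-variance decomposition shows the minimizer is the conditional mean $\E_{\rvy, \rvx_i \mid x_S, z}[\Delta(x_S, \rvx_i, z, \rvy)]$, and by \Cref{lemma:prior} this equals $I(\rvy; \rvx_i \mid x_S, z)$. Taking this over all $(x_S, z)$ gives $v_i(x_S, z; \phi^*) = I(\rvy; \rvx_i \mid x_S, z)$ for all $i \in [d]$, completing the proof.

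Honestly, I do not expect any real obstacle here — the entire argument is a notational lift of \Cref{thm:optimizers}, with $\rvz$ appended to every conditioning set and \Cref{lemma:prior} substituting for \Cref{lemma:unbiased}. The one point deserving a sentence of care is confirming that the policy's independence property still holds with prior information: the policy's inputs are exactly $(x_S, z)$, so conditioned on observing them it carries no side information about $\rvy$ or $\rvx_{[d] \setminus S}$, which is precisely what both decompositions require. Everything else is routine.
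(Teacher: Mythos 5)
Your proposal matches the paper's proof essentially verbatim: the same cross-entropy decomposition into $H(\rvy \mid x_S, z)$ plus a KL term for the predictor, followed by the bias--variance decomposition of the squared error and an appeal to \Cref{lemma:prior} for the value network. The argument is correct and no gaps are present.
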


\begin{proof}
    The proof follows the same logic as \Cref{thm:optimizers}. For the predictor with input $x_S$, we can decompose the expected loss as follows:

    \begin{align*}
        \E_{\rvy \mid x_S, z} [\ell(f(x_S, z; \theta), \rvy)] &= H(\rvy \mid x_S, z) + \KL( p(\rvy \mid x_S, z) \; || \; f(x_S, z; \theta)).
    \end{align*}

    This shows that the optimal predictor is $f(x_S, z; \theta^*) = p(\rvy \mid x_S, z)$. Next, assuming we use the optimal predictor, the value network's expected loss can be decomposed as follows:

    \begin{align*}
        \E_{\rvy, \rvx_i \mid x_S, z} \left[ \left( v(x_S, z ; \phi) - \Delta(x_S, \rvx_i, z, \rvy) \right)^2\right]
        &= \E_{\rvy, \rvx_i \mid x_S, z} \left[( v(x_S, z ; \phi) - \E_{\rvy, \rvx_i \mid x_S, z}[\Delta(x_S, \rvx_i, z, \rvy)])^2\right] \\
        &\quad + \Var\left(\Delta(x_S, \rvx_i, z, \rvy)] \mid x_S, z \right).
    \end{align*}

    Based on this, \Cref{lemma:prior} implies that the optimal value network is $v(x_S, z ; \phi^*) = I(\rvy; \rvx_i \mid x_S, z)$.
\end{proof}

% \clearpage
\subsection{Regression version} \label{app:regression}

Before proving our main result for regression models,
% (\Cref{thm:regression}),
we first present a preliminary result analogous to \Cref{lemma:unbiased}.

\begin{applemma} \label{lemma:regression}
    When we use the conditional expectation $\E[\rvy \mid \rvx_S]$ as a predictor and $\ell$ is mean squared error, the incremental loss improvement is an unbiased estimator of the expected reduction in conditional variance for each $(x_S, \rvx_i)$ pair:
    
    \begin{align*}
        \E_{\rvy, \rvx_i \mid x_S} \left[ \Delta(x_S, \rvx_i, \rvy) \right] &= \Var(\rvy \mid x_S) - \E_{\rvx_i \mid x_S}[\Var(\rvy \mid x_S, \rvx_i)] \\
        &= \Var(\E[\rvy \mid x_S, \rvx_i] \mid x_S).
    \end{align*}
\end{applemma}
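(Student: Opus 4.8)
\textbf{Proof plan for \Cref{lemma:regression}.}
The plan is to mirror the cross-entropy argument of \Cref{lemma:unbiased}, replacing entropy bookkeeping with variance bookkeeping. First I would compute the expected squared-error loss of the optimal predictor $\E[\rvy \mid x_S]$ given the observed set $x_S$: since the conditional mean minimizes mean squared error, $\E_{\rvy \mid x_S}\left[\left(\E[\rvy \mid x_S] - \rvy\right)^2\right] = \Var(\rvy \mid x_S)$ by definition of conditional variance. Next I would compute the expected loss of the augmented predictor $\E[\rvy \mid x_S, \rvx_i]$, taken over the joint conditional distribution $p(\rvy, \rvx_i \mid x_S)$. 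Conditioning first on $\rvx_i = x_i$, the inner expectation is $\Var(\rvy \mid x_S, \rvx_i = x_i)$, and then taking the outer expectation over $\rvx_i \mid x_S$ gives $\E_{\rvx_i \mid x_S}[\Var(\rvy \mid x_S, \rvx_i)]$. Subtracting the two expected losses yields $\E_{\rvy, \rvx_i \mid x_S}[\Delta(x_S, \rvx_i, \rvy)] = \Var(\rvy \mid x_S) - \E_{\rvx_i \mid x_S}[\Var(\rvy \mid x_S, \rvx_i)]$, which is the first claimed equality.

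For the second equality, the key step is the law of total variance: $\Var(\rvy \mid x_S) = \E_{\rvx_i \mid x_S}[\Var(\rvy \mid x_S, \rvx_i)] + \Var(\E[\rvy \mid x_S, \rvx_i] \mid x_S)$, where all quantities are understood as conditioned on $x_S$. Rearranging immediately gives $\Var(\rvy \mid x_S) - \E_{\rvx_i \mid x_S}[\Var(\rvy \mid x_S, \rvx_i)] = \Var(\E[\rvy \mid x_S, \rvx_i] \mid x_S)$, completing the proof.

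I do not anticipate a substantial obstacle here; the argument is a direct translation of the classification case with the entropy chain rule swapped for the variance decomposition. The only point requiring a little care is being explicit that every expectation and variance is taken conditionally on the already-observed $x_S$, so that the law of total variance is applied within the conditional measure $p(\cdot \mid x_S)$ rather than unconditionally — and that the predictor is assumed to be the Bayes-optimal regressor $\E[\rvy \mid \rvx_S]$ (respectively $\E[\rvy \mid \rvx_S, \rvx_i]$) on each input, analogous to the Bayes-classifier assumption in \Cref{lemma:unbiased}. Given that, the downstream regression analogue of \Cref{thm:optimizers} follows by the same decomposition-into-bias-plus-variance argument used there, with this lemma supplying the identification of the regression target.
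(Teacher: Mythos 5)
Your proposal is correct and follows essentially the same route as the paper's proof: compute the expected squared-error loss before and after adding $\rvx_i$ (yielding $\Var(\rvy \mid x_S)$ and $\E_{\rvx_i \mid x_S}[\Var(\rvy \mid x_S, \rvx_i)]$ respectively), subtract, and apply the law of total variance within the conditional measure $p(\cdot \mid x_S)$ to obtain the second equality. No gaps.
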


\begin{proof}
    Consider the expected loss given the prediction $\E[\rvy \mid x_S]$:

    \begin{align*}
        \E_{\rvy \mid x_S} \left[ \left( \E[\rvy \mid x_S] - \rvy \right)^2 \right] = \Var(\rvy \mid x_S).
    \end{align*}

    Next, consider the loss given the prediction $\E[\rvy \mid x_S, x_i]$, taken in expectation across $\rvx_i$ and $\rvy$'s conditional distribution $p(\rvy, \rvx_i \mid x_S)$:

    \begin{align*}
        \E_{\rvy, \rvx_i \mid x_S} \left[ \left( \E[\rvy \mid x_S, \rvx_i] - \rvy \right)^2 \right]
        &= \E_{\rvx_i \mid x_S} \E_{\rvy \mid x_S, \rvx_i = x_i} \left[ \left( \E[\rvy \mid x_S, \rvx_i] - \rvy \right)^2 \right] \\
        &= \E_{\rvx_i \mid x_S} [\Var(\rvy \mid x_S, \rvx_i)].
    \end{align*}

    We therefore have the following expectation for the incremental loss improvement:

    \begin{align*}
        \E_{\rvy, \rvx_i \mid x_S} [\Delta(x_S, \rvx_i, \rvy)] &= \Var(\rvy \mid x_S) - \E_{\rvx_i \mid x_S} [\Var(\rvy \mid x_S, \rvx_i)].
    \end{align*}

    Using the law of total variance, we can simplify this difference as follows:

    \begin{align*}
        \E_{\rvy, \rvx_i \mid x_S} [\Delta(x_S, \rvx_i, \rvy)] &= \Var \left(\E[\rvy \mid x_S, \rvx_i] \mid x_S \right).
    \end{align*}

    This provides a measure similar to the CMI: it quantifies to what extent different plausible values of $\rvx_i$ affect our best estimate for the response variable.
\end{proof}

We now present our main result for regression models.

\vspace{4pt}
\begin{appthm} \label{thm:regression}
    When
    % $\rvy$ is discrete and
    $\ell$ is mean squared error, the objectives \cref{eq:obj-predictor} and \cref{eq:obj-value} are jointly optimized by a
    % the Bayes
    predictor $f(x_S ; \theta^*) = \E[\rvy \mid x_S]$ and value network where for $i \in [d]$ we have
    % $v_i(x_S; \phi^*) = I(\rvy; \rvx_i \mid x_S)$ for $i \in [d]$.
    \begin{align*}
        v(x_S; \phi^*)
        % &= \Var(\rvy \mid x_S) - \E_{\rvx_i \mid x_S}[\Var(\rvy \mid x_S, \rvx_i)] \\
        &= \Var(\E[\rvy \mid x_S, \rvx_i] \mid x_S).
    \end{align*}
\end{appthm}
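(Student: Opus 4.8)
The plan is to replicate the proof of \Cref{thm:optimizers} essentially verbatim, swapping the cross-entropy decomposition for the bias--variance decomposition of squared error and invoking \Cref{lemma:regression} in place of \Cref{lemma:unbiased}. As before, I would argue pointwise: fix an arbitrary observed set $x_S$ and determine the optimal output of each network at that input, using the fact that the selection policy reveals nothing about $\rvy$ or the unobserved features $\rvx_{[d]\setminus S}$ beyond what is contained in $x_S$.

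First I would handle the predictor. Conditioned on the input $x_S$, the objective \cref{eq:obj-predictor} contributes the term $\E_{\rvy \mid x_S}\big[(f(x_S;\theta) - \rvy)^2\big]$, and the standard bias--variance identity gives $\E_{\rvy \mid x_S}\big[(f(x_S;\theta)-\rvy)^2\big] = \big(f(x_S;\theta) - \E[\rvy \mid x_S]\big)^2 + \Var(\rvy \mid x_S)$. Since only the first term depends on $\theta$, and it is minimized (at zero) by $f(x_S;\theta) = \E[\rvy \mid x_S]$, the optimal predictor is $f(x_S;\theta^*) = \E[\rvy \mid x_S]$ for every $x_S$. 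Crucially---exactly as in \Cref{thm:optimizers}---this conclusion does not depend on the selection policy, only on the policy carrying no side information about $\rvy$, which is what decouples the two optimization problems.

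Next I would handle the value network, assuming the predictor is at its optimum $f(\cdot;\theta^*) = \E[\rvy \mid \cdot]$. Conditioned on $x_S$, \cref{eq:obj-value} contributes $\E_{\rvy,\rvx_i \mid x_S}\big[(v(x_S;\phi) - \Delta(x_S,\rvx_i,\rvy))^2\big]$, which decomposes as $\big(v(x_S;\phi) - \E_{\rvy,\rvx_i\mid x_S}[\Delta(x_S,\rvx_i,\rvy)]\big)^2 + \Var(\Delta(x_S,\rvx_i,\rvy)\mid x_S)$; the minimizer over $v(x_S;\phi)$ is therefore the conditional mean $\E_{\rvy,\rvx_i\mid x_S}[\Delta(x_S,\rvx_i,\rvy)]$. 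By \Cref{lemma:regression}---which requires precisely that the predictor be the conditional expectation and $\ell$ be squared error---this mean equals $\Var(\E[\rvy\mid x_S,\rvx_i]\mid x_S)$. Since this holds for every $x_S$, we obtain $v(x_S;\phi^*) = \Var(\E[\rvy\mid x_S,\rvx_i]\mid x_S)$, completing the argument.

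I do not anticipate a genuine obstacle: the result is a routine transcription of \Cref{thm:optimizers} using the regression analogue (\Cref{lemma:regression}) already in hand. The only points worth care are the decoupling step---verifying that the optimal predictor is independent of $\phi$ and of the policy, so that one may legitimately substitute the optimal predictor into the value network's objective---and, if $\rvy$ is vector-valued, noting that $\Var$ should be read as the trace of the conditional covariance throughout, which changes none of the steps.
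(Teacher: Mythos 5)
Your proposal is correct and follows essentially the same route as the paper's proof: a pointwise argument with the bias--variance decomposition for the predictor, the analogous mean-squared decomposition for the value network, and an appeal to \Cref{lemma:regression} to identify the conditional mean of $\Delta$ as $\Var(\E[\rvy\mid x_S,\rvx_i]\mid x_S)$. The additional remarks on decoupling and on reading $\Var$ as a trace in the vector-valued case are sensible but not needed beyond what the paper already does.
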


\begin{proof}
    We follow the same proof technique as in \Cref{thm:optimizers}. The expected loss for the predictor with input $x_S$ can be decomposed as follows,

    \begin{align*}
        \E_{\rvy \mid x_S} [\ell(f(x_S; \theta), \rvy)]
        &= \E_{\rvy \mid x_S} \left[ \left( f(x_S ; \theta) - \rvy \right)^2 \right] \\
        &= \E_{\rvy \mid x_S} \left[ \left( f(x_S ; \theta) - \E[\rvy \mid x_S] \right)^2 \right] + \Var(\rvy \mid x_S),
    \end{align*}

    which shows that the optimal predictor network is $f(x_S ; \theta^*) = \E[\rvy \mid x_S]$. Assuming we use the optimal predictor, the expected loss for the value network can then be decomposed as

    \begin{align*}
        \E_{\rvy, \rvx_i \mid x_S} \left[ \left( v(x_S ; \phi) - \Delta(x_S, \rvx_i, \rvy) \right)^2\right]
        % &= \E_{\rvy, \rvx_i \mid x_S} \left[( v(x_S ; \phi) - \E_{\rvy, \rvx_i \mid x_S}[\Delta(x_S, \rvx_i, \rvy)])^2\right] + \E_{\rvy, \rvx_i \mid x_S} \left[( \E_{\rvy, \rvx_i \mid x_S}[\Delta(x_S, \rvx_i, \rvy)] - \Delta(x_S, \rvx_i, \rvy))^2\right] \\
        &= \E_{\rvy, \rvx_i \mid x_S} \left[( v(x_S ; \phi) - \E_{\rvy, \rvx_i \mid x_S}[\Delta(x_S, \rvx_i, \rvy)])^2\right] \\
        &\quad + \Var\left(\Delta(x_S, \rvx_i, \rvy)] \mid x_S \right),
    \end{align*}

    which shows that the optimal value network prediction is $\E_{\rvy, \rvx_i \mid x_S}[\Delta(x_S, \rvx_i, \rvy)]$. 
    % However, we cannot invoke \Cref{lemma:unbiased}: we must instead consider what this term is with mean squared error loss.
    \Cref{lemma:regression} lets us conclude that the optimal value network is therefore $v(x_S; \phi^*) = \Var(\E[\rvy \mid x_S, \rvx_i] \mid x_S)$.
\end{proof}

% \clearpage
\subsection{Allowing a variable feature budget}

We first re-state our informal claim, and then introduce notation required to show a formal version.

\vspace{4pt}
\begin{appprop}
    (Informal) For any feature budget $k$, the best policy to achieve this budget on average achieves lower loss than the best policy with a per-prediction budget constraint. Similarly, for any confidence level $m$, the best policy to achieve this confidence on average achieves lower cost than the best policy with a per-prediction confidence constraint.
    % (Informal) For any feature budget $k$ or confidence level $m$, the best policy to achieve these levels on average dominates the best policies with per-prediction constraints.
    % the best policies with either a per-prediction budget constraint or confidence constraint are dominated the best policy that achieves these levels on average.
\end{appprop}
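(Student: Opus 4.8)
The plan is to formalize the comparison as an optimization over (non-greedy) stopping policies and then exhibit, for any per-prediction constrained policy, an average-constrained policy that weakly dominates it. First I would set up notation: a policy $\pi$ induces, for each input realization, a stopping rule and hence a (random) selected set $S_\pi(\rvx)$ with cost $c(S_\pi(\rvx)) = \sum_{i \in S_\pi(\rvx)} c_i$ and a terminal prediction; write $L(\pi) = \E_{\rvx\rvy}[\ell(f(\rvx_{S_\pi}; \theta^*), \rvy)]$ for its expected loss and $C(\pi) = \E_\rvx[c(S_\pi(\rvx))]$ for its expected cost, using the Bayes-optimal predictor from \Cref{thm:optimizers} throughout so that the predictor is fixed and only the stopping/selection behavior varies. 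Define $\Pi_{\le k}$ as the class of policies with $c(S_\pi(\rvx)) \le k$ almost surely, and $\Pi^{\mathrm{avg}}_{\le k}$ as those with $C(\pi) \le k$; clearly $\Pi_{\le k} \subseteq \Pi^{\mathrm{avg}}_{\le k}$. The budget claim is then simply $\min_{\pi \in \Pi^{\mathrm{avg}}_{\le k}} L(\pi) \le \min_{\pi \in \Pi_{\le k}} L(\pi)$, which is immediate from the set inclusion. The confidence claim is the dual statement with the roles of loss and cost swapped: with $\Pi^{H\le m}$ the policies satisfying $H(\rvy \mid \rvx_{S_\pi}) \le m$ pointwise and $\Pi^{H\,\mathrm{avg}\le m}$ those satisfying $\E[H(\rvy \mid \rvx_{S_\pi})] \le m$, we again have containment, hence $\min_{\Pi^{H\,\mathrm{avg}\le m}} C \le \min_{\Pi^{H\le m}} C$.

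The substance, and the reason the statement is worth stating, is that these inequalities are \emph{strict} in general and that the dominating average-constrained policy is a natural one — not just an abstract infimum over a larger set. So after the containment argument I would add the constructive half: given the optimal per-prediction policy $\pi^\star \in \Pi_{\le k}$, partition the input space into a "hard" region (where the terminal loss is high and marginal features help little) and an "easy" region, then transfer budget from the hard region to the easy region — stopping earlier on hard inputs and continuing longer on easy ones — while keeping the average cost at $k$. I would argue this reallocation cannot increase expected loss (on the easy inputs extra features weakly reduce expected loss by \Cref{lemma:unbiased}, since CMI is nonnegative, so additional selections have nonnegative expected benefit; on the hard inputs the loss we give up is small by construction), and is strictly beneficial whenever the marginal value of a feature is not constant across inputs — which is exactly the realistic situation. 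The dual construction for the confidence version transfers cost the other way: spend more on inputs that are close to the confidence threshold and give up early on inputs that cannot reach it, reducing expected cost while keeping the average entropy at $m$.

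The main obstacle I anticipate is making the "transfer budget between regions" argument rigorous without circularity: the set $S_\pi(\rvx)$ is chosen adaptively, so one cannot freely prescribe "take one more feature here, one fewer there" as though features were exchangeable across inputs — the continuation must still be a valid adaptive policy, and stopping earlier changes which features were even seen. The clean way around this is to not argue about modifying $\pi^\star$ at all for the \emph{inequality} (the containment argument suffices for "$\le$"), and to prove \emph{strictness} via a minimal explicit example: a two-input-class distribution where one class is resolved by a single cheap feature and the other is pure noise regardless of features, for which the per-prediction budget policy is forced to waste its budget on the noise class while the average-budget policy spends nothing there. This sidesteps the adaptive-policy subtlety entirely and still delivers the qualitative message of \Cref{prop:informal}. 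I would also need to state a mild regularity assumption (e.g., that the infima over policy classes are attained, or replace $\min$ by $\inf$) to avoid measure-theoretic technicalities about the existence of optimal stopping rules.
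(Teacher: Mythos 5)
Your core argument is exactly the paper's proof: the formal version of the claim is just the weak inequalities in \cref{eq:cost} and \cref{eq:confidence}, and the paper establishes them precisely by the set containments $\Pi_k \subseteq \{\pi \in \Pi : c(\pi) \leq k\}$ and $\Pi_m \subseteq \{\pi \in \Pi : \ell(\pi) \leq m\}$, so minimizing over the larger set can only do better. Your additional discussion of strictness (and the budget-transfer construction) goes beyond what the paper actually claims or proves --- the formal statement is only a weak inequality --- but it is a reasonable supplement and your instinct to establish strictness by an explicit two-class example rather than by modifying an adaptive policy is the right one.
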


In order to account for a policy's stopping criterion, we generalize our earlier notation so that policies are functions of the form $\pi(x_S) \in \{0\} \cup [d]$, and we say a policy terminates (or stops selecting new features)
% at $x_S$
when $\pi(x_S) = 0$. Given an input $x$, we let $S(\pi, x) \subseteq [d]$ denote the set of indices selected upon termination. The cost for this prediction is $c(\pi, x) = \sum_{i \in S(\pi, x)} c_i$, and there is also a notion of expected loss
% upon termination
$\ell(\pi, x)$ that we define as follows:

\begin{equation}
    \ell(\pi, x) = \E_{\rvy \mid x_{S(\pi, x)}} [\ell(f(x_{S(\pi, x)}), \rvy)].
\end{equation}

For example, if $\ell$ is cross entropy loss and we use the Bayes classifier $f(x_S) = p(\rvy \mid x_S)$, we have $\ell(\pi, x) = H(\rvy \mid x_S)$; due to this interpretation of the expected loss, we refer to constraints on $\ell(\pi, x)$ as \textit{confidence constraints}. For example, \citet{chattopadhyay2023variational} suggests selecting features until $H(\rvy \mid x_S) \leq m$ for a confidence level $m$. In comparing policies, we must consider the tradeoff between accuracy and feature cost, and we have two competing objectives -- the average loss and the average cost:

% \begin{align}
%     \ell(\pi) &= \E[\ell(\pi, \rvx)] \\
%     c(\pi) &= \E[c(\pi, \rvx)].
% \end{align}

\begin{equation}
    \ell(\pi) = \E[\ell(\pi, \rvx)]
    \quad\quad\quad\quad
    c(\pi) = \E[c(\pi, \rvx)].
\end{equation}

Now, there are three types of policies we wish to compare: (1)~those that adopt a budget constraint for each prediction, (2)~those that adopt a confidence constraint for each prediction, and (3)~those with no constraints. These classes of selection policies are defined as follows:

\begin{enumerate}[leftmargin=0.6cm]

    \item (Budget-constrained) These policies adopt a budget $k$ that must be respected for each input $x$. That is, we have $c(\pi, x) \leq k$ for all $x$. This can be ensured by terminating the policy when the budget is exactly satisfied \citep{ma2019eddi, rangrej2021probabilistic, covert2023learning} or when there are no more candidates that will not exceed the budget. Policies of this form are said to belong to the set $\Pi_k$.

    \item (Confidence constrained) These policies adopt a minimum confidence $m$ that must be respected for each input $m$. That is, we must have $\ell(\pi, x) \leq m$ for all $x$. Technically, we may not be able to guarantee this for all predictions due to inherent uncertainty, so we can instead keep making predictions as long as the expected loss exceeds $m$ \citep{chattopadhyay2023variational}. Policies of this form are said to belong to the set $\Pi_m$.
    % That is, we must keep making selections so long as the expected loss exceeds $m$ \citep{chattopadhyay2023variational}. Technically, we cannot guarantee

    \item (Unconstrained) These policies have no per-prediction constraints
    % , either
    % in terms of
    on the feature cost or expected loss.
    % upon termination.
    These policies are said to belong to the set $\Pi$, where we have $\Pi_k \subseteq \Pi$ and $\Pi_m \subseteq \Pi$.
    
\end{enumerate}

With these definitions in place, we now present a more formal version of our claim. 

\vspace{4pt}
\begin{appprop} \label{prop:formal}
    (Formal) For any average feature cost $k$, the best unconstrained policy achieves lower expected loss than the best budget-constrained policy:
    \begin{equation}
        \min_{\pi \in \Pi: c(\pi) \leq k} \; \ell(\pi) \leq \min_{\pi \in \Pi_k} \ell(\pi). \label{eq:cost}
    \end{equation}
    Similarly for any average confidence level $m$, the best unconstrained policy achieves lower expected cost than the best confidence-constrained policy:
    \begin{equation}
        \min_{\pi \in \Pi: \ell(\pi) \leq m} \; c(\pi) \leq \min_{\pi \in \Pi_m} \; c(\pi). \label{eq:confidence}
    \end{equation}
\end{appprop}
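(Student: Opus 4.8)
The plan is to observe that both inequalities in \cref{eq:cost,eq:confidence} reduce to a set-containment argument rather than anything involving the greedy structure of the policies: the class $\Pi_k$ of policies obeying the budget \emph{for every prediction} is contained in the class of policies obeying the budget \emph{on average}, and symmetrically $\Pi_m$ is contained in the class of policies obeying the confidence level on average. Since minimizing a fixed objective over a larger feasible set can only decrease (or preserve) the optimal value, the claim follows.

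Concretely, for \cref{eq:cost} I would fix an arbitrary $\pi \in \Pi_k$. By definition of $\Pi_k$ we have $c(\pi, x) \le k$ for every $x$ in the support of $\rvx$. Taking expectations over $\rvx$ and using monotonicity of expectation, $c(\pi) = \E[c(\pi, \rvx)] \le k$, so $\pi$ belongs to the feasible set $\{\pi \in \Pi : c(\pi) \le k\}$ of the left-hand minimization. Hence $\Pi_k \subseteq \{\pi \in \Pi : c(\pi) \le k\}$, and since both sides of \cref{eq:cost} minimize the same objective $\ell(\pi)$, the minimum over the larger set is at most the minimum over $\Pi_k$. The argument for \cref{eq:confidence} is the mirror image with the roles of cost and loss exchanged: for any $\pi \in \Pi_m$ we have $\ell(\pi, x) \le m$ pointwise, hence $\ell(\pi) = \E[\ell(\pi, \rvx)] \le m$, so $\Pi_m \subseteq \{\pi \in \Pi : \ell(\pi) \le m\}$, and minimizing $c(\pi)$ over the larger set yields the inequality. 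If one wishes to avoid assuming the minima are attained, the same reasoning applies verbatim with infima in place of minima.

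The main point requiring care — and the only ``obstacle,'' though a mild one — is the passage from a pointwise bound to a bound in expectation; this is just monotonicity of expectation and needs only that the relevant expectations ($c(\pi)$ and $\ell(\pi)$) are well defined, which holds since costs are nonnegative and finite and the loss is the expected cross entropy (equivalently a conditional entropy, hence bounded). I would also take a sentence in the writeup to stress what the proposition does and does not assert: it compares the \emph{best attainable} accuracy/cost within each policy class, not the behavior of any particular greedy rule, and it is exactly the strictness of the containment $\Pi_k \subsetneq \Pi$ (the extra freedom to let the per-prediction cost fluctuate around $k$, e.g.\ stopping early on inherently ambiguous instances) that can make the left-hand side strictly smaller, which is the intuition motivating the penalized stopping criterion used in the experiments.
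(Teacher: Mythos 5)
Your proposal is correct and follows essentially the same route as the paper's proof: both reduce each inequality to the containment $\Pi_k \subseteq \{\pi \in \Pi : c(\pi) \le k\}$ (resp.\ $\Pi_m \subseteq \{\pi \in \Pi : \ell(\pi) \le m\}$) and the fact that minimizing over a larger feasible set cannot increase the optimum. Your additional step spelling out the passage from the pointwise constraint to the averaged one via monotonicity of expectation is a useful elaboration of what the paper leaves as ``easy to see,'' but it is the same argument.
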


In other words, for any desired average feature cost or confidence level, it cannot help to adopt that level
% the level as
as a per-prediction constraint. The best policy to achieve these levels \textit{on average} can violate the constraint for some predictions, and as a result provide either a lower average cost or expected loss.

\begin{proof}
    The proof of this claim relies on the fact that $\Pi_k \subseteq \Pi$ and $\Pi_m \subseteq \Pi$. It is easy to see that $\Pi_k \subseteq \{\pi \in \Pi: c(\pi) \leq k\}$. This implies the inequality in \cref{eq:cost} because the right-hand side takes the minimum over a smaller set of policies. Similarly, it is easy to see that $\Pi_m \subseteq \{\pi \in \Pi: \ell(\pi) \leq m\}$, which implies the inequality in \cref{eq:confidence}.
\end{proof}

\section{Predictor suboptimality} \label{app:suboptimality}

Consider a feature subset $x_S$ where the ideal prediction from the Bayes classifier is $p(\rvy \mid x_S)$, but the learned classifier instead outputs $q(\rvy \mid x_S)$. The incorrect prediction can result in a skewed loss, which then provides incorrect labels to the value network $v(x_S ; \phi)$. Specifically, the expected loss assuming many data points $(\rvx, \rvy)$ such that $\rvx_S = x_S$ becomes

\begin{equation}
    \E_{\rvy \mid x_S} [\ell(q(\rvy \mid x_S), \rvy)] = H(\rvy \mid x_S) + \KL\left( p(\rvy \mid x_S) \; || \; q(\rvy \mid x_S) \right). 
\end{equation}

The loss is therefore higher on average than it should be given the Bayes classifier, with the extra loss being equal to the KL divergence between the ideal and actual predictions. However, this does not imply
% mean
that $v(x_S ; \phi)$
% the value network
systematically overestimates the CMI, because its labels are based on the expected loss \textit{reduction}.
% $\Delta(x_S, x_i, y)$.

Consider that the above situation with incorrect predictions occurs not only for $x_S$, but also for all values of $\rvx_i$: that is, the classifier outputs $q(\rvy \mid x_S, x_i)$ rather than $p(\rvy \mid x_S, x_i)$ for each value $x_i$. Now, the expected loss reduction is the following:

\begin{align}
    \E_{\rvy, \rvx_i \mid x_S} [\Delta(x_S, \rvx_i, \rvy)] = \;&I(\rvy; \rvx_i \mid x_S)
    % \nonumber \\
    % &+
    + \KL(p(\rvy \mid x_S) \; || \; q(\rvy \mid x_S)) \nonumber \\
    &- \E_{\rvx_i \mid x_S} \big[\KL(p(\rvy \mid x_S, \rvx_i) \; || \; q(\rvy \mid x_S, \rvx_i))\big].
\end{align}

This implies that given infinite data and a value network that perfectly optimizes its objective, the learned CMI estimates are biased according to
% based on
a \textit{difference} in KL divergence terms. Notably, the difference can be either positive or negative, so the CMI estimates can be incorrect in either direction. And intuitively, the bias shrinks to zero as the classifier approaches $p(\rvy \mid x_S)$
% in a point-wise fashion
for all predictions.

\clearpage
\section{Training algorithm} \label{app:pseudocode}

\Cref{alg:training} summarizes our learning approach, where we jointly train the predictor and value networks according to the objectives in \cref{eq:obj-predictor} and \cref{eq:obj-value}. We implemented it in PyTorch \citep{paszke2017automatic} using PyTorch Lightning.\footnote{\url{https://www.pytorchlightning.ai}} Note that the algorithm requires a dataset of fully observed $\rvx$ samples with corresponding labels $\rvy$. 

\begin{algorithm}[H]
  \SetAlgoLined
  \DontPrintSemicolon
  \KwInput{Data distribution $p(\rvx, \rvy)$, learning rate $\gamma$, budget $k$, exploration $\epsilon \in (0, 1)$, costs $c \in \R_+^d$}
  \KwOutput{Predictor $f(\rvx_S; \theta)$, value network $v(\rvx_S; \phi)$}
  \BlankLine
  \tcp{Prepare models}
  initialize $v(\rvx_S; \phi)$, pre-train $f(\rvx_S; \theta)$ with random masks \;
  % initialize $f(\rvx_S; \theta)$,  $v(\rvx_S; \phi)$ \;
  % % \BlankLine
  % % \tcp{Pretrain with random masks}
  % pretrain $f(\rvx_S; \theta)$ with random masks \;
  \BlankLine
  \While{not converged}{
  \tcp{Initialize variables}
    % initialize $s = \{\}$ \;
    % initialize $\mathcal{L}_\theta = 0$ \;
    % initialize $\mathcal{L}_\phi = 0$ \;
    initialize $S = \{\}$, $\mathcal{L}_\theta = 0$, $\mathcal{L}_\phi = 0$ \;
    sample $x, y \sim p(\rvx, \rvy)$ \;
    \BlankLine
    \tcp{Initial prediction}
    calculate $\hat y_{\text{prev}} = f(x_{\{\}}; \theta)$ \;
    update $\mathcal{L}_\theta \gets \mathcal{L}_\theta + \ell(\hat y_{\text{prev}}, y)$ \;
    \BlankLine
    % \tcp{Iterative selection}
    \While{$\sum_{i \in S} c_i \leq k$}{
      \tcp{Determine next selection}
      calculate $I = v(x_S; \phi)$ \;
      % sample $u \sim \text{Uniform}(0, 1)$ \;
      % set $j = \argmax_{i \notin s} I_i$ \textbf{if} $u < \epsilon$ \textbf{else} sample $j$ from $[d] \setminus S$ \;
      set $j = \argmax_{i \notin S} I_i / c_i$ with probability $1 - \epsilon$, else sample $j$ from $[d] \setminus S$ \;
      % \eIf{$u < \epsilon$}{
      %   set $j = \argmax_{i \notin s} I_i$ \;
      % }
      % {
      %   sample $j$ from $[d] \setminus S$ \;
      % }
      \BlankLine
      \tcp{Update predictor loss}
      update $S \gets S \cup j$ \;
      calculate $\hat y = f(x_S ; \theta)$ \;
      update $\mathcal{L}_\theta \gets \mathcal{L}_\theta + \ell(\hat y, y)$ \;
      \BlankLine
      \tcp{Update value loss}
      calculate $\Delta = \ell(\hat y_{\text{prev}}, y) - \ell(\hat y, y)$ \;
      update $\mathcal{L}_\phi \gets \mathcal{L}_\phi + (I_j - \Delta)^2$ \;
      set $\hat y_{\text{prev}} = \hat y$ \;
      % update $s \gets s \cup j$ \;
      % calculate $\hat y = f(x_S ; \theta)$ \;
      % calculate $\Delta = \ell(\hat y_{\text{prev}}, y) - \ell(\hat y, y)$ \;
      % \BlankLine
      % \tcp{Update loss functions}
      % update $\mathcal{L}_\theta \gets \mathcal{L}_\theta + \ell(\hat y, y)$ \;
      % % calculate $\Delta = \ell(\hat y_{\text{prev}}, y) - \ell(\hat y, y)$ \;
      % update $\mathcal{L}_\phi \gets \mathcal{L}_\phi + (\hat I_j - \Delta)^2$ \;
      % % update $\mathcal{L}_\phi \gets \mathcal{L}_\phi + (\hat I_j - \ell(\hat y_{\text{prev}}, y) + \ell(\hat y, y))^2$ \;
      % set $\hat y_{\text{prev}} = \hat y$ \;
    }
    \BlankLine
    \tcp{Gradient step}
    update $\theta \gets \theta - \gamma \nabla_\theta \mathcal{L}_\theta$, $\quad \phi \gets \phi - \gamma \nabla_\phi \mathcal{L}_\phi$ \;
    % update $\theta \gets \theta - \gamma \nabla_\theta \mathcal{L}_\theta$ \;
    % update $\phi \gets \phi - \gamma \nabla_\phi \mathcal{L}_\phi$ \;
  }
\caption{Training algorithm}
\label{alg:training}
\end{algorithm}

\clearpage
Next, \Cref{alg:inference} shows how features are selected at inference time. For simplicity, we show only the penalized stopping criterion, which requires a penalty term $\lambda > 0$. To implement either the budget or confidence constraints described in \Cref{sec:budget}, we only need to change the stopping criterion.

\revisioniclr{
\begin{algorithm}[H]
  \SetAlgoLined
  \DontPrintSemicolon
  \KwInput{Test instance ($x$, $y$), predictor $f(\rvx_S; \theta)$, value network $v(\rvx_S; \phi)$, penalty parameter $\lambda > 0$}
  \KwOutput{Prediction $\hat{y}$}
  \BlankLine
  
  \BlankLine
  \tcp{Initialize feature set}
  initialize $S = \{\}$
  \BlankLine
  % \While{True}{
  \For{$i = 1, \ldots, d$}{
    % \BlankLine
    % \tcp{Iterative selection}
      \tcp{Estimate current CMI}
      calculate $I = v(x_S; \phi)$ \;
      \BlankLine

      \tcp{Check stopping criterion}
      \If{$\max_{i\notin S} I_i/c_i < \lambda$} {
        break
      }
      \BlankLine
      
      % sample $u \sim \text{Uniform}(0, 1)$ \;
      % set $j = \argmax_{i \notin s} I_i$ \textbf{if} $u < \epsilon$ \textbf{else} sample $j$ from $[d] \setminus S$ \;
      \tcp{Determine next selection}
      set $j = \argmax_{i \notin S} I_i / c_i$ \;
      update $S \gets S \cup j$ \;
      \BlankLine

    \BlankLine
  }
\BlankLine
  \tcp{Make prediction}
        calculate $\hat y = f(x_S ; \theta)$ \;
\caption{Inference algorithm}
\label{alg:inference}
\end{algorithm}}

\revisioniclr{To incorporate prior information into \Cref{alg:training} and \Cref{alg:inference} (denoted by $\rvz$), we can simply update $I = v(x_S, z;\phi)$ and $\hat{y}=f(x_S, z;\theta)$ during both training and inference.}

\Cref{alg:training} is simplified to omit several details that we implement in practice, and these details are discussed below.

\textbf{Masked pre-training.} When pre-training the predictor $f(\rvx_S; \theta)$, we sample feature subsets as follows: we first sample a cardinality $\{0, \ldots, d\}$ uniformly at random, and we then sample the members of the subset at random. This distribution ensures even coverage of different subset sizes $|S|$, whereas treating each feature's membership as an independent Bernoulli variable biases the subsets towards a specific size.

\textbf{Minibatching.} As is conventional in deep learning, we calculate gradients in parallel for multiple inputs.
% $(\rvx, \rvy)$ pairs.
In \Cref{alg:training}, this means that
% we perform iterative selection for several examples, and
we take gradient steps calculated over multiple data samples $(\rvx, \rvy)$ and multiple feature budgets.

\textbf{Learning rate schedule.} Rather than train with a fixed learning rate $\gamma > 0$, we reduce its value over the course of training. To avoid setting a precise number of epochs for each dataset, we decay the learning rate when the loss reaches a plateau, and we perform early stopping when the learning rate is sufficiently low. The initial learning rate depends on the architecture, but we use values similar to those used for conventional training (e.g., ViTs require a lower learning rate than CNNs or MLPs).

\textbf{Annealing exploration probability.} Setting a large value for $\epsilon$ helps encourage exploration, but at inference time we set $\epsilon = 0$. To avoid the mismatch between these settings, we anneal $\epsilon$ towards zero over the course of training. Specifically, we train the model with a sequence of $\epsilon$ values, warm-starting each model with the output from the previous value.

\textbf{Parameter sharing.} As mentioned in \Cref{sec:method}, we sometimes share parameters between the predictor and value network. We implement this via a shared backbone, e.g., a sequence of self-attention layers in a ViT \citep{dosovitskiy2020image}. The backbone is initialized via the predictor pre-training with random masks, and it is then used for both $f(\rvx_S; \theta)$ and $v(\rvx_S; \phi)$ with separate output heads for each one.

\textbf{Scaling value network outputs.} To learn the optimal value network outputs, it is technically sufficient to let the network make unconstrained, real-valued predictions. However, given that the true CMI values are non-negative, or $I(\rvy; \rvx_i \mid x_S) \geq 0$ for all $(x_S, \rvx_i)$, it is sensible to constrain the predictions: for example, we can apply a softplus output activation. Similarly, we know that the true CMI values are upper bounded by the current prediction entropy $H(\rvy \mid x_S)$ \citep{cover2012elements}. These simultaneous bounds can be summarized as follows:
\begin{equation*}
    0 \leq I(\rvy ; \rvx_i \mid x_S) \leq H(\rvy \mid x_S).
\end{equation*}
To enforce both inequalities, we apply a sigmoid operation to the unconstrained value network prediction $v(x_S; \phi)$, and we multiply this by the empirical prediction entropy from $f(x_S; \theta)$. An ablation showing the effect of this approach is in \Cref{fig:predicted_cmi}.

\textbf{Prior information.} We found that an issue with using prior information (as discussed in \Cref{sec:prior}) is overfitting to $\rvz$. This is perhaps unsurprising, particularly when $\rvz$ is high-dimensional, because the same input is used repeatedly with different feature subsets $\rvx_S$ and the same label $\rvy$. To mitigate this, we applied the following simple fix: for the separate network that processes the prior variable $\rvz$, we detached gradients when using the learned representation to make classifier predictions, so that gradients are propagated only for the value network's CMI predictions. An ablation demonstrating this approach is in \Cref{fig:prior-ablation}.

\textbf{Inference time.} At inference time, we follow a similar procedure as in \Cref{alg:training} but with $\epsilon = 0$, so that we always make the most valuable selection. \Cref{alg:inference} demonstrates the pseudo-code for inference. In terms of stopping criteria for making a prediction, we explore multiple approaches, as discussed in \Cref{sec:budget}: (1)~a budget-constrained approach with parameter $k$, (2)~a confidence constrained approach with parameter $m$, and (3)~a penalized approach with parameter $\lambda$. Our results are generated by evaluating a single learned policy with several values for each of these parameters. The range of reasonable values for the confidence parameter $m$ and penalty parameter $\lambda$ depend on the dataset, so these are tuned by hand.

\revisioniclr{\textbf{Feature grouping.} Several of our datasets involve grouped features: for example, we group pixels in the image datasets into patches, and our medical diagnosis datasets have grouped one-hot indicators for categorical variables (Intubation, ROSMAP). To implement this grouping structure in our method, we simply predict the CMI for each group, and then calculate the value network’s objective based on the loss improvement after revealing the group’s values.}

% \clearpage
\section{Datasets} \label{app:datasets}

This section provides details about the datasets used in our experiments. The size of each dataset is summarized in \Cref{tab:datasets}.

\begin{table}[ht]
    \caption{Summary of datasets used in our experiments.} \label{tab:datasets}
    % \vskip 0.1cm
    \begin{center}
    \begin{tabular}
    {lcccc}
    \toprule
    Dataset & \# Features & \# Feature Groups & \# Classes & \# Samples \\
    \midrule
    MNIST  & 784 & -- & 10 & 60,000 \\
    Intubation & 112 & 35 & 2 &  65,515 \\
    ROSMAP  & 46 & 43 & 2 & 13,438\\
    \midrule
    ImageNette  & 50,176 & 196 & 10 & 13,395 \\
    ImageNet-100  & 50,176 & 196 & 100 & 135,000 \\
    Histopathology  & 50,176 & 196 & 2 & 3152 \\
    \bottomrule
    \end{tabular}
    \end{center}
\end{table}

\textbf{MNIST.} This is the standard digit classification dataset \citep{lecun1998gradient}. We downloaded it with PyTorch and used the standard train and test splits, with $10{,}000$ training samples held out as a validation set.

\textbf{Intubation.} This is a privately curated dataset from a university medical center, gathered over a 13-year period (2007-2020).
% and encompassing $14{,}463$ emergency department admissions.
Our goal is to predict which patients require respiratory support upon arrival in the emergency department. We selected $112$ pre-hospital clinical features including dispatch information (injury date, time, cause, and location), demographic information (age, sex), and pre-hospital vital signs (blood pressure, heart rate, respiratory rate). The outcome is defined based on whether a patient received respiratory support, including both invasive (intubation) and non-invasive (BiPap) approaches. We excluded patients under the age of 18, and because many features represent one-hot encodings for categorical variables, we grouped them into $35$ feature groups. Feature acquisition costs were obtained by having a board-certified emergency physician estimate the relative cost of obtaining each feature. The dataset is not publicly available due to patient privacy concerns.

\textbf{ROSMAP.} The Religious Order Study (ROS) and Memory Aging Project (MAP) \citep{a2012overview, a2012overview2} are complementary epidemiological studies that enroll participants to study dementia. ROS is a logitudinal study that enrolls clergy without known dementia from across the United States, including Catholic nuns, priests, and brothers aged $65$ years and older. Participants agree to annual medical and psychological evaluation and pledge their brain for donation. MAP is a longitudinal study that enrolls participants encompassing a wider community from $40$ continuous care retirement facilities around the Chicago metropolitan area. Participants are without known dementia and agree to annual clinical evaluation and donation of brain, spinal cord and muscle after death. While entering the study, participants share demographic information (e.g. age, sex) and also provide their blood samples for genotyping. At each annual visit, their medical information is updated and they take a series of cognitive tests, which generate multiple measurements over time. This results in $46$ different variables, grouped into $43$ feature groups to account for one-hot encodings. The task is to predict dementia onset within the next three years given the current medical information and no prior history of dementia. In total, the data contains $3{,}194$ individuals with between $1$ and $23$ annual visits. Following the preprocessing steps used in \citep{beebe2021efficient}, we applied a four-year sliding window over each sample, thereby generating multiple samples per participant. Each sample is split into an input window consisting of the current year visit $t$ and a prediction window of the next three years ($t+1, t+2, t+3$). To avoid overlap between the training, validation, or testing sets, we ensured that all samples from a single individual fell into only one of the data splits. Feature acquisition costs expressed in terms of time taken were borrowed from \citep{beebe2021efficient} for the cognitive tests and rough estimates were assigned to the remaining features using prior knowledge. We discarded the genotypic feature (APOE e4 allele) from the feature set since it is highly predictive of dementia and it is difficult to assign an appropriate cost.
% given that genetic testing is needed for every individual.
The dataset can be accessed at \url{https://dss.niagads.org/cohorts/religious-orders-study-memory-and-aging-project-rosmap/}.

\textbf{Imagenette and ImageNet-100.} These are both subsets of the standard ImageNet dataset \citep{deng2009imagenet}. Imagenette contains $10$ classes and was downloaded using the Fast.ai deep learning library \citep{Imagenette}, ImageNet-100 contains $100$ classes and was downloaded from Kaggle \citep{Imagenet100}, and in both cases we split the images to obtain train, validation and test splits. Images were resized to $224 \times 224$ resolution for both architectures we explored, ResNets \citep{he2016deep} and ViTs \citep{dosovitskiy2020image}.

\textbf{MHIST.} The MHIST (\textbf{m}inimalist \textbf{hist}opathology) \citep{wei2021petri} dataset comprises
% is an image classification dataset comprising
$3{,}152$ hematoxylin and eosin (H\&E)-stained Formalin Fixed Paraffin-Embedded (FFPE) fixed-size images of colorectal polyps from patients at Dartmouth-Hitchcock Medical Center (DHMC). The task is to perform binary classification between hyperplastic polyps (HPs) and sessile serrated adenomas (SSAs), which is a challenging prediction task with significant variation in
inter-pathologist agreement \citep{abdeljawad2015sessile, farris2008sessile, glatz2007multinational, khalid2009reinterpretation, wong2009observer}. HPs are typically benign, while SSAs are precancerous lesions that can turn into cancer if not treated promptly. The fixed-size images were obtained by scanning $328$ whole-slide images and then extracting regions of size $224 \times 224$ representing diagnostically-relevant regions of interest for HPs or SSAs.   For the ground truth, each image was assigned a gold-standard label determined by the majority vote of seven board-certified gastrointestinal pathologists at the DHMC. The dataset can be accessed by filling out the form at \url{https://bmirds.github.io/MHIST/}.

\clearpage
\section{Models} \label{app:models}

Here, we briefly describe the types of models used for each dataset. The exploration probability $\epsilon$ for all models is set to $0.05$ at the start with an annealing rate of $0.2$.

\textbf{Tabular datasets.} For all the tabular datasets, we use multilayer perceptrons (MLPs) with two hidden layers and ReLU non-linearity. We use $128$ neurons in the hidden layers for the ROSMAP and Intubation datasets, and $512$ neurons for MNIST. The initial learning rate is set to $10^{-3}$ at the start and we also use dropout with probability $0.3$ in all layers to reduce overfitting \citep{srivastava2014dropout}.
% , with a dropout probability of $0.3$.
The value and predictor networks use separate but identical network architectures.
% Both the value network and the predictor use different instantiations of the same network architecture.
% The value network ends with a sigmoid operation to scale the network output.
The networks are trained on a NVIDIA RTX 2080 Ti GPU with 12GB of memory.

\textbf{Image datasets: ResNet.} We use a shared ResNet-50 backbone \citep{he2016deep} for the predictor and value networks. The final representation from the backbone has shape $7 \times 7$, and the output heads for each network are specified as follows. The predictor head contains a Conv $\rightarrow$ Batch Norm $\rightarrow$ ReLU sequence followed by global average pooling and a fully connected layer. The value network head consists of an upsampling block with a transposed convolutional layer,
% custom residual block for upsampling, with a transposed convloutional layer 
followed by a $1 \times 1$ convolution and a sigmoid to scale the predictions (see \Cref{app:pseudocode}).
% and a sigmoid operation to scale network output.
The learning rate starts at $10^{-5}$, and the networks are trained on a NVIDIA RTX 2080 Ti GPU with 12GB of memory.

\textbf{Image datasets: ViT.} We use a shared ViT backbone (\texttt{vit\_small\_patch16\_224}) \citep{dosovitskiy2020image} for the predictor and value networks. We use ViT and ResNet backbones having a similar number of parameters for a fair comparison: ResNet-50 has approximately 23M parameters, and ViT-Small has 22M parameters.  The predictor head contains a linear layer applied to the class token, and
% of the backbone output. The
the value network head contains a linear layer applied to all tokens
% of the backbone output
except for the class token, followed by a sigmoid function. When incorporating prior information, a separate ViT backbone is used for both the predictor and value networks to generate an
% get the corresponding
embedding, which is then concatenated with the masked image embedding to get either the predicted CMIs or the class prediction. The learning rate starts at $10^{-5}$, and the networks are trained on a NVIDIA Quadro RTX 6000 GPU with 24GB of memory.

\section{Baselines} \label{app:baselines}

Here, we provide more details here on our baseline methods.

\textbf{Concrete autoencoder.} This is a static feature selection method that optimizes a differentiable selection module within a neural network \citep{balin2019concrete}. The layer can be added at the input of any architecture, so we use this method for both tabular and image datasets. The original work suggested training with an exponentially decayed temperature and a hand-tuned number of epochs, but we use a different approach to minimize the tuning required for each dataset: we train with a sequence of temperature values, and we perform early stopping for each one based on the validation loss. We return the features that are selected after training with the lowest temperature, and we evaluate them by training a model from scratch with only those features provided.

\revisioniclr{\textbf{CMICOT}. This is a static feature selection method that scores features based on their mutual information with the response variable.
% conditional mutual information (MI). 
To identify joint interactions between several features, the authors build a scoring function using a min-max optimization objective \citep{shishkin2016efficient}. To make the method practically feasible, binary representations of features are used. To adapt this method to our setting, for each feature budget, we use CMICOT to select the best subset and then fit a classifier on the selected features to get the final performance. We ensure that the classifier used has the same architecture as the one used for DIME.}

\revisioniclr{\textbf{mRMR}. This is a static feature selection method that identifies a subset of features from a larger set that maximizes the relevance to the target variable while minimizing redundancy among selected features \citep{peng2005feature}. In other words, it aims to find a set of features that are individually informative for predicting the target variable and, at the same time, not highly correlated with each other. Similar to CMICOT, we fit a separate classifier on the subset identified for each feature budget.}

\textbf{EDDI.} This is a DFS method that uses a generative model to sample the unobserved features \citep{ma2019eddi}. We implement a PVAE to sample the unknown features, and these samples are used to estimate the CMI for candidate features at each selection step. We separately implement a classifier that makes predictions with arbitrary feature sets, similar to the one obtained after masked pre-training in \Cref{alg:training}. We use this method only for our tabular datasets, as the PVAE is not expected to work well for images, and the computational cost at inference time is relatively high due to its iteration across candidate features.

\textbf{Probabilistic hard attention.} This method extends EDDI to work for images by imputing unobserved features within a low-dimensional, learned feature space \citep{rangrej2021probabilistic}. To ensure that the method operates on the same image regions as DIME, we implemented a feature extractor that separately computes embeddings for non-overlapping $14 \times 14$ patches, similar to a ViT \citep{dosovitskiy2020image} or bag-of-features model \citep{brendel2019approximating}. Specifically, our extractor consists of a $16 \times 16$ convolutional layer, followed by a series of $1 \times 1$ convolutions. The features from each patch are aggregated by a recurrent module, and we retain the same structure used in the original implementation.
% proposed in the original work.

\textbf{Argmax Direct.} This is a DFS method that directly estimates the feature index with maximum CMI. It is based on two concurrent works whose main difference is how gradients are calculated for the selector network \citep{chattopadhyay2023variational, covert2023learning}; for simplicity, we only use the technique based on the Concrete distribution from \cite{covert2023learning}. As a discriminative method, this baseline allows us to use arbitrary architectures and is straightforward to apply with either tabular or image datasets.

\revision{\textbf{Classification with costly features (CwCF).} This is an RL-based approach that converts the DFS problem into a MDP, considering the expected utility of acquiring a feature given its cost and impact on the classification accuracy \citep{janisch2019classification}. A variant of deep Q-learning is implemented as the RL solver, and the method is adapted to work with sparse training datasets with missing features. We used the budget-constrained implementation, we added support for feature grouping, and we use the same architectures as DIME for the Q-network. We train and evaluate separate models for each target budget. We limit comparison to tabular datasets because the method does not scale well and high performance is not expected for images. \revisioniclr{CwCF initially suffered in our evaluation with the medical diagnosis datasets because we use AUROC to measure performance: CwCF produces hard classifications rather than predicted probabilities, so it can suffer on a ranking-based metrics like AUROC that are more meaningful for clinical prediction tasks.
% We see that CwCF is more competitive on the MNIST dataset that uses accuracy as the evaluation metric.
To account for this discrepancy, we followed an approach from \cite{erion2022coai} and used the Q-values for classes as proxies for pseudo predicted probabilities, since the Q-values for a ``class'' action can be interpreted as a score of how confident the model is in predicting that class. This resulted in improved performance for the ROSMAP and the Intubation datasets. However, we still observe that CwCF underperforms compared to the other methods.}}

\revision{\textbf{Opportunistic learning (OL).} This is another RL-based approach to solve DFS \citep{kachuee2018opportunistic}.  The model consists of two networks: a Q-network that estimates the value associated with each action, where actions correspond to features, and a P-network responsible for making predictions. These are similar to the value and predictors network used in DIME, so we use the same architectures as our approach, and OL shares parameters between the P- and Q-networks.
We use the implementation from \citep{covert2023learning}, which modifies the method by preventing the prediction action until the pre-specified budget is met, and supports pre-defined feature groups. We limit comparison to tabular datasets since the method does not scale well and high performance is not expected for images.
% To provide a consistent comparison with the remaining methods, we made several modifications to the authors’ implementation, similar to those in \cite{covert2023learning}. These include (1)~preventing the prediction action until the pre-specified budget is met, (2)~setting all feature costs to be identical, and (3)~supporting pre-defined feature groups.
}

\section{Additional results} \label{app:results}

Here, we present the results of several additional experiments and ablations on the different datasets.

\Cref{fig:calibration} shows the prediction calibration of the predictor network by plotting DIME's performance at different levels of confidence for a specific budget of $k=15$, along with the density of the samples at those confidence levels across multiple datasets. This shows that the predictor network is well calibrated and does not systematically overestimate or underestimate its predicted probabilities. Proper calibration is important to achieve accurate loss values, because these are then used to train the value network (see \cref{eq:obj-value}).

\Cref{fig:cmi-calibration} shows the calibration of the predicted CMIs by the value network for both a tabular and an image dataset by plotting the difference in entropy or losses against the predicted CMI. A linear trend showcases that the CMIs predicted by the value network align well with the difference in either the entropy or the loss.
% , which is its optimization objective. 
Since we do not have ground truth CMI values to evaluate the accuracy of our value network, this serves a viable alternative for real-world datasets, and we can verify that the CMI predictions correctly represent the expected reduction in either loss or prediction entropy.

\Cref{fig:predicted_cmi} shows the effect of constraining the predicted CMIs using the current prediction entropy, as described in \Cref{app:pseudocode}. Without the constraint, there are some samples with unrealistically high CMI estimates that are greater than the prediction entropy. After applying the sigmoid activation on the value network predictions, this issue is corrected.

\Cref{fig:multiple_trials_penalized} shows multiple trials for the penalized policy on the tabular datasets. This provides a simple way to represent variability between trials when we cannot precisely control the budget between separately trained policies.
Similarly, \Cref{fig:rosmap_costs} and \Cref{fig:intub_feature_costs} show multiple trials while considering non-uniform feature costs, and \Cref{fig:multiple_trials_image} shows multiple trials for the image datasets. The relative results stay the same across all trials, as the performance variability between trials is generally small.

\Cref{fig:confidence-hist} shows the confidence distribution of full input predictions in the tabular datasets. Across all datasets, we observe that the model has high confidence in many of the samples, but there are some that remain uncertain even after observing all the features. This provides motivation for using the penalized approach, because a confidence-constrained approach could suffer here by expending the entire feature acquisition budget only to remain at high uncertainty.

\Cref{fig:prior-ablation} shows the effectiveness of detaching gradients for the predictor network of the sketch $\rvz$ in the histopathology dataset, as described in \Cref{app:pseudocode}. The penalized policy performs significantly better when we propagate gradients only for CMI predictions, which we attribute to reduced overfitting to the prior information.
% after stopping the gradient flow.

\Cref{fig:confidence} compares the penalized stopping criterion with the confidence- and budget-constrained versions for the image datasets. Similarly, \Cref{fig:multiple_trials_penalized,fig:multiple_trials_image} include the budget-constrained approach in comparisons with the baselines. The penalized stopping criterion introduced in \Cref{sec:budget} consistently achieves the best results.

\Cref{tab:training_times} compares the training times (in hours) of DIME with the baselines. It is difficult to compare exact training times across methods because each has hyperparameters that can be tuned to make it converge faster, but we compared them under the hyperparameters used to generate our results. We observe that DIME trains slightly faster than Argmax Direct for both the tabular and image datasets. Compared to EDDI and Hard Attention, we see that these generative methods are actually somewhat faster to train; however, because they must generate a large number of samples at inference time and iterate over candidate features, their evaluation time is far slower. The total training and inference time for EDDI is 48.26 hours with MNIST, and for Hard Attn is 25.45 hours on Imagenette, whereas the evaluation time for DIME is negligible compared to its training. As expected, the RL method CwCF is very slow to run on MNIST, taking almost 29 hours to train models for all target budgets.
 
Finally, \Cref{fig:mnist_distinct} shows example feature selections for the MNIST dataset, and \Cref{fig:rosmap_distinct} shows feature selection frequencies for ROSMAP, to verify the distinct selections made across predictions. This capability differs from static selection methods like the CAE \citep{balin2019concrete}, which select the same features for all predictions.

\begin{figure}
\centering
\vspace{-2pt}
\includegraphics[width=0.7\textwidth]{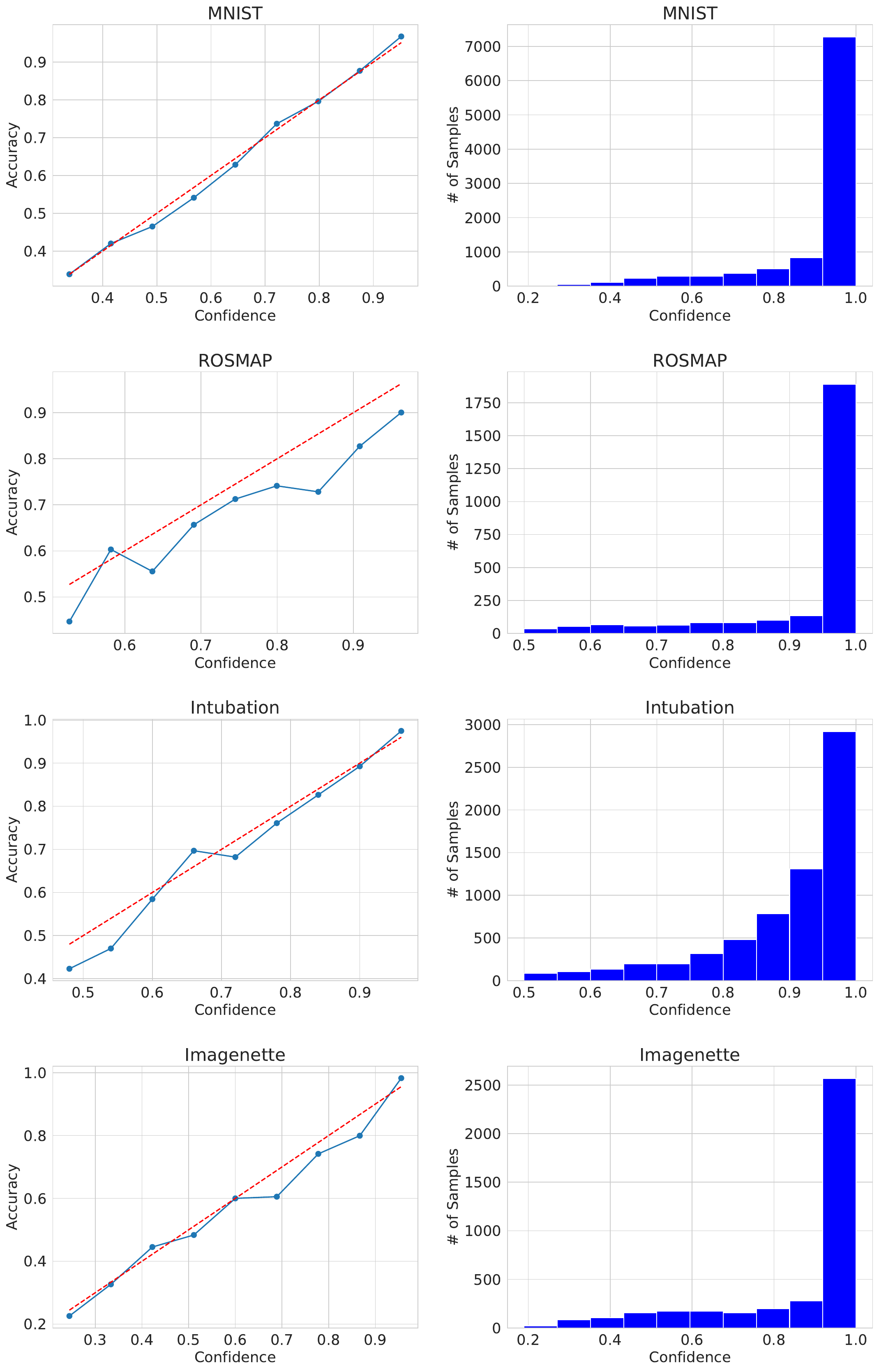}
\caption{
Evaluation of prediction calibration for a fixed budget of $k=15$. The left column shows the prediction calibration of the predictor network by plotting the accuracy for different confidence levels. The right column shows the distribution of confidences across all samples.
} \label{fig:calibration}
\end{figure}

\begin{figure}[ht]
\centering
\vspace{-2pt}
\includegraphics[width=1\textwidth]{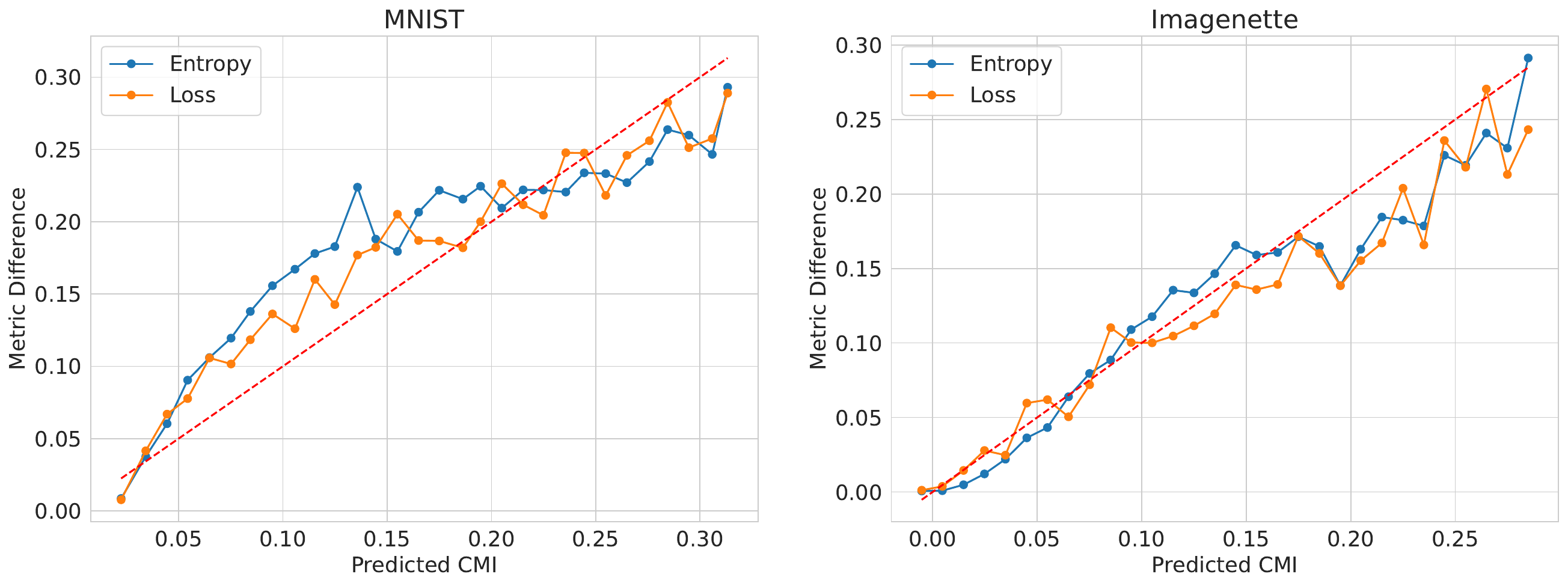}
\caption{
Evaluation of CMI calibration. The x-axis shows different values for the predicted CMI throughout the selection process, and the y-axis shows the reduction in either loss or entropy after the corresponding feature is selected.
% The y-axis shows either the entropy or loss difference before and after a feature is selected, and the x-axis shows the corresponding predicted CMI of that feature.
} \label{fig:cmi-calibration}
\end{figure}

\begin{figure}[ht]
\centering
\vspace{-2pt}
\includegraphics[width=1\textwidth]{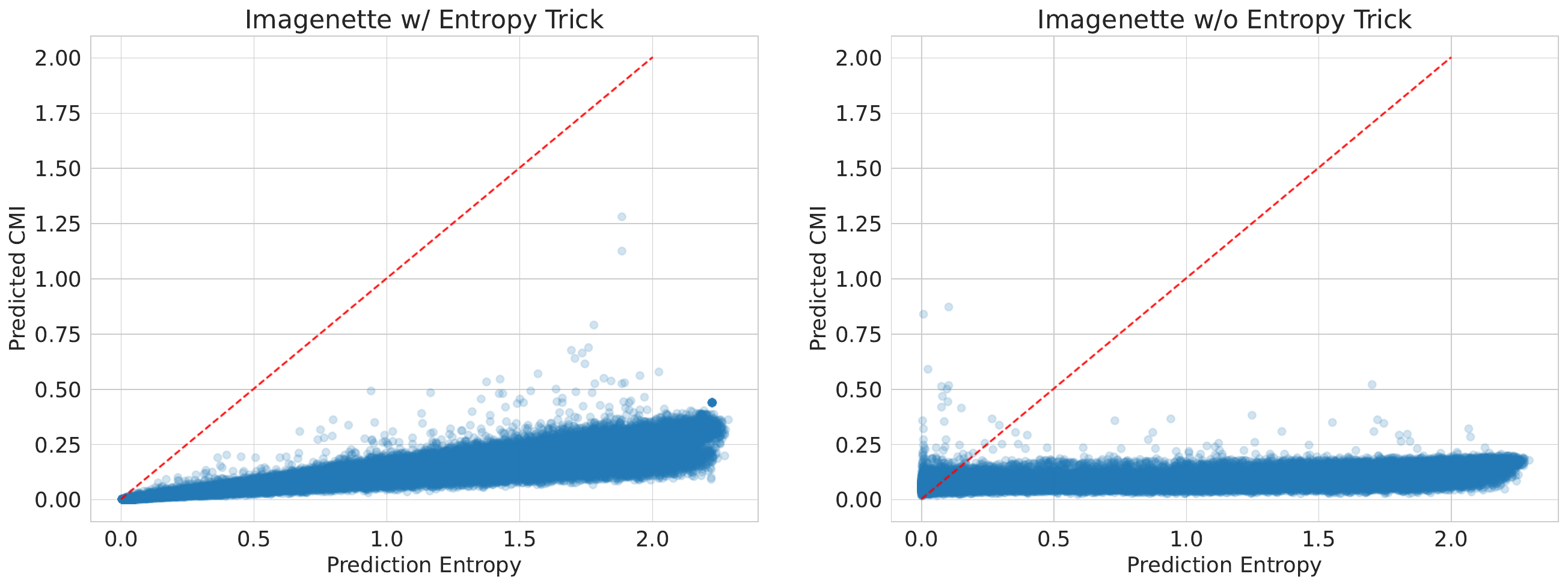}
\caption{
Predicted CMIs with and without the entropy trick to scale value network outputs.
} \label{fig:predicted_cmi}
\end{figure}
\begin{figure}[ht]
\centering
\vspace{-2pt}
\includegraphics[width=1\textwidth]{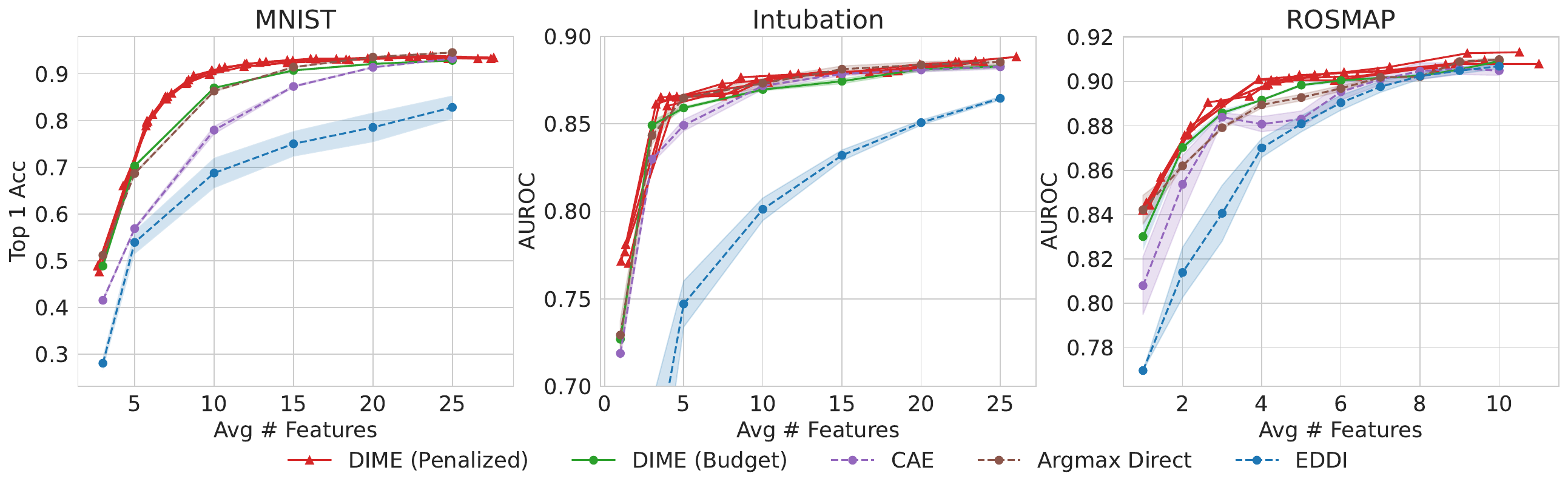}
\caption{
Multiple trials using the penalized policy and the budget constraint for tabular datasets. DIME with penalized policy remains the best method across five independent trials.
} \label{fig:multiple_trials_penalized}
\end{figure}

\begin{figure}[ht]
\centering
\vspace{-2pt}
\includegraphics[width=\textwidth]{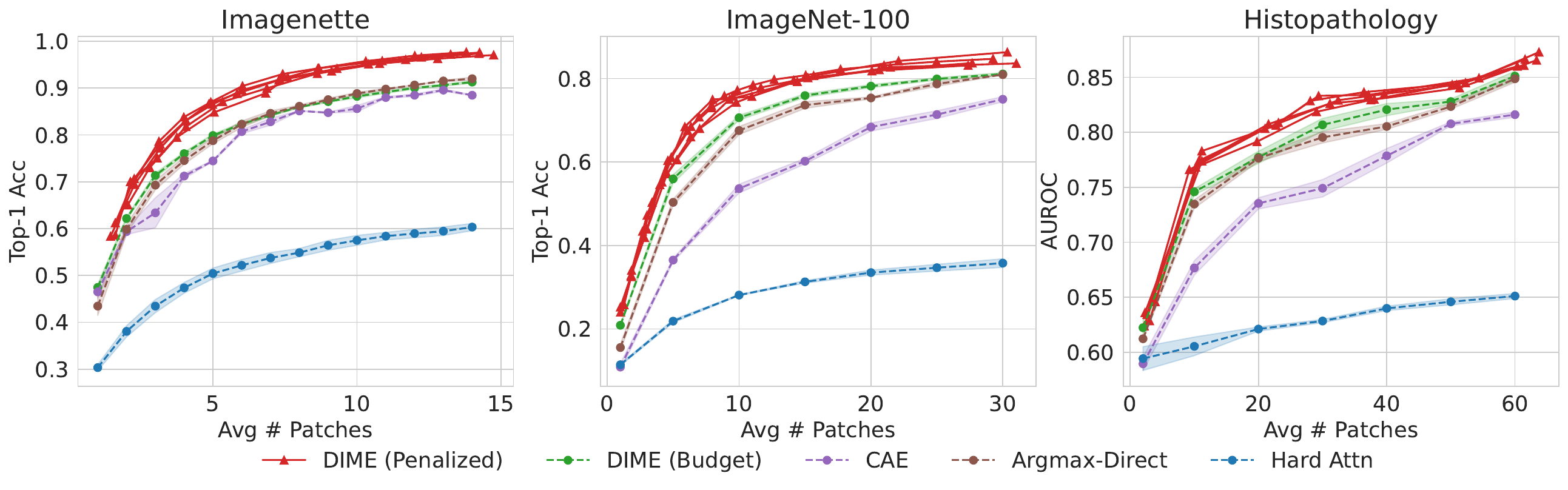}
\caption{
Multiple trials using DIME's penalized policy and the budget constraint
% and the budget constraint
for image datasets.
% for varying numbers of average patches selected.
DIME with penalized policy remains the best method across five independent trials.
% Left: Imagenette. Center: ImageNet-100. Right: Histopathology
} \label{fig:multiple_trials_image}
\end{figure}
\vspace{-2pt}

\begin{figure}[ht]
\centering
\vspace{-2pt}
\includegraphics[width=1\textwidth]{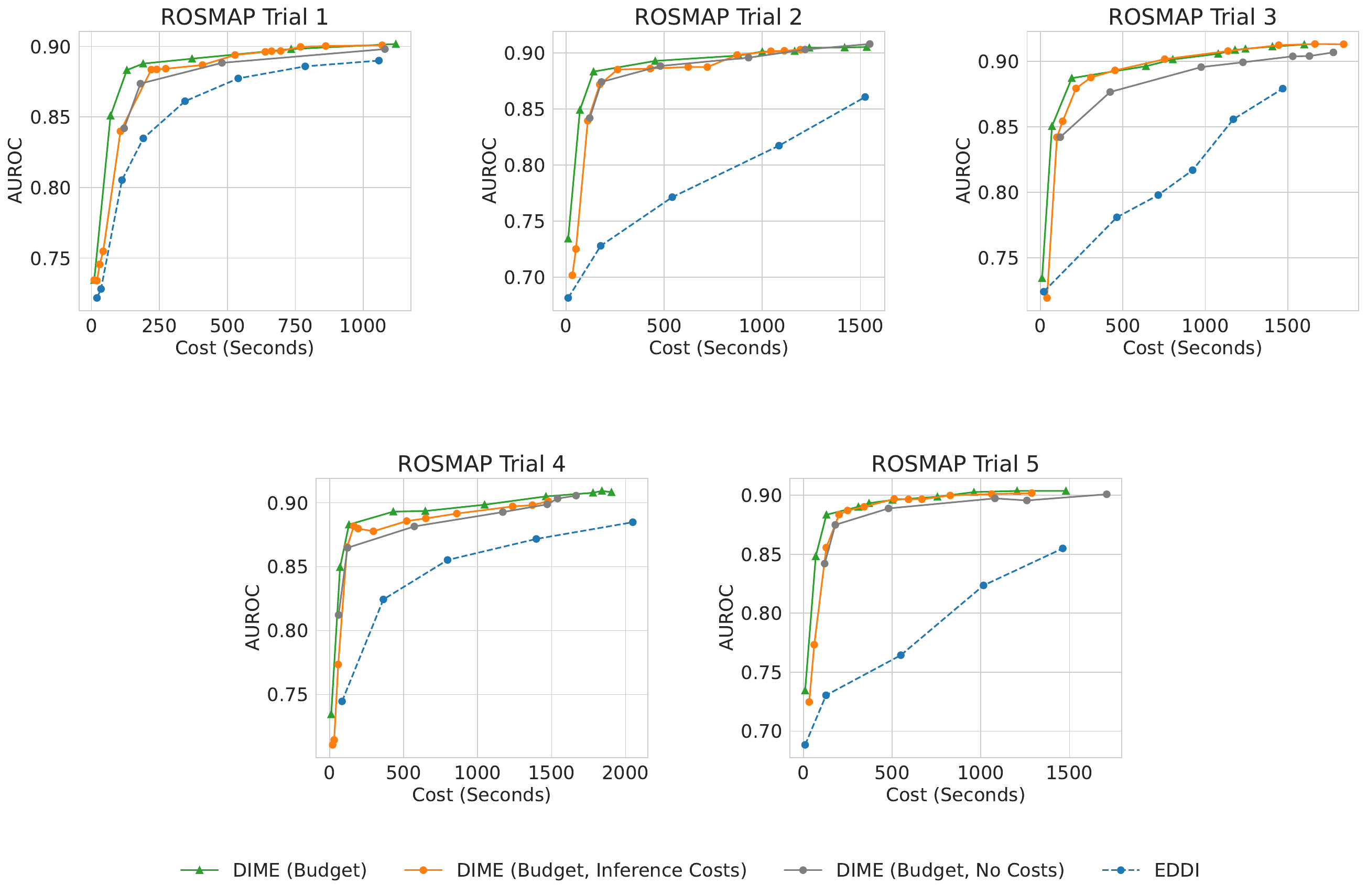}
\caption{
Multiple trials when using non-uniform feature costs for the ROSMAP dataset.
} \label{fig:rosmap_costs}
\end{figure}
\begin{figure}[ht]
\centering
\vspace{-2pt}
\includegraphics[width=1\textwidth]{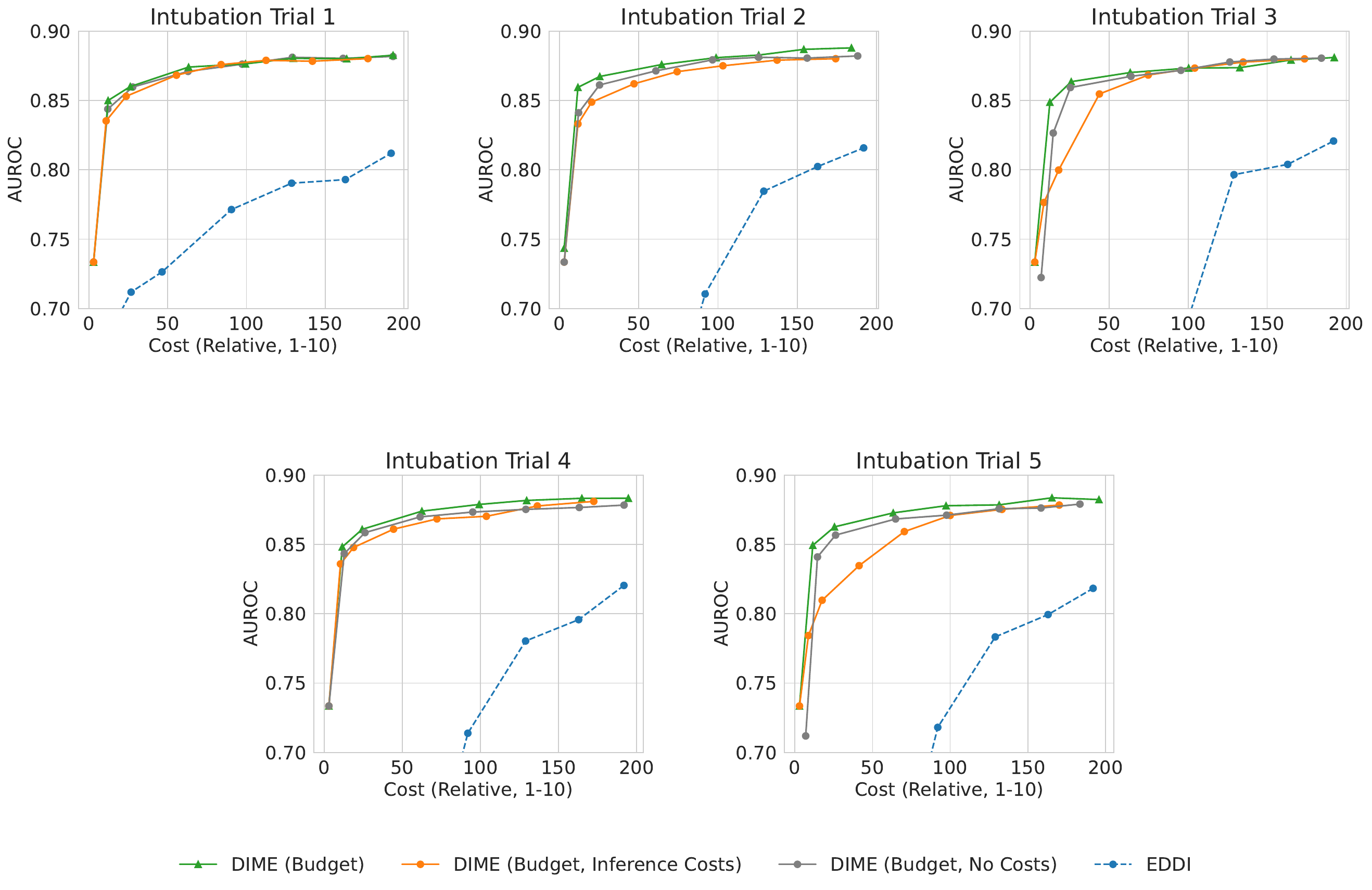}
\caption{
Multiple trials when using non-uniform feature costs for the intubation dataset.
} \label{fig:intub_feature_costs}
\end{figure}

\begin{figure}
\centering
\vspace{-2pt}
\includegraphics[width=1\textwidth]{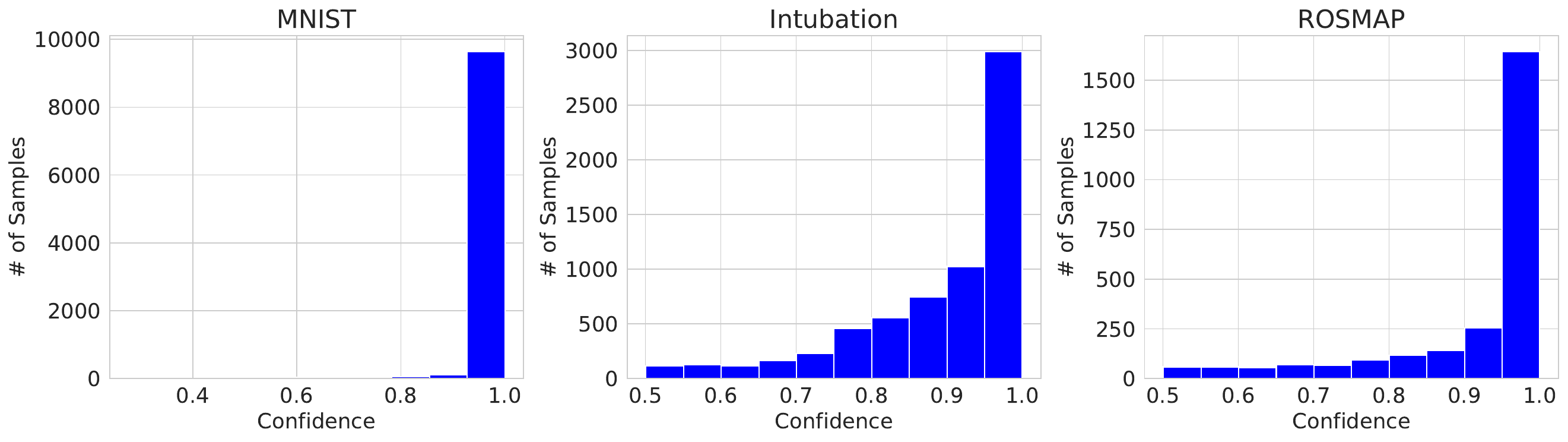}
\caption{
Confidence distribution on full-input predictions across the tabular datasets.
} \label{fig:confidence-hist}
\end{figure}

\begin{figure}
\centering
\vspace{-2pt}
\includegraphics[width=0.5\textwidth]{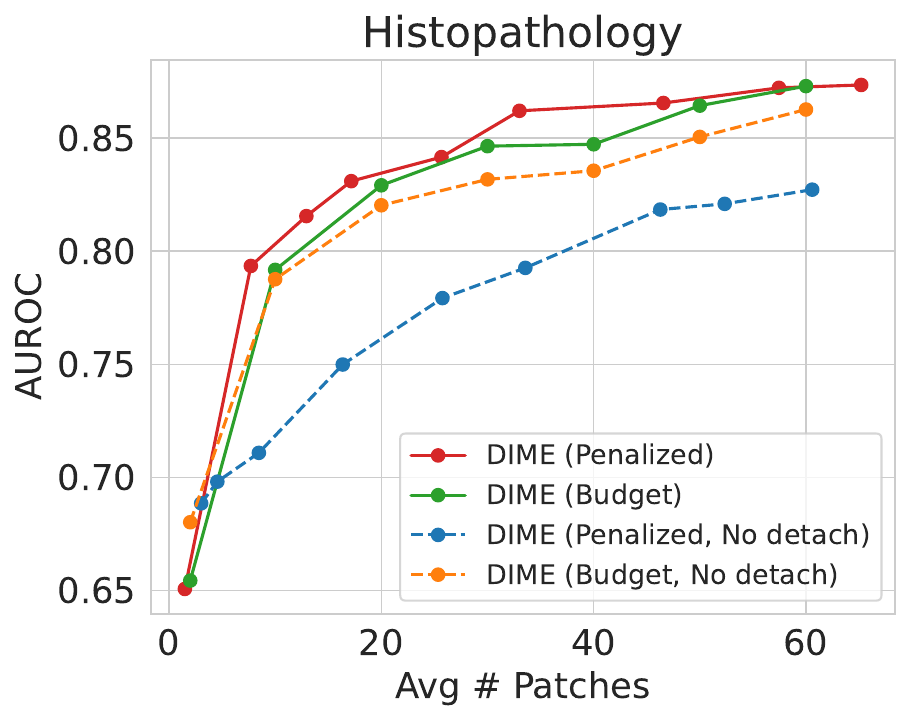}
\caption{
Ablation of stop-gradients trick when using prior information for the histopathology dataset (\Cref{app:pseudocode}).
} \label{fig:prior-ablation}
\end{figure}

\begin{figure}
\centering
\vspace{-2pt}
\includegraphics[width=1\textwidth]{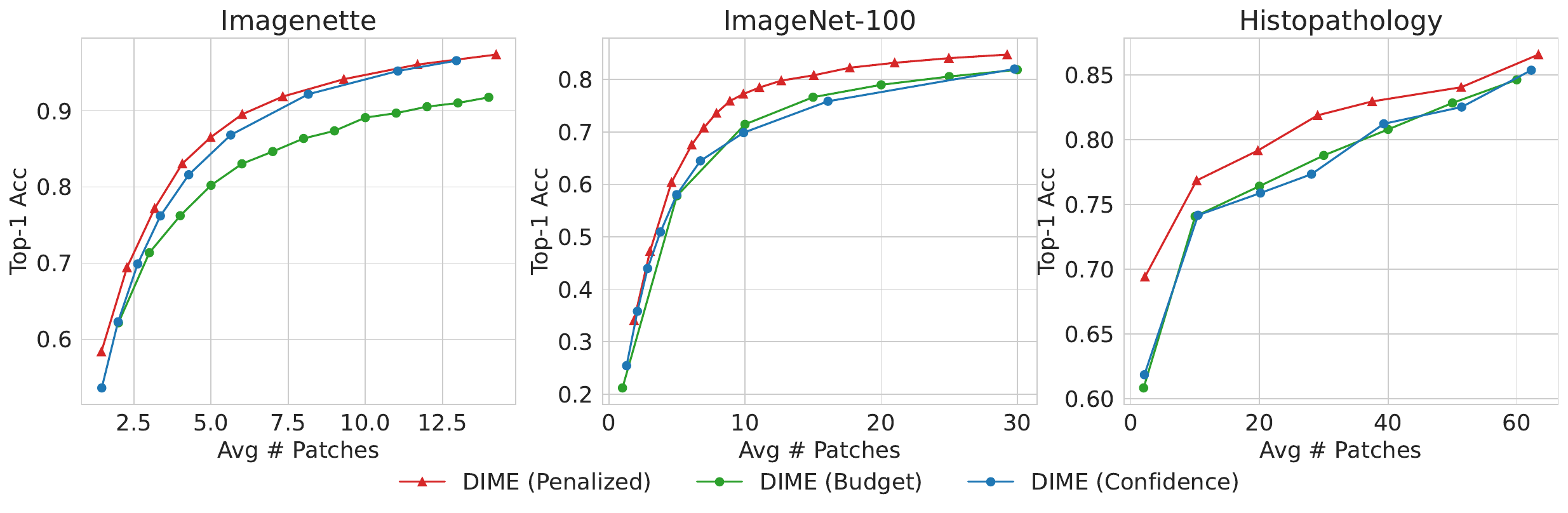}
\caption{
Comparison of the budget-constrained, confidence-constrained and penalized stopping criteria.
} \label{fig:confidence}
\end{figure}

\begin{figure}
\centering
% \vspace{-2pt}
\includegraphics[width=1\textwidth]{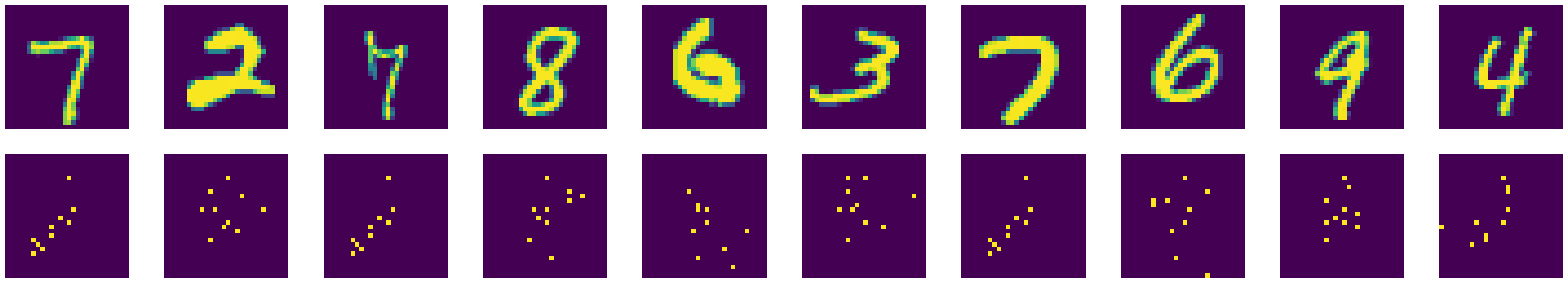}
\caption{
Examples of MNIST feature selections across several samples, with budget $k = 10$.
} \label{fig:mnist_distinct}
\end{figure}

\begin{figure}[!t]
\centering
\includegraphics[width=0.5\textwidth]{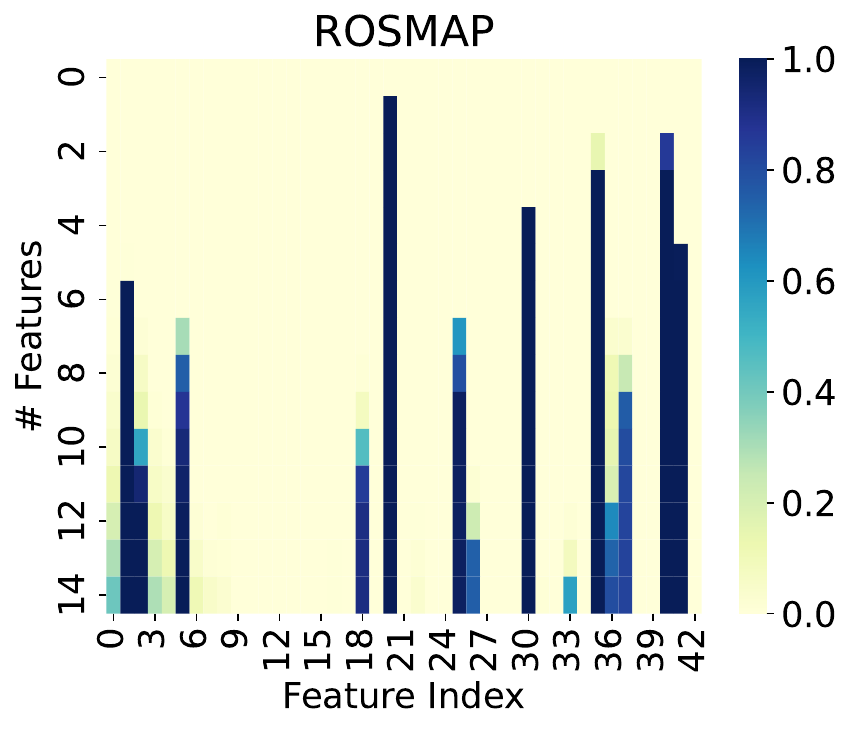}
\caption{
Feature selection frequency for ROSMAP. Each entry shows the fraction of samples in which a feature is selected when we use the budget-constrained stopping criterion for the specified number of features.
} \label{fig:rosmap_distinct}
\end{figure}

\begin{table}[h]
    \caption{Training times for each method (in hours).} \label{tab:training_times}
    % \vskip 0.1cm
    \begin{center}
    \begin{tabular}
    {lcc}
    \toprule
    Method & \ MNIST & \ Imagenette  \\
    \midrule
    DIME & 4.88 & 27.14   \\
    Argmax Direct & 6.02 & 43.67   \\
    EDDI & 0.39 & - \\
    Hard Attn & - & 18.25\\
    CwCF & 28.91 & - \\
    \bottomrule
    \end{tabular}
    \end{center}
\end{table}

\begin{figure}[t]
\centering
% \vspace{-0.5cm}
\includegraphics[width=1\textwidth]{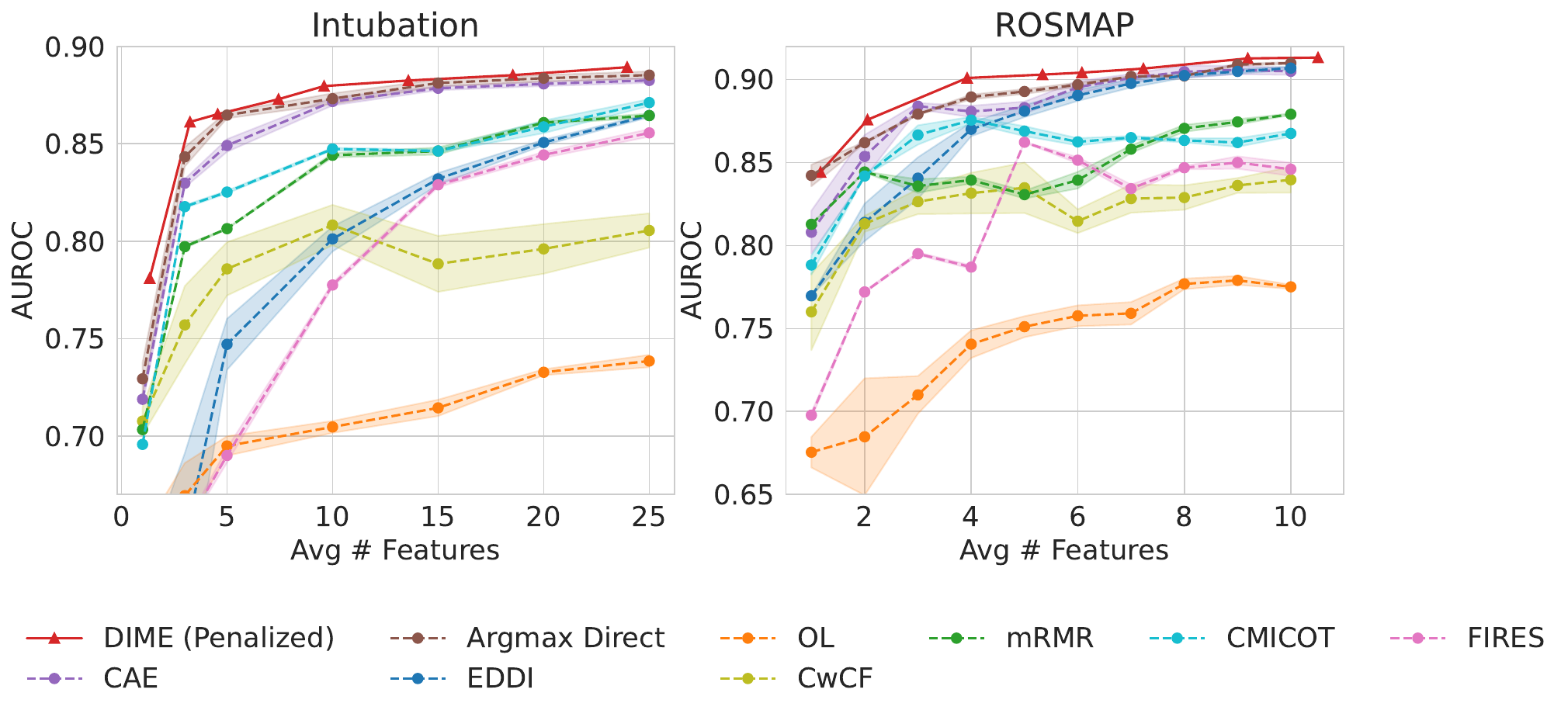}
\vspace{-0.6cm}
\caption{
Evaluation with tabular datasets for varying feature acquisition budgets.
% number of average features selected. Left: MNIST. Center: Intubation. Right: ROSMAP. 
Results are averaged across 5 trials, and shaded regions indicate the standard error for each method.
} \label{fig:tabular-extended}
\vspace{-0.5cm}
\end{figure}

\end{document}